\documentclass{article}

\PassOptionsToPackage{numbers,sort&compress}{natbib}

\usepackage{amsthm}
\usepackage{graphicx} 

\usepackage[utf8]{inputenc} 
\usepackage[T1]{fontenc}    
\usepackage{hyperref}       
\usepackage{url}            
\usepackage{booktabs}       
\usepackage{amsfonts}       
\usepackage{nicefrac}       
\usepackage{microtype}      
\usepackage{xcolor}     
\usepackage{mathtools}

\newtheorem{theorem}{Theorem}
\newtheorem{proposition}{Proposition}
\newtheorem{lemma}{Lemma}
\newtheorem{corollary}{Corollary}
\newtheorem{definition}{Definition}

\usepackage[preprint]{neurips_2026}

\usepackage[utf8]{inputenc} 
\usepackage[T1]{fontenc}    
\usepackage{hyperref}       
\usepackage{url}            
\usepackage{booktabs}       
\usepackage{amsfonts}       
\usepackage{nicefrac}       
\usepackage{microtype}      
\usepackage{xcolor}         
\usepackage{amsmath,amssymb,amsthm}
\usepackage{enumitem}

\title{Finite-Lag Operator Geometry of Recurrent Representations}

\author{%
    Kanishka Reddy \\
    Department of Applied Mathematics \\
    University of Washington \\
    Seattle, WA 98105 \\
    kani@uw.edu \\
}

\begin{document}

\maketitle

\begin{abstract}
Recurrent representations are trajectories, but representation geometry is
often measured from static snapshots. We develop finite-lag operator geometry
for recurrent hidden states from observed source--successor pairs
\((X_t,X_{t+\Delta})\). The primitive is the conditional transport law
\(Q_\Delta(dy\mid x)\), estimated by a dense Gaussian source-smoothing
operator. From this directed finite-lag law we derive a source-centered
transport tensor \(G_\Delta\), which decomposes exactly into conditional
spread and coherent displacement, and an antisymmetric coordinate circulation
\(\mathcal W_\Delta^\rho\), which summarizes directed lagged flow. We prove
affine covariance with explicit metric dependence of scalar summaries, dense
estimator stability on bounded trajectory clouds, and a finite-lag separation
result showing that source-centered transport detects deterministic recurrent
motion not recorded by infinitesimal carr\'e-du-champ geometry. A
linear-Gaussian closed form calibrates the quantities in terms of the update
\(A_\Delta\), source covariance, and innovation covariance. Controlled
experiments validate the decomposition, circulation, covariance, and
stability predictions. In performance matched repeat-copy networks, the
framework reveals architecture dependent differences in total transport
scale and coherent displacement trace, while coherent displacement fraction is
metric and resolution dependent.
\end{abstract}

\section{Introduction}
\label{sec:introduction}

Neural representations are often analyzed as point clouds, with geometry
recovered through similarity measures, neighborhood graphs, spectral
summaries, or diffusion operators
\cite{raghu2017svcca,kornblith2019similarity,ansuini2019intrinsic,
cohen2020separability,papyan2020neuralcollapse,
liao2023diffusionspectralentropy,abel2024manifoldnetworks,
belkin2003laplacian,coifman2006diffusion,reddy2026diffusionoperatorgeometry}. In static feedforward settings, an
operator-first view replaces hard graph constructions with a smooth Markov
operator on the feature cloud and derives geometric observables from that
operator. This is natural when a layer is a snapshot. Recurrent
representations are different, since a hidden state \(h_t\) is meaningful not only
for where it lies, but for where the recurrent computation sends it.

The natural primitive is therefore a finite-lag law on observed
source--successor pairs, not a symmetric kernel on one cloud. We define
\[
Q_\Delta(dy\mid x)
=
\mathcal L(X_{t+\Delta}\in dy\mid X_t=x),
\]
where the law is induced jointly by the recurrent update and the
sequence/input distribution. The empirical estimator smooths over nearby
source states and transports to their attached successors. This gives a
directed finite-lag operator even when the hidden state alone is not an
autonomous Markov state.

The central symmetric observable is the source-centered transport tensor
\[
G_\Delta(x)
=
\frac{1}{2\tau}
\int (y-x)(y-x)^\top\,Q_\Delta(dy\mid x),
\qquad
\tau=\Delta\,dt .
\]
Source-centering is the key choice. If
\[
m_\Delta(x)=\tau^{-1}\mathbb E[Y-X\mid X=x],
\qquad
C_\Delta(x)=\tau^{-1}\operatorname{Cov}(Y-X\mid X=x),
\]
then
\[
2G_\Delta(x)
=
C_\Delta(x)
+
\tau\,m_\Delta(x)m_\Delta(x)^\top .
\]
Thus finite-lag transport decomposes exactly into conditional spread and
coherent displacement. Centering at the conditional mean would remove the
second term, while source-centering keeps deterministic recurrent motion as part of
the geometry. We also define an antisymmetric coordinate circulation
\(\mathcal W_\Delta^\rho\), a lagged source--successor cross-moment that
summarizes directed flow in representation coordinates.

These objects are related to transfer-operator, Koopman, TICA, and VAMP
methods
\cite{schmid2010dynamic,williams2015edmd,perezhernandez2013tica,
schwantes2013tica,wu2020vamp,mardt2018vampnets}, but the use is different.
Those methods estimate predictive operators or extract slow modes. Here the
finite-lag law is used geometrically, with its second moments describing transport
scale, conditional spread, coherent displacement, and directed circulation in
hidden-state space.

\paragraph{Contributions.}
We make four contributions.
\begin{enumerate}
\item We define finite-lag operator geometry for recurrent representations
through the conditional transport law \(Q_\Delta\), the source-centered
transport tensor \(G_\Delta\), its exact spread/displacement decomposition,
and the coordinate circulation \(\mathcal W_\Delta^\rho\).

\item We prove structural results: affine covariance with metric-dependent
scalar summaries, Lipschitz stability of the dense Gaussian estimator on
bounded trajectory clouds, and a finite-lag separation theorem showing that
deterministic recurrent motion can be visible to \(G_\Delta\) even when the
infinitesimal carr\'e du champ of a first-order deterministic generator is
zero.

\item We derive a linear-Gaussian closed form for \(\bar G_\Delta\) and
\(\mathcal W_\Delta\), giving an analytic calibration parallel to the
Gaussian bridge in static feedforward operator geometry.

\item We validate the formal quantities on controlled systems and illustrate
them on performance-matched repeat-copy networks with capacity and
memory-horizon controls.
\end{enumerate}

The framework is finite-lag by construction. It does not require hidden state
alone to be autonomously Markov, and it does not reduce recurrent computation
to a linear-Gaussian model. The linear-Gaussian analysis is a calibration.
The trained-network experiments report the framework's observables under
explicit bandwidth and normalization choices.

\section{Related work}
\label{sec:related_work}

\paragraph{Operator geometry of representations.}
Diffusion geometry and Bakry--\'Emery \(\Gamma\)-calculus build geometric
objects from Markov operators rather than hard neighborhood graphs
\cite{bakry1985,bakry2014analysis,belkin2003laplacian,
coifman2006diffusion,nadler2006diffusion,jones2024diffusion,
jones2024manifold,jones2026computing, liao2023diffusionspectralentropy,abel2024manifoldnetworks}. In feedforward
representations, a fixed-layer feature cloud induces a Gaussian-kernel
diffusion operator whose transport, spectral, label-boundary, and local-scale
observables can be studied directly \cite{reddy2026diffusionoperatorgeometry}. We keep the
operator-first principle but replace the static symmetric diffusion operator
with a directed finite-lag transport law on source--successor hidden-state
pairs.

\paragraph{Transfer operators and lagged representation learning.}
Koopman and Perron--Frobenius methods, dynamic mode decomposition, EDMD,
kernel EDMD, TICA, VAMP, and transfer-operator learning all use lagged
operators to model dynamical data
\cite{schmid2010dynamic,williams2015edmd,williams2016kernel,
klus2018transfer,klus2020rkhs,perezhernandez2013tica,
schwantes2013tica,wu2020vamp,mardt2018vampnets,
froyland2013analytic,froyland2015dynamic,coifman2014changing,marshall2018time}. Their main goals are prediction, coherent
mode discovery, or variational scoring. Our use is geometric, summarizing
the finite-lag conditional law by source-centered symmetric moments and
antisymmetric lagged cross-moments.

\paragraph{Recurrent network analysis.}
RNN dynamics have been studied through fixed points, slow points, local
linearization, low-rank structure, transient dynamics, and gating mechanisms
\cite{sussillo2013opening,sussillo2014neural,
maheswaranathan2019line,maheswaranathan2019universality,
schaeffer2020reverse,smith2021reverse,mastrogiuseppe2018linking,
beiran2021shaping,valente2022extracting,pals2024trained,
liu2025transient,kurtkaya2025dynamical,karuvally2024hidden,
driscoll2024flexible,hochreiter1997long,cho2014learning}. These approaches expose dynamical
skeletons or reduced-order mechanisms, often through explicit modeling
assumptions. Finite-lag operator geometry is complementary, summarizing the
observed hidden-state transport at a chosen lag without fixed-point
discovery, local linearization, or low-rank parametrization.

\paragraph{Stability versus hard graphs.}
Hard \(k\)-nearest-neighbor adjacencies are not Lipschitz functions of the
point cloud. Tied or near-tied neighbor distances can change the selected
adjacency under arbitrarily small perturbations
\cite{calder2022lipschitz,ting2010analysis,reddy2026diffusionoperatorgeometry}. Smooth
Gaussian-kernel operators avoid this discontinuity. We prove the
corresponding dense-estimator stability result for finite-lag recurrent
transport and use the dense estimator as the object matched to the theory.

\section{Finite-lag operator geometry}
\label{sec:finite_lag_geometry}

We formalize recurrent representation geometry through a conditional
finite-lag transport law. Rather than asking only how hidden states are
arranged as a cloud, we ask where the recurrent computation sends them after
a fixed lag \(\Delta\). This yields a source-centered transport tensor for
finite-step motion and an antisymmetric circulation statistic for directed
lagged flow.

Let \(h_t^{(s)}\in\mathbb R^d\) be the hidden state for sequence \(s\) at
time \(t\). Fix a lag \(\Delta\ge1\), set \(\tau=\Delta\,dt\), and form
\[
x_i=h_t^{(s)},\qquad y_i=h_{t+\Delta}^{(s)}
\]
over all indices for which both states are observed.

\subsection{Transport operator}
\label{subsec:transport_operator}

\begin{definition}[Finite-lag transport law]
\label{def:transport}
The finite-lag conditional transport law is
\[
Q_\Delta(dy\mid x)
=
\mathcal L(X_{t+\Delta}\in dy\mid X_t=x),
\]
where the law is taken with respect to the joint sequence-and-input
distribution. The associated operator on bounded observables is
\[
(P_\Delta f)(x)=\int f(y)\,Q_\Delta(dy\mid x).
\]
\end{definition}

We do not assume that hidden state alone is an autonomous Markov state. For
input-driven recurrent networks, \(P_\Delta\) is a conditional transport
operator induced jointly by the recurrence and the input distribution.

We estimate \(Q_\Delta\) by Gaussian smoothing in source space. For bandwidth
\(\varepsilon>0\),
\begin{equation}
\label{eq:empirical_operator}
w_i(x_q)
=
\frac{\exp(-\|x_q-x_i\|^2/4\varepsilon)}
     {\sum_j\exp(-\|x_q-x_j\|^2/4\varepsilon)},
\qquad
\widehat Q_\Delta(dy\mid x_q)
=
\sum_iw_i(x_q)\delta_{y_i}(dy).
\end{equation}
The index \(i\) denotes a neighboring source whose attached successor \(y_i\)
contributes to the conditional law at query source \(x_q\). Thus the dense
estimator smooths in source space but transports to successor coordinates. It
is a source-smoothing operator for successor-valued transport, not a
transition matrix whose column index is itself the successor state.

At finite bandwidth, \(\widehat Q_\Delta\) estimates a resolution-smoothed
conditional law. Consequently, \(\widehat C_\Delta\) contains both genuine
conditional variation and variation induced by smoothing over nearby source
states. This is the resolution at which the empirical transport law is
queried. The bandwidth sweeps in Section~\ref{subsec:exp_recurrent} and
Appendix~\ref{app:resolution_full} report this dependence explicitly. For
controlled linear-Gaussian calibration, where the population conditional
moments are known, we compute the population moments directly.

A \(k\)-nearest-neighbor approximation is used only where stated. Unlike the
dense operator, hard neighbor selection is not Lipschitz in the source cloud
(Appendix~\ref{app:knn}). Euclidean scalar summaries are reported after
center-RMS normalization (Appendix~\ref{app:normalization}).

\subsection{Source-centered transport tensor}
\label{subsec:G_tensor}

\begin{definition}[Source-centered transport tensor]
\label{def:G_delta}
For source \(x\in\mathbb R^d\),
\[
G_\Delta(x)
=
\frac{1}{2\tau}
\int (y-x)(y-x)^\top\,Q_\Delta(dy\mid x),
\qquad
\bar G_\Delta^\rho
=
\int G_\Delta(x)\,\rho(dx).
\]
\end{definition}

\(G_\Delta(x)\) is a positive semidefinite finite-lag quadratic form on
coordinate functions. It is centered at the source state rather than at the
conditional mean successor, so it retains coherent motion at the chosen lag.

Define
\[
m_\Delta(x)=\tau^{-1}\mathbb E[Y-X\mid X=x],
\qquad
C_\Delta(x)=\tau^{-1}\operatorname{Cov}(Y-X\mid X=x).
\]

\begin{proposition}[Source-centered decomposition]
\label{prop:decomposition}
Whenever the conditional second moment exists,
\[
2G_\Delta(x)
=
C_\Delta(x)
+
\tau\,m_\Delta(x)m_\Delta(x)^\top .
\]
\end{proposition}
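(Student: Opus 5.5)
The identity follows from the bias--variance decomposition of the conditional second moment of the displacement \(Y-X\) about the source, applied under \(Q_\Delta(\cdot\mid x)\). Fix \(x\) and write \(D=Y-X\) with \(Y\sim Q_\Delta(\cdot\mid x)\), so that, conditionally on \(X=x\), \(D=Y-x\). The hypothesis that the conditional second moment exists, \(\int\|y\|^2Q_\Delta(dy\mid x)<\infty\), gives \(\int\|y-x\|^2Q_\Delta(dy\mid x)<\infty\). Hence the first moment \(\mu(x)=\mathbb E[D\mid X=x]=\tau m_\Delta(x)\) is finite by Cauchy--Schwarz, and every matrix entry below is finite.

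Next I expand about the conditional mean. Writing \(D=(D-\mu)+\mu\) gives
\[
DD^\top=(D-\mu)(D-\mu)^\top+(D-\mu)\mu^\top+\mu(D-\mu)^\top+\mu\mu^\top .
\]
I then integrate against \(Q_\Delta(dy\mid x)\), which is justified entrywise by integrability. The two cross terms vanish because \(\mu\) is deterministic given \(x\) and \(\mathbb E[D-\mu\mid X=x]=0\). This yields
\[
\int (y-x)(y-x)^\top Q_\Delta(dy\mid x)=\operatorname{Cov}(D\mid X=x)+\mu\mu^\top .
\]
Since \(x\) is fixed under the conditioning, \(\operatorname{Cov}(D\mid X=x)=\operatorname{Cov}(Y-X\mid X=x)=\tau C_\Delta(x)\). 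Also \(\mu\mu^\top=\tau^2 m_\Delta m_\Delta^\top\).

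Finally I multiply by \(1/\tau\) and use Definition~\ref{def:G_delta}, under which the left side becomes \(2G_\Delta(x)\):
\[
2G_\Delta(x)=C_\Delta(x)+\tau\,m_\Delta(x)m_\Delta(x)^\top .
\]

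The only step needing any care is the integrability bookkeeping. Finiteness of the second moment implies finiteness of the first, so both the covariance and the cross-term cancellation are legitimate. Everything else is algebra.

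Note also that the proof uses nothing beyond \(Q_\Delta(\cdot\mid x)\) being a probability measure with finite second moment. The same identity therefore holds verbatim for the empirical law \(\widehat Q_\Delta(\cdot\mid x_q)\), with weighted means and covariances computed under the weights \(w_i(x_q)\).
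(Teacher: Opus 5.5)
Your proof is correct and follows the paper's route: condition on \(X=x\), apply the second-moment identity \(\mathbb E[DD^\top\mid X=x]=\operatorname{Cov}(D\mid X=x)+\mathbb E[D\mid X=x]\mathbb E[D\mid X=x]^\top\) to \(D=Y-X\), and rescale by \(\tau^{-1}\). Your integrability check and your remark that the identity also holds for \(\widehat Q_\Delta(\cdot\mid x_q)\) match, and slightly elaborate, what the paper states.
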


The identity is the second-moment decomposition for \(D=Y-X\). Its role is
conceptual, \(C_\Delta\) measuring conditional spread, while
\(\tau m_\Delta m_\Delta^\top\) measures coherent displacement that would be
removed by conditional-mean centering.

Averaging and taking traces gives
\begin{equation}
\label{eq:trace_decomposition}
2\,\operatorname{tr}(\bar G_\Delta^\rho)
=
\underbrace{\int \operatorname{tr}(C_\Delta(x))\,\rho(dx)}
_{\text{conditional spread trace}}
+
\underbrace{\tau\int \|m_\Delta(x)\|^2\,\rho(dx)}
_{\text{coherent displacement trace}}.
\end{equation}
The coherent displacement fraction \(F_\Delta^\rho\) is the fraction of the
right-hand side accounted for by the second term. For trained input-driven
networks, conditional spread includes variation from input continuations and
finite-resolution source smoothing.

\subsection{Coordinate circulation}
\label{subsec:current_circulation}

The directed part of the finite-lag law is summarized by the antisymmetric
lagged cross-moment
\begin{equation}
\label{eq:circulation}
\mathcal W_\Delta^\rho
=
\tau^{-1}
\left(
\mathbb E_\rho[\widetilde X\,\widetilde Y^\top]
-
\mathbb E_\rho[\widetilde Y\,\widetilde X^\top]
\right),
\end{equation}
where \(X\sim\rho\), \(Y\sim Q_\Delta(\cdot\mid X)\), and
\(\widetilde X,\widetilde Y\) denote centered coordinates in the chosen
coordinate system. In experiments we use the center-RMS normalized coordinates
of Appendix~\ref{app:normalization}. The matrix
\(\mathcal W_\Delta^\rho\) is real and skew-symmetric, so its eigenvalues are
purely imaginary in conjugate pairs. For a stationary reversible Markov law on
a common state space, \(\mathcal W_\Delta^\rho=0\).

We report \(\|\mathcal W_\Delta^\rho\|_F\), the largest imaginary eigenvalue
magnitude \(\omega_{\max}\), and the relative circulation
\[
r_{\rm circ}
=
\frac{\|\mathcal W_\Delta^\rho\|_F}
     {\operatorname{tr}(\bar G_\Delta^\rho)}.
\]
Together, \(G_\Delta\) and \(\mathcal W_\Delta^\rho\) give the symmetric and
antisymmetric second-moment summaries of finite-lag hidden-state transport.

\begin{table}[t]
\caption{Finite-lag observables derived from \(Q_\Delta\). Trace quantities
depend on the chosen metric. Experiments report center-RMS Euclidean
summaries unless stated otherwise.}
\label{tab:observables}
\centering
\small
\begin{tabular}{lll}
\toprule
Quantity & Definition & Interpretation \\
\midrule
\(\operatorname{tr}(\bar G_\Delta^\rho)\)
& source-centered second moment
& total finite-lag transport scale \\
\(\int\operatorname{tr}(C_\Delta)\,d\rho\)
& conditional covariance trace
& spread over successors / smoothing variation \\
\(\tau\int\|m_\Delta\|^2\,d\rho\)
& drift-energy trace
& coherent displacement at lag \(\Delta\) \\
\(F_\Delta^\rho\)
& coherent trace fraction
& attribution of transport to coherent motion \\
\(\|\mathcal W_\Delta^\rho\|_F\)
& antisymmetric lagged cross-moment
& directed coordinate circulation \\
\bottomrule
\end{tabular}
\end{table}
The quantities in Table~\ref{tab:observables} are intended to be read
jointly. Large \(\operatorname{tr}(\bar G_\Delta^\rho)\) indicates large
finite-lag motion, but not whether that motion is coherent or spread across
possible successors. Equation~\eqref{eq:trace_decomposition} separates these
cases. The circulation statistic captures a different axis, an antisymmetric
lagged source--successor structure. A representation can therefore have large
transport and zero circulation, as in isotropic contraction, or nonzero
circulation with comparable transport scale, as in oriented rotation.

\section{Structural Properties of Finite-Lag Geometry}
\label{sec:theory}

We establish four structural properties of the finite-lag construction. First,
the tensorial observables are affine-covariant, while scalar summaries depend
on the metric used to contract tensors. Second, the dense Gaussian estimator
is Lipschitz on bounded trajectory clouds at fixed bandwidth. Third,
finite-lag source-centered transport separates from infinitesimal
carr\'e-du-champ geometry by retaining deterministic displacement at the
chosen lag. Fourth, a linear-Gaussian model gives a closed-form calibration
of spread, coherent displacement, and circulation. Proofs appear in
Appendices~\ref{app:framework_proofs}--\ref{app:linear_gaussian_mechanisms}.

\subsection{Affine covariance}
\label{subsec:covariance}

\begin{theorem}[Affine covariance and metric dependence]
\label{thm:covariance}
Let \(\phi(x)=Ax+b\) be an invertible affine map, and let \(Q'_\Delta\) be
the pushed-forward conditional law. For the unnormalized tensorial quantities,
and for centered coordinates in \(\mathcal W_\Delta^\rho\),
\[
m'_\Delta=A\,m_\Delta,\qquad
C'_\Delta=A\,C_\Delta A^\top,\qquad
G'_\Delta=A\,G_\Delta A^\top,
\]
and
\[
(\mathcal W'_\Delta)^{\rho'}
=
A\,\mathcal W_\Delta^\rho A^\top .
\]
For \(M\succ0\), define
\[
S_C^M=\int\operatorname{tr}(M C_\Delta(x))\,\rho(dx),
\qquad
S_m^M=\tau\int m_\Delta(x)^\top M m_\Delta(x)\,\rho(dx),
\]
\[
F^M=\frac{S_m^M}{S_C^M+S_m^M}.
\]
Under \(M'=A^{-\top}MA^{-1}\),
\[
S_{C'}^{M'}=S_C^M,\qquad
S_{m'}^{M'}=S_m^M,\qquad
F^{M'}=F^M .
\]
\end{theorem}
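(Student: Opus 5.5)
The plan is a direct change of variables: push the joint law of \((X,Y)\) forward under \(\phi\times\phi\) and track how each moment transforms. Set \(X'=AX+b\) and \(Y'=AY+b\). Because \(\phi\) is a bijection, conditioning on \(X'=x'\) is the same as conditioning on \(X=\phi^{-1}(x')\). The pushed-forward law is therefore \(Q'_\Delta(\cdot\mid x')=\phi_\#Q_\Delta(\cdot\mid\phi^{-1}(x'))\), and the pushed-forward source law is \(\rho'=\phi_\#\rho\). We evaluate primed quantities at \(x'=\phi(x)\) throughout.

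The key observation is that the displacement \(D'=Y'-X'=A(Y-X)=AD\) does not see the translation \(b\). Linearity of conditional expectation then gives \(m'_\Delta(x')=\tau^{-1}\mathbb E[AD\mid X=x]=A\,m_\Delta(x)\). The usual bilinearity of covariance gives \(C'_\Delta(x')=A\,C_\Delta(x)A^\top\). For \(G'_\Delta\) we apply the same bilinearity directly to the source-centered second moment \(\mathbb E[D'D'^\top\mid X=x]=A\,\mathbb E[DD^\top\mid X=x]A^\top\). As a consistency check, this also follows from Proposition~\ref{prop:decomposition}. For the circulation, centering removes \(b\): \(\widetilde X'=X'-\mathbb E_{\rho'}X'=A\widetilde X\), and similarly \(\widetilde Y'=A\widetilde Y\), since the successor mean is pushed forward along with everything else. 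Hence \(\mathbb E[\widetilde X'\widetilde Y'^\top]=A\,\mathbb E[\widetilde X\widetilde Y^\top]A^\top\), and the same holds for the transposed term. Taking the antisymmetric difference gives \((\mathcal W'_\Delta)^{\rho'}=A\,\mathcal W_\Delta^\rho A^\top\).

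For the scalar summaries we use cyclicity of the trace with \(M'=A^{-\top}MA^{-1}\). Pointwise this gives
\[
\operatorname{tr}(M'C'_\Delta)=\operatorname{tr}(A^{-\top}MA^{-1}AC_\Delta A^\top)=\operatorname{tr}(MC_\Delta),
\qquad
m_\Delta'^{\top}M'm'_\Delta=m_\Delta^\top A^\top A^{-\top}MA^{-1}Am_\Delta=m_\Delta^\top Mm_\Delta .
\]
We then integrate against \(\rho'\) and use the change of variables \(\int f(x')\rho'(dx')=\int f(\phi(x))\rho(dx)\). This yields \(S_{C'}^{M'}=S_C^M\) and \(S_{m'}^{M'}=S_m^M\). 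Since \(F^M\) is a function of these two quantities only, \(F^{M'}=F^M\) follows immediately, whenever the denominator is nonzero. We also note that \(M'\succ0\), because \(A\) is invertible.

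The only step needing care is the measure-theoretic one: justifying that the conditional law of \(Y'\) given \(X'\) is the pushforward of the conditional law of \(Y\) given \(X\). This holds \(\rho'\)-almost everywhere, which requires the uniqueness of regular conditional distributions together with the fact that \(\phi\) is a measurable bijection with measurable inverse. Once that is in place, along with the existence of second moments so that every quantity is finite, the remaining steps are linear-algebra identities. We also remark that \(\tau\) is unchanged by \(\phi\), since it depends only on the lag.
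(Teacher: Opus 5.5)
Your proposal is correct and follows the paper's proof essentially step for step. It uses the displacement identity \(D'=AD\) and bilinearity of conditional moments for \(m'_\Delta, C'_\Delta, G'_\Delta\), removes the translation by centering for \(\mathcal W_\Delta\), and combines trace cyclicity under \(M'=A^{-\top}MA^{-1}\) with the change of variables for \(\rho'=\phi_\#\rho\) (your regular-conditional-distribution remark is extra care the paper avoids by defining \(Q'_\Delta\) directly as the pushforward).
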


Thus the tensor observables are affine-covariant, while scalar summaries
require a metric. Euclidean traces are preserved by orthogonal changes of
coordinates. Center-RMS normalization also removes translations and global
scalar rescalings. Anisotropic reparameterizations require the metric
correction above.

\subsection{Dense stability}
\label{subsec:stability}

For paired clouds \(\mathcal Z=\{(x_i,y_i)\}_{i=1}^n\), write
\[
\|\mathcal Z-\widetilde{\mathcal Z}\|_\infty
=
\max_i\max\{\|x_i-\widetilde x_i\|,\|y_i-\widetilde y_i\|\}.
\]

\begin{theorem}[Stability of the dense empirical operator]
\label{thm:stability}
Fix \(\varepsilon>0\) and \(R>0\). On the set of paired clouds satisfying
\(\max_i\{\|x_i\|,\|y_i\|\}\le R\),
\[
\left\|
\widehat P_\Delta(\mathcal Z)
-
\widehat P_\Delta(\widetilde{\mathcal Z})
\right\|_{\infty\to\infty}
\le
\frac{4R}{\varepsilon}
\exp(R^2/\varepsilon)
\|\mathcal Z-\widetilde{\mathcal Z}\|_\infty .
\]
The derived empirical observables
\(\widehat G_\Delta,\widehat m_\Delta,\widehat C_\Delta\), their trace
summaries, and \(\widehat{\mathcal W}_\Delta^\rho\) are Lipschitz on bounded
sets at fixed \(\varepsilon\).
\end{theorem}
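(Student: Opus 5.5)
The plan is to view \(\widehat P_\Delta(\mathcal Z)\) concretely as the row-stochastic \(n\times n\) matrix \(W(\mathcal Z)\) with entries \(W_{qi}=w_i(x_q)\). It acts on successor-evaluated observables by \((\widehat P_\Delta f)(x_q)=\sum_i w_i(x_q)f(y_i)\). The \(\infty\to\infty\) operator norm of \(W(\mathcal Z)-W(\widetilde{\mathcal Z})\) is then the maximal row \(\ell^1\) distance \(\max_q\|w(x_q)-\widetilde w(\widetilde x_q)\|_1\). The successors \(y_i\) enter only through the argument of \(f\), so for the operator bound only source perturbations matter. The \(y\)-perturbations are handled later, when we pass to the moment observables.

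First, I would control the logits. Write \(s_{qi}=-\|x_q-x_i\|^2/(4\varepsilon)\) and \(\delta=\|\mathcal Z-\widetilde{\mathcal Z}\|_\infty\). Using \(\bigl|\|a\|^2-\|b\|^2\bigr|\le(\|a\|+\|b\|)\|a-b\|\), together with \(\|x_q-x_i\|\le 2R\) and \(\|(x_q-x_i)-(\widetilde x_q-\widetilde x_i)\|\le2\delta\), we get
\[
\max_i|s_{qi}-\widetilde s_{qi}|\le \frac{4R\cdot 2\delta}{4\varepsilon}=\frac{2R\delta}{\varepsilon}.
\]
Note that both the query point and the neighbors move; this is the step that has to be tracked carefully.

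Second, I would use the Lipschitz property of the softmax from \(\ell^\infty\) logits to \(\ell^1\) probabilities. Along the segment \(s(t)=\widetilde s+t(s-\widetilde s)\), the Jacobian is \(\operatorname{diag}(p)-pp^\top\), and \(\|(\operatorname{diag}(p)-pp^\top)v\|_1\le 2\|v\|_\infty\). The mean value theorem then gives \(\|w-\widetilde w\|_1\le 4R\delta/\varepsilon\), which already implies the stated bound because \(\exp(R^2/\varepsilon)\ge1\). If one prefers the cruder route that produces the exponential factor, there is an alternative. Each kernel value lies in \([e^{-R^2/\varepsilon},1]\). Differentiating numerator and denominator separately and bounding the denominator from below by \(n e^{-R^2/\varepsilon}\) reproduces exactly \(\frac{4R}{\varepsilon}e^{R^2/\varepsilon}\). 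I would present the sharp softmax argument and remark that it dominates the stated constant.

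Third, I would transfer the bound to the observables. For example,
\[
\widehat G_\Delta(x_q)=\frac1{2\tau}\sum_i w_i(x_q)(y_i-x_q)(y_i-x_q)^\top .
\]
Each summand is a product of a weight, which is Lipschitz with the \(\ell^1\) control above, and a matrix of norm at most \((2R)^2\), which is Lipschitz in \((x_q,y_i)\) with constant \(4R\cdot 2\). A telescoping estimate of the form \(\sum_i|w_i-\widetilde w_i|\,4R^2+\sum_i\widetilde w_i\cdot 8R\delta\) gives Lipschitz constants depending only on \(R,\varepsilon,\tau\). The same argument applies to \(\widehat m_\Delta\), which is a first moment. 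For \(\widehat C_\Delta=2\widehat G_\Delta-\tau\widehat m_\Delta\widehat m_\Delta^\top\), products of bounded Lipschitz maps are Lipschitz. Traces and averages over the empirical source measure \(\widehat\rho=\frac1n\sum_q\delta_{x_q}\) preserve Lipschitz constants.

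Finally, \(\widehat{\mathcal W}_\Delta^\rho\) is handled the same way. The empirical centered cross-moments \(\frac1n\sum_q\widetilde x_q\widetilde{\widehat y}_q^\top\), with \(\widehat y_q=\sum_i w_i(x_q)y_i\) or the paired \(y_q\), are built from centering, which is a linear contraction, and bilinear products of vectors bounded by \(2R\). I expect the only genuine obstacle to be the first two steps, that is, getting a uniform logit-to-weight bound when the query points move simultaneously with the neighbors. The observable step is routine bookkeeping.
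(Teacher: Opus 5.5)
Your proposal is correct. For the operator bound it takes a different route from the paper, and a sharper one.

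\textbf{The paper's route.} The paper works additively at the level of kernel entries. It bounds $|K_{qi}(\mathcal Z)-K_{qi}(\widetilde{\mathcal Z})|\le (2R/\varepsilon)\|\mathcal Z-\widetilde{\mathcal Z}\|_\infty$, using only that $u\mapsto e^{-u/4\varepsilon}$ is $1/(4\varepsilon)$-Lipschitz. It then applies a separate row-normalization lemma: $\sum_i|w_{qi}-\widetilde w_{qi}|\le 2\delta/\beta$ whenever every row sum is at least $n\beta$. Here $\beta=e^{-R^2/\varepsilon}$ is the worst-case lower bound on a kernel entry. The factor $e^{R^2/\varepsilon}$ comes entirely from $\beta$, and your ``cruder alternative'' is exactly this argument.

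\textbf{Your route.} You perturb the logits and use the softmax Jacobian. It responds to a logit perturbation $v$ by $dp_i=p_i(v_i-p^\top v)$, so its $\ell^\infty\to\ell^1$ norm is at most $2$. The response is relative to the current weights, so small kernel values cost nothing and no entry lower bound is needed. You get $4R/\varepsilon$ with no exponential factor, which implies the stated constant.

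\textbf{Trade-offs.} Your bound degrades only like $R/\varepsilon$ as $\varepsilon\to0$ or $R$ grows. This shows that the exponential deterioration implied by the paper's constant is an artifact of the additive argument, not a property of the estimator; the paper's remark that narrow kernels worsen stability survives only in this linear form. The paper's route is more modular: its two lemmas apply unchanged to any row-normalized kernel that is Lipschitz and bounded below on the data. Yours needs $\log K$ to be Lipschitz on the bounded set, which holds for the Gaussian but is a different structural assumption.

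\textbf{Derived observables.} Your treatment is essentially the paper's bounded-product argument: telescope the weight differences against bounded, Lipschitz quadratic integrands, then pass to products, traces and averages. One small correction: centering is linear and Lipschitz, but it is not a contraction in the max-over-points norm, where its constant can approach $2$. This does not affect the conclusion.
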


The proof bounds squared-distance perturbations, transfers the bound through
the Gaussian kernel, and then controls row normalization using the positive
kernel lower bound \(\exp(-R^2/\varepsilon)\). Narrower kernels resolve more
local structure but worsen the stability constant. After center-RMS
normalization, the same conclusion holds when the RMS scale is bounded away
from zero.

\subsection{Finite-lag separation}
\label{subsec:separation}

For an It\^o diffusion, infinitesimal carr\'e du champ records the
second-order diffusion tensor and cancels first-order drift. Finite-lag
source-centered transport keeps the displacement at the chosen lag.

\begin{theorem}[Finite-lag detection of deterministic motion]
\label{thm:separation}
Fix \(\Delta\ge1\), let \(\tau=\Delta\,dt\), and let
\(T:\mathbb R^d\to\mathbb R^d\) be measurable. Assume
\[
\int \|T^\Delta(x)-x\|^2\,\rho(dx)<\infty
\]
and suppose \(T^\Delta(x)\neq x\) on a \(\rho\)-positive set. If
\(X_{t+\Delta}=T^\Delta(X_t)\) deterministically, then
\[
\bar G_\Delta^\rho
=
\frac{1}{2\tau}
\int
(T^\Delta(x)-x)(T^\Delta(x)-x)^\top\,\rho(dx),
\]
and
\[
\operatorname{tr}(\bar G_\Delta^\rho)>0.
\]
For an oriented cyclic shift \(T^\Delta=\Pi\) in centered isotropic
coordinates with covariance \(\Sigma=\sigma_x^2I\),
\[
\|\mathcal W_\Delta^\rho\|_F
=
\frac{\sigma_x^2}{\tau}\|\Pi^\top-\Pi\|_F>0
\]
whenever \(\Pi\) is not symmetric. By contrast, a deterministic
continuous-time flow represented by the first-order generator
\(L=b\cdot\nabla\) has \(\Gamma_L\equiv0\).
\end{theorem}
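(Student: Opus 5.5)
The proof splits into three pieces: a direct computation for the deterministic law, a cross-moment computation for the cyclic shift, and a one-line generator calculation.

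\textbf{Deterministic transport.} If \(X_{t+\Delta}=T^\Delta(X_t)\), the conditional law is the point mass \(Q_\Delta(dy\mid x)=\delta_{T^\Delta(x)}(dy)\). Substituting this into Definition~\ref{def:G_delta} gives
\[
G_\Delta(x)=\tfrac{1}{2\tau}(T^\Delta(x)-x)(T^\Delta(x)-x)^\top .
\]
Integrating against \(\rho\) yields the stated formula for \(\bar G_\Delta^\rho\). The integrability hypothesis makes every entry finite, since \(|D_jD_k|\le\|D\|^2\) with \(D=T^\Delta(x)-x\). Equivalently, Proposition~\ref{prop:decomposition} applies with \(C_\Delta\equiv0\) and \(\tau m_\Delta=T^\Delta(x)-x\). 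Taking traces,
\[
\operatorname{tr}(\bar G_\Delta^\rho)=\tfrac{1}{2\tau}\int\|T^\Delta(x)-x\|^2\,\rho(dx).
\]
This is the integral of a nonnegative function that is strictly positive on a \(\rho\)-positive set, so it is strictly positive.

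\textbf{Cyclic shift.} Take \(Y=\Pi X\) with \(X\) centered and \(\mathbb E[XX^\top]=\sigma_x^2 I\).

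1. Centering is preserved: \(\mathbb E Y=\Pi\,\mathbb E X=0\), so \(\widetilde X=X\) and \(\widetilde Y=Y\).
2. The cross-moments are \(\mathbb E[XY^\top]=\mathbb E[XX^\top]\Pi^\top=\sigma_x^2\Pi^\top\) and \(\mathbb E[YX^\top]=\sigma_x^2\Pi\).
3. By \eqref{eq:circulation}, \(\mathcal W_\Delta^\rho=\tfrac{\sigma_x^2}{\tau}(\Pi^\top-\Pi)\), which gives the stated Frobenius norm.
4. This norm is positive exactly when \(\Pi^\top\neq\Pi\), i.e.\ when \(\Pi\) is not symmetric.

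The one point to check is the meaning of ``centered isotropic coordinates.'' I read it as \(\rho\) having mean zero and covariance \(\sigma_x^2 I\), with centering taken in the same chosen coordinates for source and successor. For a permutation \(\Pi\) the successor covariance \(\Pi\Sigma\Pi^\top\) is again \(\sigma_x^2 I\), so no inconsistency arises. I will state this convention explicitly.

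\textbf{Generator contrast.} Use the carr\'e du champ
\[
\Gamma_L(f,g)=\tfrac12\bigl(L(fg)-fLg-gLf\bigr)
\]
with \(L=b\cdot\nabla\) acting on smooth \(f,g\). The Leibniz rule gives \(b\cdot\nabla(fg)=f\,b\cdot\nabla g+g\,b\cdot\nabla f\), so \(\Gamma_L(f,g)\equiv0\). Applied to coordinate functions, the infinitesimal tensor vanishes, while \(\bar G_\Delta^\rho\) does not whenever the flow moves points.

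I expect no serious obstacle here. The only care needed is measure-theoretic: the point-mass identification of \(Q_\Delta\) and the finiteness of the matrix integral, both of which are immediate from the hypotheses.
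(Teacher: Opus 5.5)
Your proposal is correct and follows the paper's proof essentially step for step. It averages and traces the deterministic point-mass (zero-covariance) form of $G_\Delta$, computes the cross-moments $\mathbb E[XY^\top]=\sigma_x^2\Pi^\top$ and $\mathbb E[YX^\top]=\sigma_x^2\Pi$ for the shift, and applies the Leibniz rule to $L=b\cdot\nabla$. Your extra checks (entrywise finiteness of the matrix integral, and preservation of centering and isotropy under $\Pi$) are harmless refinements; note only that strict positivity of $\|\mathcal W_\Delta^\rho\|_F$ also tacitly requires $\sigma_x^2>0$, which both you and the paper leave implicit.
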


The theorem isolates the reason for using finite lag in recurrent
representations. A hidden state is geometrically meaningful through its
finite-step successor, and \(G_\Delta\) records that coherent displacement
directly. In short, \(G_\Delta\) captures finite-step motion at the operating
lag, including deterministic recurrent motion that an infinitesimal
carr\'e-du-champ limit removes from its second-order geometry.

\subsection{Linear-Gaussian closed form}
\label{subsec:linear_gaussian}
The next theorem gives a parametric
calibration in the simplest case where the update is linear and the
innovation is additive Gaussian noise. The closed form shows which terms are
responsible for conditional spread, coherent displacement, and directed
circulation.
\begin{theorem}[Linear-Gaussian closed form]
\label{thm:lg_closed_form}
Let
\[
X_{t+\Delta}=A_\Delta X_t+\xi_t,
\]
where \(X_t\) is mean zero with covariance \(\Sigma\), and \(\xi_t\) is
independent of \(X_t\), mean zero, with covariance \(\Sigma_\xi\). Then
\[
m_\Delta(x)=\tau^{-1}(A_\Delta-I)x,
\qquad
C_\Delta(x)=\tau^{-1}\Sigma_\xi,
\]
\[
\bar G_\Delta
=
\frac{1}{2\tau}
\left[
\Sigma_\xi
+
(A_\Delta-I)\Sigma(A_\Delta-I)^\top
\right],
\]
and
\[
\mathcal W_\Delta
=
\tau^{-1}
\left(
\Sigma A_\Delta^\top-A_\Delta\Sigma
\right).
\]
Consequently,
\[
2\,\operatorname{tr}(\bar G_\Delta)
=
\tau^{-1}\operatorname{tr}(\Sigma_\xi)
+
\tau^{-1}
\operatorname{tr}\!\left((A_\Delta-I)\Sigma(A_\Delta-I)^\top\right).
\]
\end{theorem}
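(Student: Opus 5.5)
The plan is a direct moment computation, conditioning on the source state and using independence of the innovation. Write \(D=Y-X=(A_\Delta-I)X+\xi\) with \(Y=X_{t+\Delta}\), \(X=X_t\). Because \(\xi\) is independent of \(X\) and has mean zero, conditioning on \(X=x\) gives \(\mathbb E[D\mid X=x]=(A_\Delta-I)x\) and \(\operatorname{Cov}(D\mid X=x)=\operatorname{Cov}(\xi)=\Sigma_\xi\). Here the conditional law of \(\xi\) given \(X\) is its marginal, so the conditional covariance does not depend on \(x\). Dividing by \(\tau\) yields the stated \(m_\Delta\) and \(C_\Delta\). No Gaussianity is needed for any of these moment identities; only independence and finite second moments are used.

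Next I obtain \(\bar G_\Delta\) from Proposition~\ref{prop:decomposition}. It gives \(2G_\Delta(x)=\tau^{-1}\Sigma_\xi+\tau^{-1}(A_\Delta-I)xx^\top(A_\Delta-I)^\top\). I then integrate against \(\rho\), the law of \(X_t\), which has mean zero and covariance \(\Sigma\), so that \(\int xx^\top\rho(dx)=\Sigma\). This produces the bracketed formula for \(\bar G_\Delta\). Taking traces, using linearity of the trace, gives the final trace decomposition, which is Equation~\eqref{eq:trace_decomposition} specialized to this model.

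For \(\mathcal W_\Delta\), the first step is to note that both \(X\) and \(Y\) are mean zero, since \(\mathbb E Y=A_\Delta\mathbb E X+\mathbb E\xi=0\). Hence centering is trivial and \(\widetilde X=X\), \(\widetilde Y=Y\). Then \(\mathbb E[XY^\top]=\mathbb E[XX^\top]A_\Delta^\top+\mathbb E[X\xi^\top]=\Sigma A_\Delta^\top\), because the independence and zero mean of \(\xi\) kill the cross term. Transposing gives \(\mathbb E[YX^\top]=A_\Delta\Sigma\). Substituting these into \eqref{eq:circulation} yields \(\tau^{-1}(\Sigma A_\Delta^\top-A_\Delta\Sigma)\).

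I do not expect a real obstacle. The only points needing care are the following:
\begin{itemize}[nosep]
\item the centering convention in \(\mathcal W\): I will state explicitly that we use the raw centered coordinates, not the RMS-normalized ones;
\item the identification of \(\rho\) with the law of \(X_t\);
\item the existence of second moments, which is implicit in the finite covariances \(\Sigma\) and \(\Sigma_\xi\).
\end{itemize}
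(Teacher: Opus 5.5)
Your proposal is correct and follows the paper's proof essentially step by step. The shared steps are: the conditional mean \((A_\Delta-I)x\) and covariance \(\Sigma_\xi\) of \(D=(A_\Delta-I)x+\xi_t\) via independence; the source-centered decomposition averaged using \(\mathbb E[X_tX_t^\top]=\Sigma\); and the lagged cross-moments \(\Sigma A_\Delta^\top\) and \(A_\Delta\Sigma\) for \(\mathcal W_\Delta\). Your remarks that centering is trivial because \(\mathbb E Y=0\), and that \(\rho\) is the law of \(X_t\), make explicit what the paper leaves implicit.
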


The closed form separates the three mechanisms measured by the framework.
The innovation covariance \(\Sigma_\xi\) contributes conditional spread. The
update offset \(A_\Delta-I\) contributes coherent displacement through the
source-covariance-weighted form
\[
(A_\Delta-I)\Sigma(A_\Delta-I)^\top.
\]
The antisymmetric mismatch
\[
\Sigma A_\Delta^\top-A_\Delta\Sigma
\]
contributes directed coordinate circulation.

This parallels the Gaussian bridge in static feedforward operator geometry \cite{reddy2026diffusionoperatorgeometry}.
There, class offsets \(\mu_a-\mu_b\) enter through the
bandwidth-regularized inverse-covariance form
\[
c_\varepsilon^{(a,b)}
=
\tfrac14(\mu_a-\mu_b)^\top(\varepsilon I+\Sigma)^{-1}(\mu_a-\mu_b).
\]
Here, recurrent update offsets replace class-mean offsets, and the finite-lag
law contributes an additional directed component. The population closed form
contains no bandwidth parameter, as bandwidth enters through the empirical
source-smoothing estimator.

\section{Empirical study of the formal quantities}
\label{sec:experiments}

The experiments calibrate the quantities from
Sections~\ref{sec:finite_lag_geometry}--\ref{sec:theory}. Controlled systems
test the decomposition, circulation, affine covariance, and dense stability
results. We then use repeat-copy networks as a compact recurrent case study.
Unless otherwise stated, empirical operators use the dense Gaussian
source-smoothing estimator, center-RMS normalization, lag \(\Delta=1\), and
the median-heuristic bandwidth.

\subsection{Controlled calibration and stability}
\label{subsec:exp_decomposition}

We first validate the linear-Gaussian closed form. We sample
\[
Y=AX+\xi,
\qquad
X\sim\mathcal N(0,I),
\qquad
\xi\sim\mathcal N(0,\sigma^2I),
\]
with \(A=I+\beta B\) for fixed Frobenius-normalized \(B\). We sweep
\(\beta\in\{0,0.25,0.5,0.75,1.0\}\) and
\(\sigma\in\{0,0.1,0.25,0.5\}\) at \(d=16\). Because this experiment tests
the population closed form, spread and coherent displacement are computed
from the known conditional moments. As predicted,
\[
\text{coherent trace}=\beta^2,
\qquad
\text{spread trace}=d\sigma^2.
\]
Across all settings, the maximum relative error is below \(0.7\%\).
A separate two-dimensional sweep with \(A=\alpha I+\gamma J\), where \(J\) is
antisymmetric, validates the directed statistic, showing that
\(\|\widehat{\mathcal W}_\Delta\|_F\) grows approximately linearly with
\(\gamma\) and vanishes at \(\gamma=0\), matching the linear-Gaussian
formula.

We next test the structural theorems. For affine covariance, we apply
translations, orthogonal maps, scalar dilations, and anisotropic diagonal
maps to a cyclic-shift trajectory cloud. In the push-forward experiment, the
conditional weights are held fixed and the coordinates are transformed.
Translations and orthogonal maps preserve Euclidean trace summaries, scalar
dilations scale traces by the squared scale, and the metric correction
\(M'=A^{-\top}MA^{-1}\) restores the original trace summaries to numerical
precision. If the Gaussian kernel is rebuilt after an anisotropic
transformation, the estimator itself changes, as expected; this sensitivity
is reported separately in Appendix~\ref{app:covariance_experiments}.

For dense stability, we perturb a fixed cyclic-shift cloud by Gaussian noise
of scale \(\sigma_p\), keep the bandwidth fixed, and measure absolute changes
in \(\operatorname{tr}(\bar G_\Delta)\), coherent displacement trace, and
\(\|\widehat{\mathcal W}_\Delta\|_F\), averaged over eight perturbations per
scale. The observed slopes are close to one, consistent with
Theorem~\ref{thm:stability}. Sparse \(k\)-NN approximations are tested in
Appendix~\ref{app:knn_sensitivity}. They are useful computationally, but
change the local resolution and do not share the dense operator's Lipschitz
guarantee.

\begin{figure}[t]
\centering
\includegraphics[width=\linewidth]{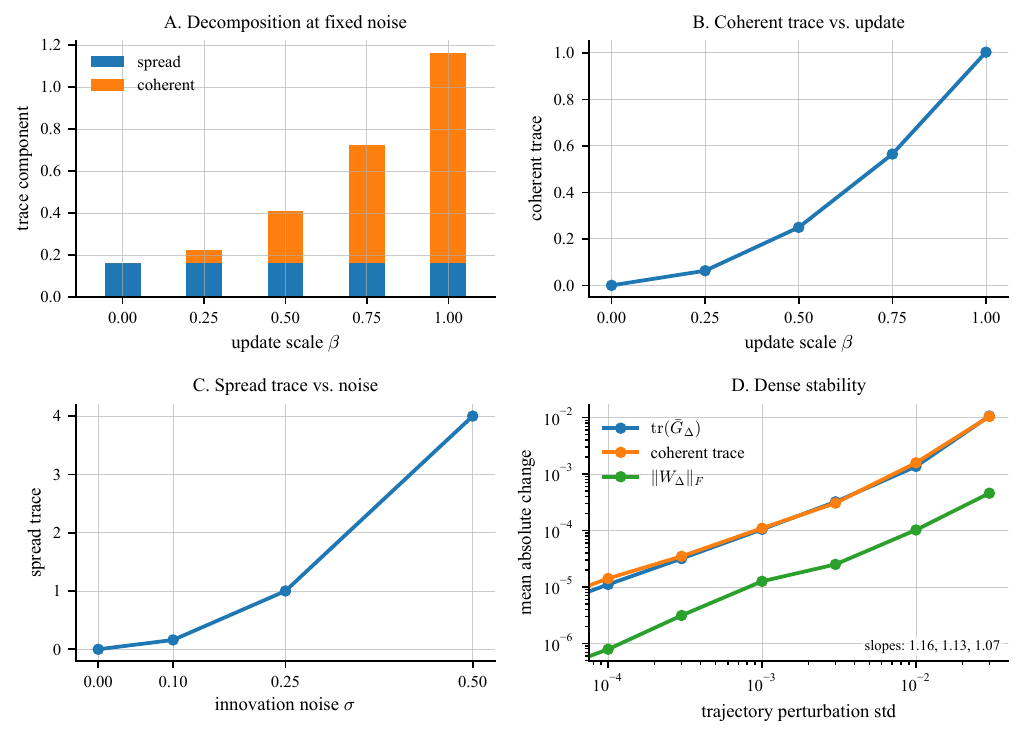}
\caption{Controlled calibration and stability of the finite-lag observables.
(A) At fixed innovation noise \(\sigma=0.1\), the trace decomposition
separates conditional spread from coherent displacement as the update scale
\(\beta\) varies.
(B) The coherent displacement trace scales with update strength \(\beta\) and
is independent of \(\sigma\) up to Monte Carlo error.
(C) The conditional spread trace scales with innovation noise \(\sigma\) and
is independent of \(\beta\) up to Monte Carlo error.
(D) Dense finite-lag stability: mean absolute metric change versus trajectory
perturbation scale on log-log axes. Empirical slopes are \(1.16\), \(1.13\),
and \(1.07\) for transport scale, coherent displacement trace, and
circulation, respectively.}
\label{fig:controlled}
\end{figure}

\subsection{Repeat-copy recurrent case study}
\label{subsec:exp_recurrent}

We train Elman, GRU, and LSTM networks on repeat-copy with feature
dimension \(4\), using a recall-window-weighted loss. The five-seed
performance-matched run uses copy length \(10\); the capacity-control,
memory-horizon, and phase-profile experiments use copy length \(8\) in the
expanded grids (Appendix~\ref{app:repeat_copy_task}). All main-table runs
solve the task: recall sign accuracy is \(1.000\) for Elman and GRU, and
\(0.9999\) for LSTM. Table~\ref{tab:recurrent_main} reports dense
finite-lag quantities on the hidden state \(h_t\) across five seeds.

\begin{table}[t]
\caption{Performance-matched repeat-copy case study. Dense estimator,
\(\Delta=1\), median-heuristic bandwidth, center-RMS normalization. Static
effective rank is included as a snapshot baseline. Mean \(\pm\) standard
deviation over five seeds.}
\label{tab:recurrent_main}
\centering
\small
\begin{tabular}{lccccc}
\toprule
arch & \(\operatorname{tr}(\bar G_\Delta)\)
& spread & coherent & \(F_\Delta^\rho\) & static rank \\
\midrule
Elman-64
& \(1.0085\pm0.0008\)
& \(1.0018\pm0.0005\)
& \(1.0151\pm0.0017\)
& \(0.5033\pm0.0005\)
& \(25.5\pm0.6\) \\
GRU-64
& \(0.8673\pm0.0076\)
& \(0.9294\pm0.0050\)
& \(0.8051\pm0.0131\)
& \(0.4641\pm0.0039\)
& \(12.6\pm1.3\) \\
LSTM-64
& \(0.8815\pm0.0035\)
& \(0.8908\pm0.0066\)
& \(0.8721\pm0.0132\)
& \(0.4947\pm0.0056\)
& \(14.0\pm0.7\) \\
\bottomrule
\end{tabular}
\end{table}

The strongest architecture-dependent differences are in total transport scale
and coherent displacement trace. Elman has larger
\(\operatorname{tr}(\bar G_\Delta)\) than GRU and LSTM by roughly \(15\%\),
and its coherent trace is larger by \(16\)--\(26\%\). In contrast, the
coherent displacement fraction is not a stable architecture identifier: at
the median bandwidth it lies near \(0.45\)--\(0.50\), and the sensitivity
sweeps below show that it changes with kernel resolution and coordinate
normalization.

Finally, phase-resolved analysis localizes where the transport differences
arise. In a solved repeat-copy setting with delay \(D=4\), we split
source--successor pairs into write, cue, delay, and recall phases, apply the
same global center-RMS normalization, and compute phase-local dense
operators. Figure~\ref{fig:phase_profile} shows that Elman has substantially
larger transport and coherent displacement during write and recall, while GRU
and LSTM maintain lower transport through write, cue, and delay and increase
at recall. Static effective rank follows a different pattern, showing that
finite-lag geometry captures when hidden states move, not only where they
sit. Full tables are in Appendix~\ref{app:phase_profile}.

\begin{figure}[t]
\centering
\includegraphics[width=\linewidth]{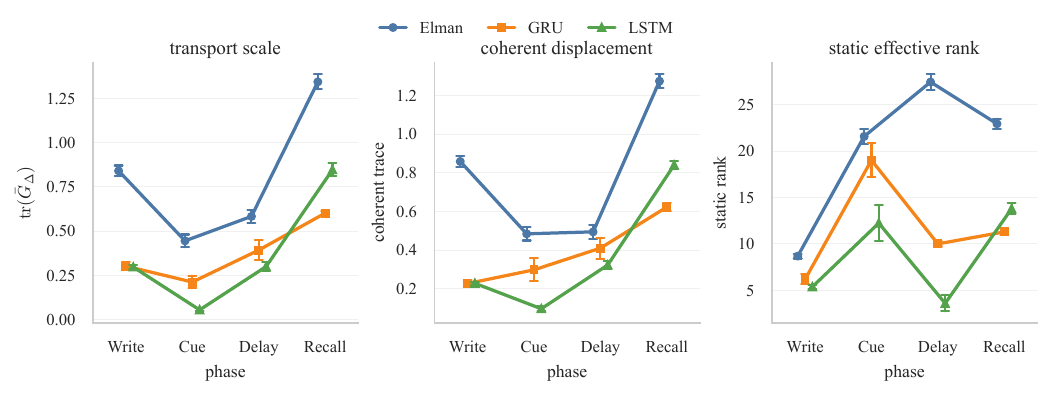}
\caption{Phase-resolved finite-lag geometry on repeat-copy with delay
\(D=4\). Metrics are computed separately on source--successor pairs from the
write, cue, delay, and recall phases after global center-RMS normalization.
Elman shows larger transport and coherent displacement during write and
recall, while GRU and LSTM maintain lower transport through write, cue, and
delay and increase at recall. Static effective rank gives a different
snapshot view, illustrating that finite-lag geometry localizes when hidden
states move during the computation.}
\label{fig:phase_profile}
\end{figure}

\paragraph{Capacity controls and sensitivity.}
Parameter-count controls preserve the transport-scale gap. At nearly matched
parameter counts, Elman-64 has \(4{,}804\) parameters and
\(\operatorname{tr}(\bar G_\Delta)=1.013\pm0.001\), while GRU-36 has
\(4{,}792\) parameters and
\(\operatorname{tr}(\bar G_\Delta)=0.858\pm0.067\). Across the controlled
grid, Elman configurations remain near \(1.01\), while gated configurations
lie between \(0.86\) and \(0.89\).

Resolution and metric sweeps explain which scalars are robust. For dense
center-RMS operators, narrowing the bandwidth from
\(\varepsilon_{\rm med}\) to \(0.1\varepsilon_{\rm med}\) increases
coherent displacement fraction from about \(0.50\) to \(0.62\) for Elman,
\(0.45\) to \(0.53\) for GRU, and \(0.50\) to \(0.58\) for LSTM. Whitening
changes absolute trace scale and shifts the fraction, as predicted by the
metric dependence in Theorem~\ref{thm:covariance}. Sparse \(k\)-NN
approximations produce still more local fractions and larger circulation
values. Thus the robust learned-network finding is the transport-scale and
coherent-trace separation. Fractions and circulation norms must be read with
their metric and resolution choices.

A small-delay memory-horizon sweep gives the same qualitative picture along a
task axis. We insert blank delays \(D\in\{0,2,4,6,8\}\) between delimiter and
recall. GRU and LSTM solve all delays across three seeds. Elman solves all
seeds through \(D=6\) and two of three seeds at \(D=8\). For GRU,
\(\operatorname{tr}(\bar G_\Delta)\) decreases from \(0.875\) at \(D=0\) to
\(0.716\) at \(D=8\), and coherent trace decreases from \(0.826\) to
\(0.563\). LSTM shows the same trend, with transport decreasing from \(0.880\)
to \(0.728\) and coherent trace from \(0.842\) to \(0.640\). Circulation
increases from near zero to a larger plateau for both gated architectures.
Thus the finite-lag observables respond systematically to memory horizon on
solved runs. Full tables are in Appendix~\ref{app:memory_horizon}.

\section{Discussion}
\label{sec:discussion}

We introduced finite-lag operator geometry as a trajectory-directed
counterpart of static diffusion operator geometry. The primitive is a
directed empirical transport law on observed source--successor pairs. From
this law we derive a source-centered transport tensor \(G_\Delta\), which
decomposes exactly into conditional spread and coherent displacement, and an
antisymmetric coordinate circulation \(\mathcal W_\Delta^\rho\), which
summarizes directed finite-lag flow. The framework is built at the lag where
the recurrent computation is observed. It does not require estimating an
underlying continuous-time generator or assuming that hidden state alone is
autonomously Markov.

The structural results clarify what is intrinsic and what is conventional.
Affine covariance shows that the tensorial quantities transform naturally,
while scalar summaries require a metric choice. Dense stability justifies the
smooth Gaussian source-smoothing estimator over hard \(k\)-NN sparsification.
Finite-lag separation explains why deterministic recurrent motion can be
visible at finite lag even when infinitesimal carr\'e-du-champ geometry is
zero. The linear-Gaussian closed form provides an analytical calibration, showing
transport scale comes from innovation spread and coherent update displacement,
while circulation comes from the antisymmetric mismatch of lagged
cross-covariances.

Empirically, controlled systems validate the formal quantities, and a
performance-matched repeat-copy case study illustrates their behavior on
trained recurrent networks. The most robust finite-lag differences in this
case study are total transport scale and coherent displacement trace. In
contrast, coherent displacement fraction and circulation norms depend on the
chosen metric and kernel resolution, as predicted by the theory. Capacity
controls preserve the transport-scale gap, and memory-horizon experiments
show that the quantities respond to task demand on solved runs.

These experiments are not intended as universal architecture claims. They
demonstrate how finite-lag operator geometry can be used, by calibrating the
formal quantities in controlled systems, then report transport scale,
conditional spread, coherent displacement, and directed circulation under
explicit normalization and bandwidth choices. The broader message is that
recurrent representations should be analyzed not only as clouds of states,
but as finite-lag transport laws describing where the computation sends those
states.

\newpage

\bibliographystyle{unsrtnat}
\bibliography{references}

\newpage

\appendix

\section{Empirical operator construction}
\label{app:operator_construction}

Here we give the operational details of the empirical finite-lag
operator \(\widehat P_\Delta\) used throughout the paper. The dense Gaussian
source-smoothing estimator of Equation~\eqref{eq:empirical_operator} is the
object used in the stability theory. The \(k\)-nearest-neighbor version is a
scalable approximation and is used only where explicitly stated.

\subsection{Lagged-pair pooling and notation}
\label{app:pooling}

Hidden trajectories are written \(h_t^{(s)} \in \mathbb R^d\), indexed by
sequence or trial \(s\) and time \(t\). For a fixed integer lag
\(\Delta \ge 1\) and physical lag \(\tau=\Delta\,dt\), define the pooled
index set
\[
\mathcal X
=
\{(s,t): h_t^{(s)} \text{ is observed}\},
\qquad
x_p=h_t^{(s)}
\quad \text{for } p=(s,t).
\]
The valid source set is
\[
\mathcal X^-
=
\{p=(s,t)\in\mathcal X : (s,t+\Delta)\in\mathcal X\}.
\]
For \(p\in\mathcal X^-\), write \(p+\Delta=(s,t+\Delta)\), define the
lagged successor map \(T_\Delta(p)=p+\Delta\), and set
\[
y_p=x_{T_\Delta(p)}.
\]
The empirical lagged pair cloud is
\[
\{(x_p,y_p):p\in\mathcal X^-\}.
\]
We write \(n=|\mathcal X^-|\) for the number of valid pairs and \(d\) for
the hidden dimension. Unless otherwise stated, the empirical source measure
is uniform on \(\mathcal X^-\), so \(\rho_p=1/n\).

Throughout the construction, query indices \(q\) and neighbor indices \(i\)
range over \(\mathcal X^-\). Successor coordinates \(y_i\) may correspond to
indices in \(\mathcal X\setminus\mathcal X^-\), but they are always attached
to valid source indices \(i\in\mathcal X^-\).

\subsection{Dense Gaussian source-smoothing operator}
\label{app:dense_construction}

For bandwidth \(\varepsilon>0\), define the Gaussian source kernel
\[
K_{qi}
=
k_\varepsilon(x_q,x_i)
=
\exp\!\left(-\frac{\|x_q-x_i\|^2}{4\varepsilon}\right),
\qquad
D_q=\sum_{i\in\mathcal X^-}K_{qi}.
\]
Since \(q\in\mathcal X^-\) and \(K_{qq}=1\), each \(D_q\) is strictly
positive. The dense empirical source-smoothing operator has normalized
weights
\[
w_{qi}
=
\frac{K_{qi}}{D_q},
\qquad
(\widehat P_\Delta)_{qi}=w_{qi},
\]
and the corresponding empirical conditional transport law is
\[
\widehat Q_\Delta(dy\mid x_q)
=
\sum_{i\in\mathcal X^-}w_{qi}\,\delta_{y_i}(dy).
\]

The index \(i\) in \((\widehat P_\Delta)_{qi}\) denotes a neighboring source
point whose observed successor \(y_i\) contributes to the conditional law at
the query source \(x_q\). Thus \(\widehat P_\Delta\) is a source-smoothing
operator for a successor-valued empirical law. 

We use the Gaussian factor of \(4\) in the bandwidth following the convention
used in diffusion-map constructions \cite{coifman2006diffusion}. This
matches the median heuristic bandwidth
\[
\varepsilon_{\rm med}
=
\frac14\,\operatorname{median}_{i<j}\|x_i-x_j\|^2,
\]
under which the median pairwise affinity is \(e^{-1}\).

\subsection{Empirical observables}
\label{app:empirical_observables}

Let
\[
\bar y_q=\sum_i w_{qi}y_i.
\]
The empirical squared transport scale, finite-lag drift, source-centered
transport tensor, and conditional spread tensor are
\[
\widehat e_q^{\,2}
=
\sum_i w_{qi}\|y_i-x_q\|^2,
\qquad
\widehat m_\Delta(x_q)
=
\tau^{-1}(\bar y_q-x_q),
\]
\[
\widehat G_\Delta(x_q)
=
\frac{1}{2\tau}
\sum_iw_{qi}(y_i-x_q)(y_i-x_q)^\top,
\]
and
\[
\widehat C_\Delta(x_q)
=
\tau^{-1}\sum_iw_{qi}
\bigl[(y_i-x_q)-\tau\widehat m_\Delta(x_q)\bigr]
\bigl[(y_i-x_q)-\tau\widehat m_\Delta(x_q)\bigr]^\top.
\]
Equivalently, since \(\tau\widehat m_\Delta(x_q)=\bar y_q-x_q\),
\[
\widehat C_\Delta(x_q)
=
\tau^{-1}\sum_iw_{qi}(y_i-\bar y_q)(y_i-\bar y_q)^\top
=
\tau^{-1}
\left(
\sum_iw_{qi}y_i y_i^\top-\bar y_q\bar y_q^\top
\right).
\]
Thus the conditional spread term is the covariance of the smoothed successor
cloud attached to the query source.

The pointwise decomposition
\[
2\,\widehat G_\Delta(x_q)
=
\widehat C_\Delta(x_q)
+
\tau\,\widehat m_\Delta(x_q)\widehat m_\Delta(x_q)^\top
\]
holds exactly by construction. Averaging over a source measure \(\rho\)
gives
\[
\widehat{\bar G}_\Delta^\rho
=
\sum_q\rho_q\,\widehat G_\Delta(x_q),
\qquad
\widehat{\bar C}_\Delta^\rho
=
\sum_q\rho_q\,\widehat C_\Delta(x_q),
\]
and hence
\[
2\,\operatorname{tr}(\widehat{\bar G}_\Delta^\rho)
=
\operatorname{tr}(\widehat{\bar C}_\Delta^\rho)
+
\tau\sum_q\rho_q\|\widehat m_\Delta(x_q)\|^2.
\]
We refer to the two terms on the right-hand side as the conditional spread
trace and coherent displacement trace, respectively.

For efficient computation, local moments are evaluated chunk-wise to avoid
materializing the full \((\text{chunk size},n,d)\) displacement tensor. The
squared-distance term is computed using
\[
\|y_i-x_q\|^2
=
\|y_i\|^2+\|x_q\|^2-2y_i^\top x_q.
\]
For the full transport tensor, we use the identity
\[
\sum_iw_{qi}(y_i-x_q)(y_i-x_q)^\top
=
\sum_iw_{qi}y_i y_i^\top
-
x_q\bar y_q^\top
-
\bar y_qx_q^\top
+
x_qx_q^\top.
\]
The conditional spread tensor is computed using the covariance identity above.

Numerical roundoff in chunk-wise computation can produce a small negative
residual in
\[
\widehat e_q^{\,2}-\tau^2\|\widehat m_\Delta(x_q)\|^2
\]
at machine precision. We clamp this scalar residual to zero before reporting
scalar spread traces, the tensor formulas above are unchanged.

\subsection{Empirical coordinate circulation}
\label{app:empirical_circulation}

The antisymmetric coordinate circulation in the main text is the lagged
source--successor cross-moment
\[
\mathcal W_\Delta^\rho
=
\tau^{-1}
\left(
\mathbb E_\rho[\widetilde X\widetilde Y^\top]
-
\mathbb E_\rho[\widetilde Y\widetilde X^\top]
\right),
\]
where \(\widetilde X,\widetilde Y\) denote the chosen normalized coordinates.
The default empirical operator estimate uses the same smoothed conditional
law \(\widehat Q_\Delta\) as the transport tensor:
\[
\widehat{\mathcal W}_{\Delta,\mathrm{smooth}}^\rho
=
\tau^{-1}
\sum_q\rho_q\sum_iw_{qi}
\left(
\widetilde x_q\widetilde y_i^\top
-
\widetilde y_i\widetilde x_q^\top
\right).
\]
This matrix is skew-symmetric by construction.

For comparison with population joint-moment calculations, we sometimes also
report the raw lagged-pair estimator
\[
\widehat{\mathcal W}_{\Delta,\mathrm{raw}}^\rho
=
\tau^{-1}
\sum_q\rho_q
\left(
\widetilde x_q\widetilde y_q^\top
-
\widetilde y_q\widetilde x_q^\top
\right).
\]
This is the special case of the smoothed formula obtained by replacing
\(w_{qi}\) with \(\mathbf 1\{i=q\}\). Unless explicitly described as raw,
reported circulation values use the smoothed operator estimate.

We summarize circulation by the Frobenius norm
\(\|\widehat{\mathcal W}_\Delta^\rho\|_F\), the largest imaginary eigenvalue
magnitude \(\omega_{\max}\), and the relative circulation
\[
r_{\rm circ}
=
\frac{\|\widehat{\mathcal W}_\Delta^\rho\|_F}
     {\operatorname{tr}(\widehat{\bar G}_\Delta^\rho)}.
\]
The denominator is the total source-centered transport trace in the same
coordinate system. When this trace is numerically zero, we leave
\(r_{\rm circ}\) undefined rather than reporting an unstable ratio.

\subsection{Center-RMS coordinate normalization}
\label{app:normalization}

Tensor quantities transform predictably under affine coordinate changes, but
Euclidean scalar traces are not invariant under arbitrary linear
reparameterizations. Theorem~\ref{thm:covariance} characterizes the exact
transformation law for the unnormalized tensorial quantities. In experiments,
we adopt center-RMS normalization as a reproducible reporting convention.

Given pooled hidden states \(\{x_p\}_{p\in\mathcal X}\), define
\[
\bar x
=
|\mathcal X|^{-1}\sum_{p\in\mathcal X}x_p,
\qquad
\sigma_{\rm RMS}
=
\left(
|\mathcal X|^{-1}\sum_{p\in\mathcal X}\|x_p-\bar x\|^2
\right)^{1/2},
\]
and
\[
\widetilde x_p
=
\frac{x_p-\bar x}{\sigma_{\rm RMS}}.
\]
The same affine map is applied to source and successor coordinates:
\[
\widetilde y_p
=
\frac{y_p-\bar x}{\sigma_{\rm RMS}}.
\]
Using the same normalization for sources and successors is essential for
\(G_\Delta\), \(m_\Delta\), and \(C_\Delta\), since these quantities depend
on displacements \(y-x\).

The pooled normalized cloud has empirical mean zero and RMS norm one with
respect to the measure used to compute \(\bar x\) and \(\sigma_{\rm RMS}\).
When normalization is computed over all pooled hidden states \(\mathcal X\),
the valid-source subset \(\mathcal X^-\) need not have exactly zero mean
under \(\rho\). This discrepancy is usually small in the trajectory windows
we study, and the same affine coordinate system is applied consistently to
sources and successors.

For the circulation statistic, centering affects the decomposition between
mean transport and rotational lagged cross-covariance. When the empirical
source and successor marginals differ, \(\mathcal W_\Delta^\rho\) should be
interpreted as an antisymmetric lagged cross-moment in the chosen normalized
coordinate system. It may include finite horizon source--successor marginal
imbalance in addition to cyclic flow. In stationary or approximately
stationary trajectory windows, this coincides with the usual centered lagged
cross-covariance interpretation.

Center-RMS normalization preserves Euclidean trace summaries under
translations, orthogonal transformations, and global scalar rescalings. It
does not preserve them under anisotropic coordinate rescalings. Those require
the metric correction of Theorem~\ref{thm:covariance}. We use center-RMS
normalization in all main-text experiments. 

If \(\sigma_{\rm RMS}\) is numerically zero, the trajectory cloud has no
nontrivial empirical scale in the chosen window. In that degenerate case the
normalized Euclidean summaries are undefined.

\subsection{\(k\)-nearest-neighbor approximation}
\label{app:knn}

For larger trajectory clouds, the dense operator becomes computationally
expensive. The \(k\)-nearest-neighbor approximation replaces the dense
weights with sparse weights restricted to nearby source points.

For a query \(x_q\) and neighborhood size \(k\), let \(\mathcal N_k(x_q)\)
be the selected source-space neighborhood. We always include \(q\) itself
when \(q\in\mathcal X^-\), replacing the farthest selected neighbor if
necessary. This matches the dense operator's convention that \(K_{qq}=1\).
The remaining non-forced neighbors are chosen by increasing source-space
distance, with ties broken by index order. The sparse weights are
\[
\widetilde w_{qi}
=
\frac{K_{qi}\,\mathbf 1\{i\in\mathcal N_k(x_q)\}}
     {\sum_{j\in\mathcal N_k(x_q)}K_{qj}}.
\]
The corresponding row-stochastic matrix has at most \(k\) nonzero entries per
row.

\paragraph{Discontinuity of \(k\)-NN selection.}
The \(k\)-NN approximation is not Lipschitz in the source cloud. We record
this formally for the non-forced part of the selected neighborhood.

\begin{proposition}[Discontinuity of hard \(k\)-NN selection]
\label{prop:knn_discontinuity}
Consider the \(k\)-nearest-neighbor selection rule on a finite source cloud,
excluding any neighbor indices that are fixed by convention, such as a forced
self-index. For \(1\le k<|\mathcal X|-1\), the selected non-forced neighbor
set is discontinuous at any configuration where, for some query \(x_q\), the
\(k\)-th and \((k+1)\)-st non-forced neighbor distances are equal.
Consequently, any observable that nontrivially depends on the selected
adjacency pattern can change by an order-one amount under arbitrarily small
perturbations.
\end{proposition}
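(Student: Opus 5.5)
The plan is to exhibit an explicit tie configuration and show that the selected set jumps under two opposite infinitesimal perturbations. Fix a query index \(q\) and list the non-forced candidate indices by source-space distance \(d_j=\|x_q-x_j\|\). Under the stated hypothesis, \(d_{(k)}=d_{(k+1)}\) for the \(k\)-th and \((k+1)\)-st order statistics. Call the corresponding indices \(a\) and \(b\), where the index-order tie-break selects \(a\) and excludes \(b\). The condition \(1\le k<|\mathcal X|-1\) ensures that a \(k\)-th and a \((k+1)\)-st non-forced candidate both exist, even after one index is reserved for the forced self-neighbor. So the rule genuinely has to choose between \(a\) and \(b\).

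For \(\eta>0\), perturb only \(x_b\), moving it radially toward \(x_q\):
\[
x_b^\eta = x_b - \eta\,\frac{x_b-x_q}{\|x_b-x_q\|},
\]
or in an arbitrary direction if \(x_b=x_q\); that degenerate case can be handled by moving \(x_a\) outward instead. Then \(d_b\) strictly decreases by \(\eta\) and every other distance is unchanged. Take \(\eta\) smaller than half the gap between \(d_{(k)}\) and the nearest distance value different from it, with a special treatment of ties among more than two points. Then the first \(k\) selected indices now include \(b\), and some index previously selected (namely \(a\), or the last one among a tied group) is dropped. So the selected set at \(\mathcal Z^\eta\) differs from the one at \(\mathcal Z\) for every \(\eta>0\), while \(\|\mathcal Z^\eta-\mathcal Z\|_\infty=\eta\to0\). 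This gives discontinuity, and no Lipschitz or even continuity modulus is possible.

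The main technical care is the multi-way tie. If several indices share the distance \(d_{(k)}\), I will take \(b\) to be the tied index with the largest index that is excluded. Pulling it strictly inward forces it into the first \(k\) positions, and exactly one of the tied selected indices is displaced. The count of indices strictly closer than \(d_{(k)}\) is below \(k\), so the inward move really does admit \(b\). The forced self-index is unaffected throughout because it is excluded from consideration by hypothesis.

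For the consequence, consider an observable depending nontrivially on the adjacency pattern, for example the sparse weight row \(\widetilde w_{q\cdot}\) or \(\widehat m_\Delta(x_q)\). Before the perturbation, \(\widetilde w_{qb}=0\). After it, \(\widetilde w_{qb}=K_{qb}/\sum_{j}K_{qj}\ge e^{-d_{(k)}^2/4\varepsilon}/k\), which is bounded below independently of \(\eta\). The row therefore changes by an \(O(1)\) amount in \(\ell^1\), and \(\widehat m_\Delta(x_q)\) shifts by roughly \(\widetilde w_{qb}(y_b-y_a)/\tau\), which is order one whenever \(y_a\ne y_b\).
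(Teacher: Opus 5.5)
Your proposal is correct and takes essentially the same route as the paper's proof: move the tie-excluded equidistant candidate strictly toward $x_q$ by an arbitrarily small amount, so that it enters the selected set and an adjacency entry changes by one. Your handling of multi-way ties and of the degenerate case $x_b=x_q$ goes beyond the paper's two-point argument, which assumes exactly $k-1$ strictly closer candidates. Together with your explicit $\eta$-independent lower bound $\widetilde w_{qb}\ge e^{-d_{(k)}^2/4\varepsilon}/k$, these are refinements of the same argument rather than a different one.
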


\begin{proof}
Suppose two candidate non-forced neighbors \(x_p\) and \(x_r\) satisfy
\[
\|x_q-x_p\|=\|x_q-x_r\|=R,
\]
with exactly \(k-1\) other non-forced candidates strictly closer to \(x_q\).
Suppose the deterministic tie-breaking rule selects \(x_p\) and excludes
\(x_r\). For any \(\eta>0\), perturb \(x_r\) by less than \(\eta\) in a
direction that makes it strictly closer to \(x_q\), while leaving all other
points fixed. The selected neighbor changes from \(x_p\) to \(x_r\), so the
corresponding adjacency entry changes by one under an arbitrarily small
perturbation.
\end{proof}

The dense Gaussian source-smoothing operator does not exhibit this adjacency
discontinuity at any fixed bandwidth (Theorem~\ref{thm:stability}). The
\(k\)-NN approximation is therefore appropriate for sensitivity studies and
large-scale computation, but it introduces an estimator nonsmoothness that
the dense operator avoids. The empirical comparison appears in
Appendix~\ref{app:knn_sensitivity}.

\subsection{Bandwidth selection}
\label{app:bandwidth}

Unless otherwise stated, we use the median-heuristic bandwidth
\[
\varepsilon_{\rm med}
=
\frac14\,\operatorname{median}_{i<j}\|x_i-x_j\|^2,
\]
computed on the source coordinates, with subsampling to at most \(2000\)
points when \(n>2000\). The factor of \(4\) matches the kernel denominator.
Under this choice the median pairwise affinity is \(e^{-1}\), giving a
reproducible default scale for the dense operator.

If the median pairwise squared distance is numerically zero, we replace
\(\varepsilon_{\rm med}\) by a small positive floor. Stability statements
with adaptive bandwidth should be read on subsets where the selected
bandwidth is bounded away from zero. Theorem~\ref{thm:stability} is stated
for fixed \(\varepsilon>0\). When \(\varepsilon\) is selected from the data,
the same fixed-bandwidth bound applies conditionally on the selected scale,
and additional dependence enters through the bandwidth-selection rule.

We sweep the bandwidth scale
\[
\varepsilon/\varepsilon_{\rm med}
\in
\{0.1,0.2,0.3,0.5,1.0,2.0\}
\]
in the resolution-dependence experiment reported below. Wider bandwidths smooth more
aggressively, attribute more trace mass to conditional spread, and increase
the effective neighborhood size. Narrower bandwidths resolve more local
structure but reduce effective neighborhood size and worsen finite-sample
stability constants.

\section{Proofs for the framework}
\label{app:framework_proofs}

This appendix proves the basic structural propositions used in
Section~\ref{sec:finite_lag_geometry}. We use the notation of
Appendix~\ref{app:operator_construction}.

\subsection{Well-posedness of the empirical operator}
\label{app:well_posed}

\begin{proposition}[Well-posed empirical operator, restated]
\label{prop:app_well_posed}
Suppose \(D_q=\sum_iK_{qi}>0\) at every evaluated source node \(x_q\).
Then \(\widehat Q_\Delta(\cdot\mid x_q)\) is a probability measure for
every \(q\), \(\widehat P_\Delta\) is a positive operator on bounded
observables, and the finite matrix
\[
(\widehat P_\Delta)_{qi}=w_{qi}
\]
is row-stochastic.
\end{proposition}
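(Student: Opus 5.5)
The argument is a direct verification from the definitions of Appendix~\ref{app:dense_construction}, carried out in three steps: nonnegativity and normalization of the weights, the measure property of \(\widehat Q_\Delta\), and positivity of \(\widehat P_\Delta\). No earlier theorem is needed beyond the construction itself.

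First, the weights. For every pair \(q,i\in\mathcal X^-\), the kernel entry \(K_{qi}=\exp(-\|x_q-x_i\|^2/4\varepsilon)\) lies in \((0,1]\). By hypothesis \(D_q>0\), so each weight \(w_{qi}=K_{qi}/D_q\) is well defined and strictly positive. Summing over \(i\) gives
\[
\sum_i w_{qi}=D_q^{-1}\sum_iK_{qi}=1.
\]
Hence every row of the finite matrix \((\widehat P_\Delta)_{qi}=w_{qi}\) has nonnegative entries summing to one, which is exactly row-stochasticity.

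Second, the measure property. By definition,
\[
\widehat Q_\Delta(\cdot\mid x_q)=\sum_i w_{qi}\,\delta_{y_i}.
\]
This is a finite combination of Dirac probability measures with nonnegative coefficients, so it is a nonnegative countably additive measure on the Borel sets. Its total mass is \(\sum_i w_{qi}=1\) by the first step, so it is a probability measure for every \(q\).

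Third, positivity of the operator. For a bounded measurable \(f\), the action is
\[
(\widehat P_\Delta f)(x_q)=\sum_i w_{qi}f(y_i).
\]
This is linear in \(f\). If \(f\ge0\), then every term is nonnegative, so \(\widehat P_\Delta f\ge0\), which is the positivity claim. As a by-product, \(|\widehat P_\Delta f|\le\|f\|_\infty\) and \(\widehat P_\Delta 1=1\), so the operator is also a Markov contraction on bounded observables.

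The only point needing care is that the hypothesis \(D_q>0\) really holds at every evaluated source node. I would remark that it is automatic: every query index lies in \(\mathcal X^-\), and \(K_{qq}=1\), so \(D_q\ge1\). Beyond that there is no real obstacle.
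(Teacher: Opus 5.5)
Your proof is correct and follows the paper's argument essentially step for step: positivity of the Gaussian kernel together with \(D_q>0\) gives nonnegative weights summing to one, which yields both row-stochasticity and the probability-measure property of the finite Dirac mixture, and positivity of \(\widehat P_\Delta\) follows termwise from \(f\ge 0\). Your closing observation that \(D_q\ge K_{qq}=1\) for \(q\in\mathcal X^-\) matches the paper's own remark after the proof. The Markov-contraction by-product (\(\widehat P_\Delta 1=1\), \(|\widehat P_\Delta f|\le\|f\|_\infty\)) is a correct addition that the paper does not state.
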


\begin{proof}
The Gaussian kernel is positive, so \(K_{qi}>0\) for all finite source
coordinates \(x_q,x_i\). If \(D_q>0\), then
\[
w_{qi}=\frac{K_{qi}}{D_q}\ge 0,
\qquad
\sum_i w_{qi}=1.
\]
Thus
\[
\widehat Q_\Delta(dy\mid x_q)
=
\sum_iw_{qi}\delta_{y_i}(dy)
\]
is a probability measure and the rows of \(\widehat P_\Delta\) sum to one.
For any bounded observable \(f\) on hidden-state coordinates,
\[
(\widehat P_\Delta f)(x_q)
=
\sum_iw_{qi}f(y_i).
\]
If \(f\ge 0\), then \(\widehat P_\Delta f\ge 0\), establishing positivity.
\end{proof}

In the operating regime of this paper, \(D_q>0\) holds automatically.
For \(q\in\mathcal X^-\), the self-kernel \(K_{qq}=1\) is included in the
sum.

\subsection{Coordinate form and positive semidefiniteness}
\label{app:G_psd}

Motivated by the carré-du-champ covariance formula, define the finite-lag
quadratic form on observables \(f,g\) by
\[
\Gamma_\Delta(f,g)(x)
=
\frac{1}{2\tau}
\int
\bigl(f(y)-f(x)\bigr)
\bigl(g(y)-g(x)\bigr)
\,Q_\Delta(dy\mid x).
\]

\begin{proposition}[Coordinate form, restated]
\label{prop:app_coordinate_form}
For coordinate functions \(z_a(u)=u_a\),
\[
\Gamma_\Delta(z_a,z_b)(x)
=
\bigl(G_\Delta(x)\bigr)_{ab}.
\]
Moreover, \(G_\Delta(x)\succeq 0\) for every \(x\).
\end{proposition}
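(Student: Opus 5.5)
The plan is to substitute the coordinate functions directly into the definition of \(\Gamma_\Delta\), and then get positive semidefiniteness by contracting with an arbitrary vector. For \(z_a(u)=u_a\) we have \(z_a(y)-z_a(x)=(y-x)_a\). Plugging this into
\[
\Gamma_\Delta(f,g)(x)=\frac{1}{2\tau}\int \bigl(f(y)-f(x)\bigr)\bigl(g(y)-g(x)\bigr)\,Q_\Delta(dy\mid x)
\]
gives
\[
\Gamma_\Delta(z_a,z_b)(x)=\frac{1}{2\tau}\int (y-x)_a(y-x)_b\,Q_\Delta(dy\mid x).
\]
The integrand is the \((a,b)\) entry of \((y-x)(y-x)^\top\). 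Integration is entrywise, so this equals \(\bigl(G_\Delta(x)\bigr)_{ab}\) by Definition~\ref{def:G_delta}.

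The only point needing care is integrability. We assume the conditional second moment \(\int\|y-x\|^2\,Q_\Delta(dy\mid x)<\infty\), as in Proposition~\ref{prop:decomposition}; otherwise \(G_\Delta(x)\) is not defined. Since \(|(y-x)_a(y-x)_b|\le \tfrac12\|y-x\|^2\), every entry is then absolutely integrable, and the identification holds entrywise. For the empirical law \(\widehat Q_\Delta\) this is automatic, because the measure is a finite sum of Dirac masses.

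For positive semidefiniteness, fix \(v\in\mathbb R^d\) and use linearity of the integral:
\[
v^\top G_\Delta(x)v=\frac{1}{2\tau}\int \bigl(v^\top(y-x)\bigr)^2\,Q_\Delta(dy\mid x).
\]
The integrand is nonnegative, \(Q_\Delta(\cdot\mid x)\) is a positive measure, and \(\tau=\Delta\,dt>0\), so the right-hand side is nonnegative. Equivalently, \(v^\top G_\Delta v=\Gamma_\Delta(f_v,f_v)\) with \(f_v(u)=v^\top u\), which is nonnegative by inspection.

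Symmetry of \(G_\Delta(x)\) is immediate from the symmetry of \((y-x)(y-x)^\top\), so \(G_\Delta(x)\succeq0\). The same argument applies verbatim to \(\widehat G_\Delta(x_q)\): by Proposition~\ref{prop:app_well_posed}, \(\widehat Q_\Delta(\cdot\mid x_q)\) is a probability measure with nonnegative weights \(w_{qi}\).
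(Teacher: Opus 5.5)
Your proof is correct and is the same as the paper's: you substitute the coordinate functions into \(\Gamma_\Delta\) to read off \((G_\Delta(x))_{ab}\), then show \(v^\top G_\Delta(x)v=\frac{1}{2\tau}\int (v^\top(y-x))^2\,Q_\Delta(dy\mid x)\ge0\). There is one harmless slip in your integrability aside: \(|(y-x)_a(y-x)_b|\le\tfrac12\|y-x\|^2\) fails when \(a=b\) (take \(y-x\) a standard basis vector), but \(|(y-x)_a(y-x)_b|\le\|y-x\|^2\) holds and gives the same conclusion.
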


\begin{proof}
Substituting \(z_a(y)-z_a(x)=y_a-x_a\) gives
\[
\Gamma_\Delta(z_a,z_b)(x)
=
\frac{1}{2\tau}
\int
(y_a-x_a)(y_b-x_b)\,Q_\Delta(dy\mid x)
=
\bigl(G_\Delta(x)\bigr)_{ab}.
\]
For positive semidefiniteness, let \(v\in\mathbb R^d\). Then
\[
v^\top G_\Delta(x)v
=
\frac{1}{2\tau}
\int
\bigl(v^\top(y-x)\bigr)^2
\,Q_\Delta(dy\mid x)
\ge 0.
\]
\end{proof}

For the empirical operator,
\[
\bigl(\widehat G_\Delta(x_q)\bigr)_{ab}
=
\frac{1}{2\tau}
\sum_iw_{qi}
(y_{i,a}-x_{q,a})(y_{i,b}-x_{q,b}),
\]
and the same quadratic-form argument gives
\[
\widehat G_\Delta(x_q)\succeq 0
\]
at every evaluated source node.

\subsection{Source-centered decomposition}
\label{app:decomposition}

\begin{proposition}[Source-centered decomposition, restated]
\label{prop:app_decomposition}
Whenever the conditional second moment of \(Y-X\) exists,
\[
2G_\Delta(x)
=
C_\Delta(x)
+
\tau\,m_\Delta(x)m_\Delta(x)^\top.
\]
\end{proposition}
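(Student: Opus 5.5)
The plan is to apply the mean–covariance decomposition of a second moment to the displacement \(D=Y-X\), where \(Y\sim Q_\Delta(\cdot\mid x)\) and the source is held fixed at \(X=x\). Under this conditional law, \(x\) is a constant. So \(D\) is a random vector in \(\mathbb R^d\) with law given by pushing \(Q_\Delta(\cdot\mid x)\) forward under \(y\mapsto y-x\). The hypothesis that the conditional second moment exists means \(\int\|y-x\|^2\,Q_\Delta(dy\mid x)<\infty\). By Cauchy--Schwarz this also gives a finite first moment, so the conditional mean \(\mu(x)=\mathbb E[D\mid X=x]\) and the conditional covariance are well defined and finite.

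The first step is to rewrite the three objects in terms of \(D\). Directly from Definition~\ref{def:G_delta},
\[
2\tau\,G_\Delta(x)=\mathbb E[DD^\top\mid X=x].
\]
From the definitions of \(m_\Delta\) and \(C_\Delta\),
\[
\tau m_\Delta(x)=\mu(x),\qquad \tau C_\Delta(x)=\operatorname{Cov}(D\mid X=x)=\mathbb E[(D-\mu)(D-\mu)^\top\mid X=x].
\]

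The second step is to expand the covariance:
\[
\mathbb E[(D-\mu)(D-\mu)^\top]=\mathbb E[DD^\top]-\mathbb E[D]\mu^\top-\mu\,\mathbb E[D]^\top+\mu\mu^\top=\mathbb E[DD^\top]-\mu\mu^\top.
\]
This uses linearity of the conditional expectation entrywise, which is justified by the integrability noted above. Substituting the expressions from the first step gives
\[
2\tau G_\Delta(x)=\tau C_\Delta(x)+\tau^2 m_\Delta(x)m_\Delta(x)^\top.
\]
Dividing by \(\tau>0\) yields the claim.

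There is no real obstacle here. The only point needing care is the normalization bookkeeping. The definition of \(C_\Delta\) puts a single factor \(\tau^{-1}\) in front of the covariance, whereas the matching term \(\mu\mu^\top=\tau^2 m_\Delta m_\Delta^\top\) carries \(\tau^2\). This mismatch is why the coherent term in the statement appears as \(\tau\,m_\Delta m_\Delta^\top\) rather than \(m_\Delta m_\Delta^\top\). Finally, because \(x\) is fixed throughout, the identity holds pointwise for each \(x\) at which the moment exists, which for \(\rho\)-integrable moments means \(\rho\)-almost everywhere. It requires no Markov assumption, since it only uses the conditional law \(Q_\Delta(\cdot\mid x)\).
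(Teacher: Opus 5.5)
Your proof is correct and matches the paper's argument: condition on \(X=x\), set \(D=Y-X\), apply \(\mathbb E[DD^\top\mid X=x]=\operatorname{Cov}(D\mid X=x)+\mathbb E[D\mid X=x]\,\mathbb E[D\mid X=x]^\top\), and track the \(\tau\) factors. Your added remarks, that the finite second moment gives a finite first moment and where the \(\tau\,m_\Delta m_\Delta^\top\) scaling comes from, are accurate; they make explicit what the paper leaves implicit.
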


\begin{proof}
Condition on \(X=x\) and write \(D=Y-X\). The second-moment identity gives
\[
\mathbb E[DD^\top\mid X=x]
=
\operatorname{Cov}(D\mid X=x)
+
\mathbb E[D\mid X=x]\mathbb E[D\mid X=x]^\top.
\]
By definition,
\[
2G_\Delta(x)
=
\tau^{-1}\mathbb E[DD^\top\mid X=x],
\]
\[
C_\Delta(x)
=
\tau^{-1}\operatorname{Cov}(D\mid X=x),
\qquad
m_\Delta(x)
=
\tau^{-1}\mathbb E[D\mid X=x].
\]
Therefore
\[
2G_\Delta(x)
=
C_\Delta(x)
+
\tau^{-1}
\mathbb E[D\mid X=x]\mathbb E[D\mid X=x]^\top.
\]
Since \(\mathbb E[D\mid X=x]=\tau m_\Delta(x)\), the second term is
\[
\tau\,m_\Delta(x)m_\Delta(x)^\top.
\]
\end{proof}

The empirical identity
\[
2\widehat G_\Delta(x_q)
=
\widehat C_\Delta(x_q)
+
\tau\,\widehat m_\Delta(x_q)\widehat m_\Delta(x_q)^\top
\]
holds pointwise by the same argument with finite sums. Taking traces and
averaging against \(\rho\) gives Equation~\eqref{eq:trace_decomposition}.

\subsection{Deterministic transport corollary}
\label{app:deterministic_transport}

\begin{corollary}[Deterministic transport]
\label{cor:app_deterministic_transport}
If \(Y=T_\Delta(x)\) deterministically given \(X=x\), then
\[
G_\Delta(x)
=
\frac{1}{2\tau}
\bigl(T_\Delta(x)-x\bigr)
\bigl(T_\Delta(x)-x\bigr)^\top.
\]
In particular, \(G_\Delta(x)\ne 0\) whenever \(T_\Delta(x)\ne x\).
\end{corollary}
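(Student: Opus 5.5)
The argument is a direct specialization of Proposition~\ref{prop:app_decomposition} to a conditional law that is a point mass. If \(Y=T_\Delta(x)\) deterministically given \(X=x\), then \(Q_\Delta(dy\mid x)=\delta_{T_\Delta(x)}(dy)\). This measure trivially has finite second moment, since \(\|T_\Delta(x)-x\|<\infty\), so the decomposition applies at this \(x\).

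I will compute the two pieces of the decomposition separately. Write \(D=Y-X\). Conditionally on \(X=x\), \(D\) is the constant vector \(T_\Delta(x)-x\). Its conditional covariance therefore vanishes, giving \(C_\Delta(x)=0\). Its conditional mean is \(\mathbb E[D\mid X=x]=T_\Delta(x)-x\), giving \(m_\Delta(x)=\tau^{-1}(T_\Delta(x)-x)\). Substituting into \(2G_\Delta(x)=C_\Delta(x)+\tau\,m_\Delta(x)m_\Delta(x)^\top\) yields
\[
2G_\Delta(x)=\tau\cdot\tau^{-2}(T_\Delta(x)-x)(T_\Delta(x)-x)^\top .
\]
Dividing by two gives the stated formula. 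Alternatively, the formula follows in one line from Definition~\ref{def:G_delta}, by integrating \((y-x)(y-x)^\top\) against the Dirac mass. I would present that as the primary route and mention the decomposition only as a consistency check.

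For the nondegeneracy claim, set \(v=T_\Delta(x)-x\ne0\). Then
\[
\operatorname{tr}G_\Delta(x)=\frac{\|v\|^2}{2\tau}>0,
\]
so \(G_\Delta(x)\neq0\). Equivalently, \(v^\top G_\Delta(x)v=\|v\|^4/(2\tau)>0\). Here we use \(\tau=\Delta\,dt>0\).

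There is no real obstacle in this proof. The only point needing care is interpretive: the regular conditional law is determined only \(\rho\)-almost everywhere. The statement should therefore be read pointwise for the chosen version of \(Q_\Delta\), namely the Dirac version, or else \(\rho\)-a.e.
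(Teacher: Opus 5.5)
Your proposal is correct and matches the paper's proof: the paper sets the conditional covariance of \(Y-X\) to zero and its conditional mean to \(T_\Delta(x)-x\), substitutes these into the definition of \(G_\Delta\), and notes that a rank-one matrix \(vv^\top\) vanishes only when \(v=0\). Your trace argument for nondegeneracy is equivalent to that last step, and your remark that the pointwise statement is meant for the Dirac version of \(Q_\Delta\) (or \(\rho\)-a.e.) is a reasonable clarification that the paper leaves implicit.
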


\begin{proof}
Under deterministic transport,
\[
\operatorname{Cov}(Y-X\mid X=x)=0,
\qquad
\mathbb E[Y-X\mid X=x]=T_\Delta(x)-x.
\]
Substituting into Definition~\ref{def:G_delta} gives the stated formula.
The rank-one matrix is zero if and only if \(T_\Delta(x)=x\).
\end{proof}

This is a population statement for the exact conditional law. At finite
bandwidth, the dense empirical estimator can have nonzero
\(\widehat C_\Delta(x_q)\) even for deterministic underlying dynamics,
because \(\widehat Q_\Delta(\cdot\mid x_q)\) averages successors attached to
nearby source points. This is the finite-resolution spread of the empirical
operator.

\subsection{Abstract finite-state current and detailed balance}
\label{app:current_proofs}

This subsection records standard current identities for a row-stochastic
Markov matrix on a common finite state space. These identities apply directly
when \(P_\Delta\) is interpreted as a transition matrix whose row and column
indices refer to the same state nodes. For the empirical source-smoothing
matrix \(\widehat P_\Delta\), the same algebra defines a source-neighborhood
imbalance diagnostic.

\begin{proposition}[Skew-symmetry of \(J^\rho\)]
\label{prop:app_J_skew}
Let \(P_\Delta\) be a row-stochastic matrix on a common finite state space.
For any distribution \(\rho\) on that state space, define
\[
J^\rho_{pq}
=
\rho_p(P_\Delta)_{pq}
-
\rho_q(P_\Delta)_{qp}.
\]
Then
\[
(J^\rho)^\top=-J^\rho.
\]
\end{proposition}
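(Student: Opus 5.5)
The plan is to verify skew-symmetry entrywise, directly from the definition of \(J^\rho\). No property of \(P_\Delta\) beyond its entries being defined on a common index set is needed. Row-stochasticity and the normalization of \(\rho\) play no role in the algebra. They matter only for interpreting \(J^\rho\) as a probability current.

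Fix indices \(p,q\) in the common finite state space. By definition,
\[
(J^\rho)^\top_{pq}=J^\rho_{qp}=\rho_q(P_\Delta)_{qp}-\rho_p(P_\Delta)_{pq}.
\]
Comparing this with \(J^\rho_{pq}=\rho_p(P_\Delta)_{pq}-\rho_q(P_\Delta)_{qp}\), the two expressions contain the same two products with opposite signs. Hence \(J^\rho_{qp}=-J^\rho_{pq}\) for all \(p,q\), which is exactly \((J^\rho)^\top=-J^\rho\).

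In matrix form, write \(D_\rho=\operatorname{diag}(\rho)\). Then \(J^\rho=D_\rho P_\Delta-(D_\rho P_\Delta)^\top\), and a matrix minus its own transpose is always skew-symmetric. As a by-product, setting \(p=q\) gives \(J^\rho_{pp}=0\), so the diagonal vanishes.

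The only point that needs care is the common-state-space hypothesis. The expression \((P_\Delta)_{qp}\) must make sense with row index \(q\) and column index \(p\) drawn from the same set, so that swapping the indices is legitimate. This is exactly why the proposition is stated for a transition matrix on a common state space rather than for the source-to-successor interpretation of \(\widehat P_\Delta\). Beyond this remark, I expect no real obstacle: the argument is a one-line index swap.
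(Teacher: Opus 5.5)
Your proof is correct and matches the paper's argument, which is the same entrywise index swap giving \(J^\rho_{qp}=\rho_q(P_\Delta)_{qp}-\rho_p(P_\Delta)_{pq}=-J^\rho_{pq}\). Your matrix form \(J^\rho=D_\rho P_\Delta-(D_\rho P_\Delta)^\top\) and your remark that row-stochasticity and the normalization of \(\rho\) are not used are accurate additions.
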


\begin{proof}
By definition,
\[
J^\rho_{qp}
=
\rho_q(P_\Delta)_{qp}
-
\rho_p(P_\Delta)_{pq}
=
-J^\rho_{pq}.
\]
\end{proof}

\begin{proposition}[Detailed balance characterization]
\label{prop:app_detailed_balance}
Let \(P_\Delta\) be a row-stochastic matrix on a common finite state space,
and let \(\pi\) be stationary:
\[
\pi^\top P_\Delta=\pi^\top.
\]
Then \(J^\pi=0\) if and only if \(P_\Delta\) satisfies detailed balance
with respect to \(\pi\):
\[
\pi_p(P_\Delta)_{pq}
=
\pi_q(P_\Delta)_{qp}
\qquad
\text{for all }p,q.
\]
\end{proposition}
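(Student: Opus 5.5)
The proof is a direct algebraic unpacking of the two definitions. Detailed balance is just pointwise symmetry of the flux matrix \(F_{pq}=\pi_p(P_\Delta)_{pq}\), and \(J^\pi\) is by definition \(F-F^\top\). So both directions should follow entrywise from the definition of \(J^\rho\) in Proposition~\ref{prop:app_J_skew}, specialized to \(\rho=\pi\).

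For the forward direction, I assume \(J^\pi=0\). Then for every pair \((p,q)\),
\[
\pi_p(P_\Delta)_{pq}-\pi_q(P_\Delta)_{qp}=J^\pi_{pq}=0,
\]
which is exactly the detailed balance relation.

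For the converse, I assume detailed balance holds for all \(p,q\). Substituting it into the definition gives \(J^\pi_{pq}=0\) for every entry, so \(J^\pi=0\). The diagonal entries need no separate treatment: they vanish identically by the skew-symmetry of Proposition~\ref{prop:app_J_skew}, and they are also trivially consistent with detailed balance.

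Stationarity \(\pi^\top P_\Delta=\pi^\top\) is not needed for the equivalence itself. I would record this in a remark. Its role is interpretive: it ensures \(\pi\) is the invariant law, so that \(J^\pi\) is a genuine steady-state probability current. Detailed balance in turn implies stationarity, since summing over \(p\) gives
\[
\sum_p\pi_p(P_\Delta)_{pq}=\pi_q\sum_p(P_\Delta)_{qp}=\pi_q
\]
by row-stochasticity. I could also note that stationarity makes \(J^\pi\) divergence-free, \(\sum_q J^\pi_{pq}=\pi_p-(\pi^\top P_\Delta)_p=0\), so that \(J^\pi\) is a conserved flow. This is not required for the claim.

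There is no real obstacle here. The only point needing care is to state that the equivalence is entrywise over all ordered pairs \((p,q)\), and that the row and column indices of \(P_\Delta\) refer to the same state nodes, as the proposition assumes.
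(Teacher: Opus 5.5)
Your proof is correct and matches the paper's argument: both directions are just the entrywise statement that \(J^\pi_{pq}=\pi_p(P_\Delta)_{pq}-\pi_q(P_\Delta)_{qp}\) vanishes for all \(p,q\). Your remark that stationarity is not used in the equivalence is also accurate; it is in fact implied by detailed balance via row-stochasticity, and the paper's one-line proof likewise never invokes it.
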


\begin{proof}
The condition \(J^\pi_{pq}=0\) for all \(p,q\) is exactly
\[
\pi_p(P_\Delta)_{pq}
=
\pi_q(P_\Delta)_{qp}
\qquad
\text{for all }p,q,
\]
which is detailed balance.
\end{proof}

\paragraph{Current divergence.}
For any distribution \(\rho\), define
\[
\operatorname{div}_\rho(p)
=
\sum_q J^\rho_{pq}.
\]
Using row-stochasticity,
\[
\operatorname{div}_\rho(p)
=
\sum_q
\left[
\rho_p(P_\Delta)_{pq}
-
\rho_q(P_\Delta)_{qp}
\right]
=
\rho_p
-
(\rho^\top P_\Delta)_p.
\]
If \(\rho=\pi\) is stationary, then
\[
\operatorname{div}_\rho(p)=0
\qquad
\text{for all }p.
\]
For nonstationary \(\rho\), this divergence is the nodewise imbalance between
\(\rho\) and its one-step pushforward \(\rho^\top P_\Delta\).

\subsection{Skew-symmetry of coordinate circulation}
\label{app:circulation_proofs}

\begin{proposition}[Skew-symmetry of \(\mathcal W_\Delta^\rho\)]
\label{prop:app_W_skew}
The coordinate circulation
\[
\mathcal W_\Delta^\rho
=
\tau^{-1}
\left(
\mathbb E_\rho[\widetilde X\,\widetilde Y^\top]
-
\mathbb E_\rho[\widetilde Y\,\widetilde X^\top]
\right)
\]
is real and skew-symmetric for any source measure \(\rho\). Hence its
eigenvalues are purely imaginary and occur in conjugate pairs. If
\(Q_\Delta\) is a Markov transition law on a common state space with
stationary distribution \(\pi\), and if detailed balance holds with respect
to \(\pi\), then
\[
\mathcal W_\Delta^\pi=0.
\]
\end{proposition}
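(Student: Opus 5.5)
Skew-symmetry comes straight from the definition. Set \(M=\mathbb E_\rho[\widetilde X\,\widetilde Y^\top]\), which is a real \(d\times d\) matrix whenever the joint second moments are finite (we assume they are). Then \(\mathbb E_\rho[\widetilde Y\,\widetilde X^\top]=M^\top\), since transposition commutes with expectation. Hence \(\mathcal W_\Delta^\rho=\tau^{-1}(M-M^\top)\), and \((\mathcal W_\Delta^\rho)^\top=-\mathcal W_\Delta^\rho\).

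For the eigenvalue claim I would use the standard argument for real skew-symmetric \(W\). If \(Wv=\lambda v\) with \(v\in\mathbb C^d\setminus\{0\}\), then
\[
\lambda\|v\|^2=v^*Wv,
\qquad
\overline{v^*Wv}=v^*W^*v=v^*W^\top v=-v^*Wv .
\]
So \(v^*Wv\) is purely imaginary, and therefore so is \(\lambda\). Because \(W\) is real, its characteristic polynomial has real coefficients, so the nonzero eigenvalues come in conjugate pairs \(\pm i\omega\).

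For the detailed-balance statement, take \(\rho=\pi\) and write the joint law of \((X,Y)\) as \(\pi(dx)Q_\Delta(dy\mid x)\) on the common state space. Detailed balance means this joint measure is symmetric under the swap \((x,y)\mapsto(y,x)\), that is, \((X,Y)\overset{d}{=}(Y,X)\). In the finite case this is exactly Proposition~\ref{prop:app_detailed_balance}, and for general state spaces it is the measure-theoretic definition of reversibility. Stationarity makes \(X\) and \(Y\) both \(\pi\)-distributed. Since the normalized coordinates \(\widetilde X,\widetilde Y\) are obtained from \(X,Y\) by the same affine map \(u\mapsto(u-c)/s\), the pair \((\widetilde X,\widetilde Y)\) is also exchangeable. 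Hence \(\mathbb E[\widetilde X\widetilde Y^\top]=\mathbb E[\widetilde Y\widetilde X^\top]\), i.e.\ \(M=M^\top\), and so \(\mathcal W_\Delta^\pi=0\).

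The main point needing care is precisely this use of a common centering: the same affine normalization must be applied to source and successor. If each marginal were centered separately, exchangeability would still give the result, since under stationarity both marginals equal \(\pi\) and therefore have the same mean. A second minor point is the integrability of \(\widetilde X\widetilde Y^\top\), which follows from Cauchy--Schwarz provided \(\pi\) has finite second moments; I would state this as a standing assumption.
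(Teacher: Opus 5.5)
Your proposal is correct and follows the paper's proof: skew-symmetry by transposing the definition, the standard spectral fact for real skew-symmetric matrices (which you derive via $v^*Wv$ where the paper simply invokes normality), and exchangeability of the stationary two-time law $\pi(dx)Q_\Delta(dy\mid x)$ under detailed balance, giving $\mathbb E_\pi[\widetilde X\widetilde Y^\top]=\mathbb E_\pi[\widetilde Y\widetilde X^\top]$. Your remarks on applying a common affine normalization to source and successor and on finite second moments are sensible additions that the paper leaves implicit.
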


\begin{proof}
Taking the transpose gives
\[
(\mathcal W_\Delta^\rho)^\top
=
\tau^{-1}
\left(
\mathbb E_\rho[\widetilde Y\,\widetilde X^\top]
-
\mathbb E_\rho[\widetilde X\,\widetilde Y^\top]
\right)
=
-\mathcal W_\Delta^\rho.
\]
Thus \(\mathcal W_\Delta^\rho\) is real and skew-symmetric. A real
skew-symmetric matrix is normal and has purely imaginary eigenvalues,
occurring in conjugate pairs.

For the reversible case, detailed balance gives exchangeability of the
stationary two-time law:
\[
\pi(dx)Q_\Delta(dy\mid x)
=
\pi(dy)Q_\Delta(dx\mid y).
\]
Therefore
\[
\mathbb E_\pi[\widetilde X\,\widetilde Y^\top]
=
\mathbb E_\pi[\widetilde Y\,\widetilde X^\top],
\]
and hence
\[
\mathcal W_\Delta^\pi=0.
\]
\end{proof}

The Frobenius norm
\[
\|\mathcal W_\Delta^\rho\|_F,
\]
the maximum imaginary eigenvalue magnitude
\[
\omega_{\max}
=
\max_\lambda
|\operatorname{Im}\lambda(\mathcal W_\Delta^\rho)|,
\]
and the relative circulation
\[
r_{\rm circ}
=
\frac{\|\mathcal W_\Delta^\rho\|_F}
     {\operatorname{tr}(\bar G_\Delta^\rho)}
\]
are invariant under orthogonal coordinate changes. Under the center-RMS
normalization convention of Appendix~\ref{app:normalization}, the relative
circulation is also invariant to global scalar rescaling whenever the
denominator is nonzero.

\subsection{Source current and coordinate circulation}
\label{app:source_current_note}

The source current \(J^\rho\) and the coordinate circulation
\(\mathcal W_\Delta^\rho\) are distinct finite-lag diagnostics.

The current \(J^\rho\) is a node-level object for a finite-state Markov
matrix. It records pairwise imbalance
\[
\rho_p(P_\Delta)_{pq}-\rho_q(P_\Delta)_{qp}
\]
and gives the detailed-balance characterization above. When applied to the
empirical source-smoothing matrix, it measures imbalance between source
neighborhood weights.

The coordinate circulation \(\mathcal W_\Delta^\rho\) is a coordinate-level
antisymmetric moment between sources and successors. It is the object used
for representation-geometric reporting. In the linear-Gaussian model of
Theorem~\ref{thm:lg_closed_form}, it has the closed form
\[
\mathcal W_\Delta
=
\tau^{-1}
\left(
\Sigma A_\Delta^\top
-
A_\Delta\Sigma
\right).
\]

For Markov chains on a unified finite state space, coordinate-level
circulations can be obtained by contracting node currents with coordinate
observables. In the empirical source--successor construction used here,
source indices and successor observations are paired but are not identified
as the same finite state nodes by the source smoother. We therefore keep
\(J^\rho\) as a finite-state imbalance object and
\(\mathcal W_\Delta^\rho\) as the coordinate-level circulation statistic.

\section{Proofs of Structural Properties of Finite-Lag Geometry}
\label{app:structural_proofs}

This appendix proves the structural theorems of Section~\ref{sec:theory}.
Notation follows Appendix~\ref{app:operator_construction}. We write
\[
\mathcal Z=\{(x_i,y_i)\}_{i=1}^n
\]
for a paired source--successor cloud and
\[
\|\mathcal Z-\widetilde{\mathcal Z}\|_\infty
=
\max_i\max\{\|x_i-\widetilde x_i\|,\|y_i-\widetilde y_i\|\}
\]
for the perturbation size.

\subsection{Affine covariance}
\label{app:covariance_proof}

\begin{theorem}[Affine covariance and metric dependence, restated]
\label{thm:app_covariance}
Let \(\phi(x)=Ax+b\), with \(A\in\mathbb R^{d\times d}\) invertible, and
let \(Q'_\Delta\) be the pushed-forward conditional law
\[
Q'_\Delta(B\mid x')
=
Q_\Delta(\phi^{-1}(B)\mid \phi^{-1}(x')).
\]
Let \(\rho'=\phi_\#\rho\) be the pushed-forward source measure. The
following covariance statements are for the unnormalized tensorial
quantities. For \(\mathcal W_\Delta^\rho\), centered coordinates are used
before any RMS renormalization.

Then, for \(x'=\phi(x)\),
\[
m'_\Delta(x')=A\,m_\Delta(x),
\qquad
C'_\Delta(x')=A\,C_\Delta(x)\,A^\top,
\]
\[
G'_\Delta(x')=A\,G_\Delta(x)\,A^\top,
\qquad
(\mathcal W'_\Delta)^{\rho'}
=
A\,\mathcal W_\Delta^\rho\,A^\top.
\]
The decomposition identity is coordinate-covariant:
\[
2G'_\Delta(x')
=
C'_\Delta(x')
+
\tau\,m'_\Delta(x')m'_\Delta(x')^\top.
\]

For \(M\succ0\), define the metric-weighted spread and coherent displacement
summaries by
\[
S_C^M
=
\int \operatorname{tr}\!\bigl(MC_\Delta(x)\bigr)\,\rho(dx),
\qquad
S_m^M
=
\tau\int m_\Delta(x)^\top M m_\Delta(x)\,\rho(dx),
\]
and
\[
F^M
=
\frac{S_m^M}{S_C^M+S_m^M}
\]
whenever the denominator is nonzero. Under the metric transformation
\[
M'=A^{-\top}MA^{-1},
\]
one has
\[
S_{C'}^{M'}=S_C^M,
\qquad
S_{m'}^{M'}=S_m^M,
\qquad
F^{M'}=F^M.
\]
\end{theorem}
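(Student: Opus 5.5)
The plan is a change of variables under the push-forward, applied first to the displacement random variable and then to each moment in turn. Fix \(x\) and set \(x'=\phi(x)\). By definition of \(Q'_\Delta\), if \(Y\sim Q_\Delta(\cdot\mid x)\) then \(Y'=\phi(Y)=AY+b\) has law \(Q'_\Delta(\cdot\mid x')\). The key observation is that the displacement transforms linearly, with the translation cancelling:
\[
D'=Y'-x'=A(Y-x)=AD.
\]
Taking conditional expectation gives \(m'_\Delta(x')=\tau^{-1}\mathbb E[AD]=A\,m_\Delta(x)\). Since covariance of a linear image satisfies \(\operatorname{Cov}(AD)=A\operatorname{Cov}(D)A^\top\), we get \(C'_\Delta(x')=A\,C_\Delta(x)A^\top\). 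Similarly, \(\mathbb E[D'D'^\top]=A\,\mathbb E[DD^\top]A^\top\) yields \(G'_\Delta(x')=A\,G_\Delta(x)A^\top\). The covariant form of the decomposition then follows in either of two ways: conjugate the identity of Proposition~\ref{prop:app_decomposition} by \(A\), or apply that proposition directly to \(Q'_\Delta\). Both routes require only that the conditional second moments are finite, and finiteness is preserved by the linear map \(A\).

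For the circulation, we use centered coordinates \(\widetilde X=X-\mathbb E X\) and \(\widetilde Y=Y-\mathbb E Y\), taken under \(\rho\) and the induced successor marginal, with no RMS rescaling. Under \(\phi\), with \(X'=\phi(X)\sim\rho'\) and \(Y'=\phi(Y)\), the translation \(b\) disappears after centering, so \(\widetilde X'=A\widetilde X\) and \(\widetilde Y'=A\widetilde Y\). Then
\[
\mathbb E[\widetilde X'\widetilde Y'^\top]=A\,\mathbb E[\widetilde X\widetilde Y^\top]A^\top,
\]
and likewise for the transposed term. Subtracting the two gives \((\mathcal W'_\Delta)^{\rho'}=A\,\mathcal W_\Delta^\rho A^\top\). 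Integrals against \(\rho'\) reduce to integrals against \(\rho\) by the push-forward identity \(\int f\,d\rho'=\int f\circ\phi\,d\rho\).

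For the metric summaries, we use cyclicity of the trace:
\[
\operatorname{tr}(M'C'_\Delta(x'))=\operatorname{tr}(A^{-\top}MA^{-1}AC_\Delta A^\top)=\operatorname{tr}(A^{-\top}MC_\Delta A^\top)=\operatorname{tr}(MC_\Delta(x)).
\]
Integrating against \(\rho'=\phi_\#\rho\) gives \(S_{C'}^{M'}=S_C^M\). For the drift term, \(m'^\top M'm'=m^\top A^\top A^{-\top}MA^{-1}Am=m^\top Mm\), so \(S_{m'}^{M'}=S_m^M\). The fraction \(F^{M'}=F^M\) is then immediate whenever the common denominator is nonzero. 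Note also that \(M'\succ0\) because \(A\) is invertible.

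No step is deep. The main point needing care is bookkeeping: the covariance statements hold for the unnormalized quantities, because RMS renormalization would reintroduce a data-dependent scalar. One must also check that the centering used in \(\mathcal W\) is the push-forward-consistent one, namely the means under \(\rho'\) and its successor marginal. I would state this explicitly before computing.
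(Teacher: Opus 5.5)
Your proposal is correct and follows essentially the same route as the paper's proof. The displacement identity \(D'=AD\) gives the transformation laws for \(m_\Delta\), \(C_\Delta\), and \(G_\Delta\). Centering removes the translation \(b\), so \(\widetilde X'=A\widetilde X\) and \(\widetilde Y'=A\widetilde Y\) for the circulation. Cyclicity of the trace, together with integration against \(\rho'=\phi_\#\rho\), then gives invariance of \(S_C^M\), \(S_m^M\), and \(F^M\).

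Your extra remarks are correct details that the paper leaves implicit: \(M'\succ0\) because \(A\) is invertible, and the centering points must transform equivariantly under \(\phi\).
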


\begin{proof}
Let \(D=Y-X\). Under the affine change of coordinates,
\[
D'=Y'-X'=A(Y-X)=AD.
\]
Therefore
\[
\mathbb E[D'\mid X'=x']
=
A\,\mathbb E[D\mid X=x],
\]
and
\[
\operatorname{Cov}(D'\mid X'=x')
=
A\,\operatorname{Cov}(D\mid X=x)\,A^\top.
\]
Dividing by \(\tau\) gives
\[
m'_\Delta(x')=A\,m_\Delta(x),
\qquad
C'_\Delta(x')=A\,C_\Delta(x)\,A^\top.
\]
Similarly,
\[
G'_\Delta(x')
=
\frac{1}{2\tau}
\mathbb E[D'D'^\top\mid X'=x']
=
\frac{1}{2\tau}
A\,\mathbb E[DD^\top\mid X=x]\,A^\top
=
A\,G_\Delta(x)\,A^\top.
\]
Substituting these identities into the source-centered decomposition gives
\[
2G'_\Delta(x')
=
C'_\Delta(x')
+
\tau\,m'_\Delta(x')m'_\Delta(x')^\top.
\]

For circulation, let \(\widetilde X,\widetilde Y\) denote centered
coordinates in the original system and \(\widetilde X',\widetilde Y'\)
centered coordinates in the transformed system, before RMS renormalization.
Centering removes the translation \(b\), so
\[
\widetilde X'=A\widetilde X,
\qquad
\widetilde Y'=A\widetilde Y.
\]
Thus
\[
(\mathcal W'_\Delta)^{\rho'}
=
\tau^{-1}
\left(
\mathbb E_{\rho'}[\widetilde X'\widetilde Y'^\top]
-
\mathbb E_{\rho'}[\widetilde Y'\widetilde X'^\top]
\right)
\]
\[
=
\tau^{-1}
\left(
A\,\mathbb E_\rho[\widetilde X\widetilde Y^\top]\,A^\top
-
A\,\mathbb E_\rho[\widetilde Y\widetilde X^\top]\,A^\top
\right)
=
A\,\mathcal W_\Delta^\rho\,A^\top.
\]

For the metric-weighted summaries, cyclicity of trace gives
\[
\operatorname{tr}\!\bigl(M'C'_\Delta(x')\bigr)
=
\operatorname{tr}\!\bigl(
A^{-\top}MA^{-1}AC_\Delta(x)A^\top
\bigr)
=
\operatorname{tr}\!\bigl(MC_\Delta(x)\bigr).
\]
Likewise,
\[
m'_\Delta(x')^\top M'm'_\Delta(x')
=
(Am_\Delta(x))^\top A^{-\top}MA^{-1}(Am_\Delta(x))
=
m_\Delta(x)^\top M m_\Delta(x).
\]
Integrating against the pushed-forward source measure gives
\[
S_{C'}^{M'}=S_C^M,
\qquad
S_{m'}^{M'}=S_m^M.
\]
The fraction \(F^M=S_m^M/(S_C^M+S_m^M)\) is invariant because both numerator
and denominator are invariant.
\end{proof}

The Euclidean trace convention \(M=I\) is invariant under translations and
orthogonal transformations. Under global scalar rescaling, center-RMS
normalization removes the scalar factor before Euclidean traces are reported.
Anisotropic reparameterizations require the metric correction above.

\subsection{Stability of the dense empirical operator}
\label{app:stability_proof}

The proof of Theorem~\ref{thm:stability} uses three elementary bounds, one
for squared distances, one for Gaussian kernel entries, and one for
row-normalization.

\begin{lemma}[Squared-distance perturbation]
\label{lem:app_distance}
Let \(\mathcal Z,\widetilde{\mathcal Z}\) satisfy
\[
\max_i
\{
\|x_i\|,\|y_i\|,\|\widetilde x_i\|,\|\widetilde y_i\|
\}
\le R.
\]
Then, for all \(i,j\),
\[
\left|
\|x_i-x_j\|^2-\|\widetilde x_i-\widetilde x_j\|^2
\right|
\le
8R\,\|\mathcal Z-\widetilde{\mathcal Z}\|_\infty.
\]
The same bound holds for squared distances formed from any corresponding
source or successor coordinates.
\end{lemma}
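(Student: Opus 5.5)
The plan is to factor the difference of squares and bound each factor separately. Write \(u=x_i-x_j\) and \(\widetilde u=\widetilde x_i-\widetilde x_j\). Then
\[
\|u\|^2-\|\widetilde u\|^2=(u-\widetilde u)^\top(u+\widetilde u),
\]
and Cauchy--Schwarz gives
\[
\bigl|\|u\|^2-\|\widetilde u\|^2\bigr|\le\|u-\widetilde u\|\,\|u+\widetilde u\|.
\]

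\emph{First factor.} By the triangle inequality,
\[
\|u-\widetilde u\|\le\|x_i-\widetilde x_i\|+\|x_j-\widetilde x_j\|\le 2\|\mathcal Z-\widetilde{\mathcal Z}\|_\infty,
\]
using the definition of the \(\ell_\infty\)-type cloud distance.

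\emph{Second factor.} Since all four points \(x_i,x_j,\widetilde x_i,\widetilde x_j\) lie in the ball of radius \(R\),
\[
\|u+\widetilde u\|\le\|x_i\|+\|x_j\|+\|\widetilde x_i\|+\|\widetilde x_j\|\le 4R.
\]

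Multiplying the two factor bounds yields the constant \(2\cdot4R=8R\), as claimed.

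For the final sentence of the lemma, the same argument applies verbatim to any pair of points drawn from the source or successor coordinates, for example \(\|y_i-x_q\|^2\) against \(\|\widetilde y_i-\widetilde x_q\|^2\). The only inputs are that each coordinate moves by at most \(\|\mathcal Z-\widetilde{\mathcal Z}\|_\infty\), which holds because the norm takes a max over both the \(x\) and \(y\) components, and that every point has norm at most \(R\).

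There is no real obstacle here. The only point needing care is that the hypothesis must bound the norms of \emph{both} clouds by \(R\), not just \(\mathcal Z\). The lemma states this explicitly, and it is what makes \(\|u+\widetilde u\|\le4R\) valid.
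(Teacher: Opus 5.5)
Your proof is correct and follows the paper's argument: the same factorization $\|a\|^2-\|b\|^2=\langle a+b,a-b\rangle$ with Cauchy--Schwarz, then the bounds $\|a-b\|\le 2\|\mathcal Z-\widetilde{\mathcal Z}\|_\infty$ and $\|a+b\|\le 4R$. Your remark on mixed source--successor pairs spells out the lemma's final sentence, which the paper states without further argument.
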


\begin{proof}
Let
\[
a=x_i-x_j,
\qquad
b=\widetilde x_i-\widetilde x_j.
\]
Then
\[
\|a\|^2-\|b\|^2
=
\langle a+b,a-b\rangle,
\]
so
\[
\left|\|a\|^2-\|b\|^2\right|
\le
\|a+b\|\,\|a-b\|.
\]
The boundedness assumption gives
\[
\|a\|\le2R,
\qquad
\|b\|\le2R,
\qquad
\|a+b\|\le4R.
\]
Also,
\[
\|a-b\|
=
\|(x_i-\widetilde x_i)-(x_j-\widetilde x_j)\|
\le
2\|\mathcal Z-\widetilde{\mathcal Z}\|_\infty.
\]
Multiplying the two bounds gives the claim.
\end{proof}

\begin{lemma}[Kernel perturbation]
\label{lem:app_kernel}
For the Gaussian kernel
\[
K_{qi}=k_\varepsilon(x_q,x_i)
=
\exp\!\left(-\frac{\|x_q-x_i\|^2}{4\varepsilon}\right),
\]
one has
\[
|K_{qi}(\mathcal Z)-K_{qi}(\widetilde{\mathcal Z})|
\le
\frac{2R}{\varepsilon}\,
\|\mathcal Z-\widetilde{\mathcal Z}\|_\infty.
\]
\end{lemma}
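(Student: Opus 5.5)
Write $s=\|x_q-x_i\|^2$ and $\tilde s=\|\widetilde x_q-\widetilde x_i\|^2$, so that $K_{qi}(\mathcal Z)=g(s)$ and $K_{qi}(\widetilde{\mathcal Z})=g(\tilde s)$ with $g(u)=\exp(-u/4\varepsilon)$. The argument has two steps: a one-dimensional mean value estimate for $g$, followed by the squared-distance bound of Lemma~\ref{lem:app_distance}.

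\textbf{Step 1 (Lipschitz constant of $g$).} On $u\ge0$ we have $|g'(u)|=\frac{1}{4\varepsilon}e^{-u/4\varepsilon}\le\frac{1}{4\varepsilon}$. Both $s$ and $\tilde s$ are nonnegative, so the mean value theorem on the segment between them gives
\[
|g(s)-g(\tilde s)|\le\frac{1}{4\varepsilon}\,|s-\tilde s|.
\]
Equivalently, $t\mapsto e^{-t}$ is $1$-Lipschitz on $[0,\infty)$.

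\textbf{Step 2 (distance perturbation).} Lemma~\ref{lem:app_distance} applies under the standing assumption that both clouds lie in the ball of radius $R$. It gives
\[
|s-\tilde s|\le 8R\,\|\mathcal Z-\widetilde{\mathcal Z}\|_\infty .
\]

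\textbf{Conclusion.} Combining the two steps,
\[
|K_{qi}(\mathcal Z)-K_{qi}(\widetilde{\mathcal Z})|\le \frac{8R}{4\varepsilon}\,\|\mathcal Z-\widetilde{\mathcal Z}\|_\infty=\frac{2R}{\varepsilon}\,\|\mathcal Z-\widetilde{\mathcal Z}\|_\infty,
\]
which is exactly the stated constant.

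The only point that needs care is the global bound on $|g'|$. It holds because squared distances are nonnegative, so the derivative never exceeds its value at $0$. No factor $\exp(R^2/\varepsilon)$ enters here. That factor appears later, when the row normalization $D_q$ is bounded below in the proof of Theorem~\ref{thm:stability}. There is no genuine obstacle: the step that depends on a hypothesis is the appeal to Lemma~\ref{lem:app_distance}, which needs the boundedness assumption on both clouds.
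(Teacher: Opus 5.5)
Your proof is correct and is the paper's argument: you bound $|g'(u)|\le 1/(4\varepsilon)$ on $u\ge 0$, apply the mean value theorem, and combine this with the $8R\,\|\mathcal Z-\widetilde{\mathcal Z}\|_\infty$ squared-distance bound of Lemma~\ref{lem:app_distance} to get $2R/\varepsilon$. You are also right that the radius-$R$ hypothesis on both clouds is what permits the appeal to Lemma~\ref{lem:app_distance}, since the lemma's statement leaves it implicit.
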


\begin{proof}
The function
\[
u\mapsto e^{-u/(4\varepsilon)}
\]
has derivative
\[
-\frac{1}{4\varepsilon}e^{-u/(4\varepsilon)},
\]
whose absolute value is at most \(1/(4\varepsilon)\) for \(u\ge0\).
By Lemma~\ref{lem:app_distance},
\[
|K_{qi}(\mathcal Z)-K_{qi}(\widetilde{\mathcal Z})|
\le
\frac{1}{4\varepsilon}
\left|
\|x_q-x_i\|^2-\|\widetilde x_q-\widetilde x_i\|^2
\right|
\le
\frac{2R}{\varepsilon}
\|\mathcal Z-\widetilde{\mathcal Z}\|_\infty.
\]
\end{proof}

\begin{lemma}[Row-normalization Lipschitz bound]
\label{lem:app_row_norm}
Let \(W,\widetilde W\in\mathbb R_{\ge0}^{n\times n}\) have row sums
\[
D_q=\sum_jW_{qj}\ge n\beta,
\qquad
\widetilde D_q=\sum_j\widetilde W_{qj}\ge n\beta
\]
for some \(\beta>0\). Let
\[
w_{qi}=\frac{W_{qi}}{D_q},
\qquad
\widetilde w_{qi}=\frac{\widetilde W_{qi}}{\widetilde D_q}.
\]
If
\[
\max_{q,i}|W_{qi}-\widetilde W_{qi}|\le\delta,
\]
then, for every \(q\),
\[
\sum_i|w_{qi}-\widetilde w_{qi}|
\le
\frac{2\delta}{\beta}.
\]
\end{lemma}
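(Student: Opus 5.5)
The plan is to add and subtract a mixed term and then use the row-sum lower bound to control both pieces. For a fixed row \(q\), write
\[
w_{qi}-\widetilde w_{qi}
=
\frac{W_{qi}-\widetilde W_{qi}}{D_q}
+
\widetilde W_{qi}\left(\frac{1}{D_q}-\frac{1}{\widetilde D_q}\right),
\]
which is an exact identity since \(W_{qi}/D_q-\widetilde W_{qi}/D_q+\widetilde W_{qi}/D_q-\widetilde W_{qi}/\widetilde D_q\) telescopes. Then take absolute values and sum over \(i\).

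The first term is handled directly. Summing over the \(n\) entries of the row gives
\[
\sum_i\frac{|W_{qi}-\widetilde W_{qi}|}{D_q}\le\frac{n\delta}{n\beta}=\frac{\delta}{\beta}.
\]
For the second term, factor out the common difference of reciprocals:
\[
\sum_i\widetilde W_{qi}\left|\frac{1}{D_q}-\frac{1}{\widetilde D_q}\right|
=
\widetilde D_q\frac{|\widetilde D_q-D_q|}{D_q\widetilde D_q}
=
\frac{|D_q-\widetilde D_q|}{D_q}.
\]
Nonnegativity of \(\widetilde W\) is what lets \(\sum_i\widetilde W_{qi}\) equal \(\widetilde D_q\) without absolute-value loss. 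The triangle inequality gives \(|D_q-\widetilde D_q|\le\sum_i|W_{qi}-\widetilde W_{qi}|\le n\delta\). Combined with \(D_q\ge n\beta\), the second term is therefore at most \(\delta/\beta\).

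Adding the two contributions gives \(2\delta/\beta\), as claimed. There is no real obstacle here. The one point needing care is to normalize the row-sum lower bound as \(n\beta\), so that the factor \(n\) from summing \(n\) entrywise perturbations cancels. This choice is what lets the lemma combine later with the kernel lower bound \(K_{qi}\ge\exp(-R^2/\varepsilon)\), taking \(\beta=\exp(-R^2/\varepsilon)\), together with Lemma~\ref{lem:app_kernel} in the proof of Theorem~\ref{thm:stability}.
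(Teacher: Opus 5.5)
Your proposal is correct and matches the paper's proof essentially step for step. Both use the same add-and-subtract split, bound the first sum by \(n\delta/D_q\), collapse the second sum to \(|D_q-\widetilde D_q|/D_q\le n\delta/D_q\) using \(\sum_i\widetilde W_{qi}=\widetilde D_q\), and then apply \(D_q\ge n\beta\).
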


\begin{proof}
Fix \(q\) and write
\[
W_i=W_{qi},
\qquad
\widetilde W_i=\widetilde W_{qi},
\qquad
D=D_q,
\qquad
\widetilde D=\widetilde D_q.
\]
Then
\[
\sum_i|W_i-\widetilde W_i|\le n\delta,
\qquad
|D-\widetilde D|\le n\delta.
\]
For each \(i\),
\[
\left|
\frac{W_i}{D}
-
\frac{\widetilde W_i}{\widetilde D}
\right|
\le
\frac{|W_i-\widetilde W_i|}{D}
+
\widetilde W_i
\left|
\frac{1}{D}-\frac{1}{\widetilde D}
\right|
\]
\[
=
\frac{|W_i-\widetilde W_i|}{D}
+
\widetilde W_i
\frac{|D-\widetilde D|}{D\widetilde D}.
\]
Summing over \(i\) gives
\[
\sum_i|w_{qi}-\widetilde w_{qi}|
\le
\frac{n\delta}{D}
+
\frac{|D-\widetilde D|}{D\widetilde D}
\sum_i\widetilde W_i.
\]
Since \(\sum_i\widetilde W_i=\widetilde D\),
\[
\sum_i|w_{qi}-\widetilde w_{qi}|
\le
\frac{n\delta}{D}
+
\frac{|D-\widetilde D|}{D}
\le
\frac{2n\delta}{D}.
\]
Using \(D\ge n\beta\) yields
\[
\sum_i|w_{qi}-\widetilde w_{qi}|
\le
\frac{2\delta}{\beta}.
\]
\end{proof}

\begin{theorem}[Dense stability, restated]
\label{thm:app_stability}
Fix bandwidth \(\varepsilon>0\) and radius \(R>0\). On the set of paired
clouds satisfying
\[
\max_i\{\|x_i\|,\|y_i\|\}\le R,
\]
the dense source-smoothing operator satisfies
\[
\left\|
\widehat P_\Delta(\mathcal Z)
-
\widehat P_\Delta(\widetilde{\mathcal Z})
\right\|_{\infty\to\infty}
\le
\frac{4R}{\varepsilon}
\exp\!\left(\frac{R^2}{\varepsilon}\right)
\|\mathcal Z-\widetilde{\mathcal Z}\|_\infty
\]
whenever both clouds lie in this bounded set.
\end{theorem}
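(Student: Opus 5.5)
The plan is to chain the three lemmas just proved. First, recall that for a row-stochastic-type matrix the \(\infty\to\infty\) operator norm equals the maximal absolute row sum. So
\[
\bigl\|\widehat P_\Delta(\mathcal Z)-\widehat P_\Delta(\widetilde{\mathcal Z})\bigr\|_{\infty\to\infty}
=\max_q\sum_i|w_{qi}-\widetilde w_{qi}|.
\]
This reduces the theorem to a uniform bound on the row-wise \(\ell^1\) discrepancy of the normalized weights, which is exactly what Lemma~\ref{lem:app_row_norm} controls.

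Next we supply the two inputs that Lemma~\ref{lem:app_row_norm} needs.

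\textbf{Entrywise kernel perturbation.} Lemma~\ref{lem:app_kernel}, which rests on Lemma~\ref{lem:app_distance}, gives
\[
\delta=\max_{q,i}|K_{qi}-\widetilde K_{qi}|\le \tfrac{2R}{\varepsilon}\|\mathcal Z-\widetilde{\mathcal Z}\|_\infty .
\]
Both clouds are assumed to lie in the radius-\(R\) ball, as the hypotheses of that lemma require.

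\textbf{Lower bound on row sums.} Since \(\|x_q-x_i\|\le 2R\), every kernel entry satisfies
\[
K_{qi}\ge \exp(-4R^2/(4\varepsilon))=\exp(-R^2/\varepsilon).
\]
Hence \(D_q\ge n\beta\) and \(\widetilde D_q\ge n\beta\) with \(\beta=\exp(-R^2/\varepsilon)\). This is the positive kernel lower bound the proof sketch refers to; the factor of \(4\) in the kernel denominator is what makes the exponent come out as \(R^2/\varepsilon\) rather than \(4R^2/\varepsilon\).

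Finally, plug these into Lemma~\ref{lem:app_row_norm}:
\[
\sum_i|w_{qi}-\widetilde w_{qi}|\le \frac{2\delta}{\beta}\le \frac{4R}{\varepsilon}e^{R^2/\varepsilon}\|\mathcal Z-\widetilde{\mathcal Z}\|_\infty .
\]
This holds for every \(q\), and taking the maximum over \(q\) gives the claimed bound.

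The main point requiring care is the lower bound on \(D_q\). The weights depend only on source coordinates, so successor perturbations do not enter the operator at all. The \(\|\cdot\|_\infty\) on paired clouds dominates the source perturbation, so the bound remains valid (if loose). I would also check that the constant-matching in Lemma~\ref{lem:app_row_norm} uses the uniform per-entry lower bound, rather than just \(K_{qq}=1\), since only the former scales with \(n\) as needed.
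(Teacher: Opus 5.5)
Your proposal is correct and follows the paper's proof step for step. It uses the entrywise kernel bound \(\delta=\tfrac{2R}{\varepsilon}\|\mathcal Z-\widetilde{\mathcal Z}\|_\infty\), the uniform per-entry lower bound \(\beta=\exp(-R^2/\varepsilon)\) giving row sums at least \(n\beta\) for both clouds, the row-normalization lemma giving \(2\delta/\beta\), and the maximal absolute row sum as the \(\infty\to\infty\) norm. Your side remarks also hold: that identification of the norm is valid for any matrix, not only row-stochastic ones, and the \(n\)-free constant does require the per-entry lower bound rather than just \(K_{qq}=1\).
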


\begin{proof}
By Lemma~\ref{lem:app_kernel},
\[
|K_{qi}(\mathcal Z)-K_{qi}(\widetilde{\mathcal Z})|
\le
\delta
:=
\frac{2R}{\varepsilon}
\|\mathcal Z-\widetilde{\mathcal Z}\|_\infty.
\]
On the radius-\(R\) set,
\[
\|x_q-x_i\|\le2R,
\]
so each Gaussian kernel entry is bounded below by
\[
K_{qi}(\mathcal Z)
\ge
\exp\!\left(-\frac{(2R)^2}{4\varepsilon}\right)
=
\exp\!\left(-\frac{R^2}{\varepsilon}\right)
=:\beta.
\]
The same lower bound holds for \(\widetilde K_{qi}\). Thus every row sum is
at least \(n\beta\). Applying Lemma~\ref{lem:app_row_norm} gives, for each
row \(q\),
\[
\sum_i
|w_{qi}(\mathcal Z)-w_{qi}(\widetilde{\mathcal Z})|
\le
\frac{2\delta}{\beta}
=
\frac{4R}{\varepsilon}
\exp\!\left(\frac{R^2}{\varepsilon}\right)
\|\mathcal Z-\widetilde{\mathcal Z}\|_\infty.
\]
Taking the maximum row sum gives the \(\infty\to\infty\) operator norm bound.
\end{proof}

\paragraph{Stability of derived observables.}
The empirical observables
\[
\widehat m_\Delta,\quad
\widehat G_\Delta,\quad
\widehat C_\Delta,\quad
\widehat{\mathcal W}_\Delta^\rho,
\]
and their scalar trace summaries inherit Lipschitz dependence on
\(\mathcal Z\) on bounded sets.

For example,
\[
\widehat m_\Delta(x_q)
=
\tau^{-1}\left[
(\widehat P_\Delta Y)_q-x_q
\right],
\]
where \(Y\) is the matrix of successor coordinates. The term
\((\widehat P_\Delta Y)_q\) is Lipschitz because both
\(\widehat P_\Delta\) and \(Y\) are Lipschitz in the paired cloud on the
bounded set, and the term \(-x_q\) is directly Lipschitz.

Similarly,
\[
\widehat G_\Delta(x_q)
=
\frac{1}{2\tau}
\sum_iw_{qi}(y_i-x_q)(y_i-x_q)^\top.
\]
Each entry is a row-weighted sum of bounded quadratic functions of the
coordinates. Perturbations enter through the weights and through the
coordinate factors, both linearly in
\(\|\mathcal Z-\widetilde{\mathcal Z}\|_\infty\) on bounded sets. The same
argument applies to \(\widehat C_\Delta\), using either its covariance form
or the decomposition
\[
\widehat C_\Delta
=
2\widehat G_\Delta
-
\tau\,\widehat m_\Delta\widehat m_\Delta^\top.
\]
The smoothed empirical circulation
\[
\widehat{\mathcal W}_\Delta^\rho
=
\tau^{-1}
\sum_q\rho_q\sum_iw_{qi}
\left(
\widetilde x_q\widetilde y_i^\top
-
\widetilde y_i\widetilde x_q^\top
\right)
\]
is a weighted bilinear coordinate moment and is Lipschitz by the same
bounded-product argument.

Thus the derived observables are Lipschitz on bounded sets at fixed
\(\varepsilon>0\). Their constants are polynomial in \(R\) and
\(\tau^{-1}\), with the dense-operator dependence entering through the
factor
\[
\frac{1}{\varepsilon}
\exp\!\left(\frac{R^2}{\varepsilon}\right).
\]

\paragraph{Stability after center-RMS normalization.}
The main experiments compute observables after center-RMS normalization
(Appendix~\ref{app:normalization}). The center-RMS map is smooth away from
collapsed clouds and ill-conditioned near \(\sigma_{\rm RMS}=0\). Hence the
same stability conclusions extend to normalized observables on subsets where
\[
\sigma_{\rm RMS}\ge\sigma_0>0.
\]
On such subsets, the resulting constants are polynomial in \(R\),
\(1/\sigma_0\), and \(\tau^{-1}\), together with the fixed-bandwidth
dense-operator constant above. Near collapsed clouds, where
\(\sigma_{\rm RMS}\to0\), the normalization itself is ill-conditioned.

\subsection{Finite-lag separation}
\label{app:separation_proof}

\begin{theorem}[Finite-lag detection of deterministic motion, restated]
\label{thm:app_separation}
Let \(T:\mathbb R^d\to\mathbb R^d\) be measurable and fix \(\Delta\ge1\).
Assume
\[
\int \|T^\Delta(x)-x\|^2\,\rho(dx)<\infty
\]
and suppose \(T^\Delta(x)\ne x\) on a \(\rho\)-positive set. If
\[
X_{t+\Delta}=T^\Delta(X_t)
\]
deterministically, then
\[
\bar G_\Delta^\rho
=
\frac{1}{2\tau}
\int
\bigl(T^\Delta(x)-x\bigr)
\bigl(T^\Delta(x)-x\bigr)^\top
\,\rho(dx),
\]
and
\[
\operatorname{tr}(\bar G_\Delta^\rho)>0
\]
at the chosen lag \(\tau\).

For an oriented cyclic shift \(T^\Delta=\Pi\) in centered coordinates with
\[
\operatorname{Cov}(X)=\sigma_x^2I,
\]
one has
\[
\|\mathcal W_\Delta^\rho\|_F
=
\frac{\sigma_x^2}{\tau}
\|\Pi^\top-\Pi\|_F
>
0
\]
whenever \(\Pi\) is not symmetric.

By contrast, if a deterministic continuous-time flow
\[
\dot X_t=b(X_t)
\]
is represented by the first-order generator \(L=b\cdot\nabla\), then its
infinitesimal carré du champ satisfies
\[
\Gamma_L\equiv0.
\]
\end{theorem}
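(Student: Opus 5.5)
The proof splits into three independent parts, matching the three claims.

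\emph{Deterministic transport.} Under \(X_{t+\Delta}=T^\Delta(X_t)\), the conditional law is \(Q_\Delta(dy\mid x)=\delta_{T^\Delta(x)}(dy)\). Corollary~\ref{cor:app_deterministic_transport}, or direct substitution into Definition~\ref{def:G_delta}, then gives
\[
G_\Delta(x)=\frac{1}{2\tau}\bigl(T^\Delta(x)-x\bigr)\bigl(T^\Delta(x)-x\bigr)^\top .
\]
The square-integrability assumption makes this matrix-valued function entrywise \(\rho\)-integrable, since \(|u_au_b|\le\|u\|^2\). Integrating against \(\rho\) therefore yields the stated formula for \(\bar G_\Delta^\rho\). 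For positivity, I will exchange trace and integral:
\[
\operatorname{tr}(\bar G_\Delta^\rho)=\frac{1}{2\tau}\int\|T^\Delta(x)-x\|^2\,\rho(dx).
\]
The integrand is nonnegative and strictly positive on a \(\rho\)-positive set, so the integral is strictly positive. This uses the standard fact that a nonnegative function with zero integral vanishes \(\rho\)-a.e.

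\emph{Cyclic shift.} With \(\widetilde Y=\Pi\widetilde X\) deterministically and \(\mathbb E[\widetilde X\widetilde X^\top]=\sigma_x^2I\), I will compute the two cross-moments:
\[
\mathbb E[\widetilde X\widetilde Y^\top]=\sigma_x^2\Pi^\top,\qquad \mathbb E[\widetilde Y\widetilde X^\top]=\sigma_x^2\Pi .
\]
Hence \(\mathcal W_\Delta^\rho=\tau^{-1}\sigma_x^2(\Pi^\top-\Pi)\), and the Frobenius norm follows by homogeneity. This norm is zero iff \(\Pi=\Pi^\top\), which gives strict positivity for non-symmetric \(\Pi\). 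This is also the linear-Gaussian formula of Theorem~\ref{thm:lg_closed_form} with \(A_\Delta=\Pi\), \(\Sigma=\sigma_x^2I\), \(\Sigma_\xi=0\), which I can cite as a consistency check.

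One subtlety needs a remark. Centering must be compatible with the map, i.e.\ \(\widetilde Y=\Pi\widetilde X\). Because \(\Pi\) is linear and \(X\) is centered, \(Y=\Pi X\) is also mean zero, so the centering is common. I will state this explicitly.

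\emph{Carré du champ.} I will use the definition
\[
\Gamma_L(f,g)=\tfrac12\bigl(L(fg)-fLg-gLf\bigr)
\]
on smooth \(f,g\). Because \(L=b\cdot\nabla\) is a derivation, the Leibniz rule \(b\cdot\nabla(fg)=f\,b\cdot\nabla g+g\,b\cdot\nabla f\) makes the bracket vanish identically. No step is a genuine obstacle. The only care points are the integrability justification for exchanging trace and integral, and the centering remark in the shift case.
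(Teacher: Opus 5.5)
Your proposal is correct and follows essentially the same route as the paper's proof: the deterministic-transport corollary, averaged against \(\rho\) and traced, for the first claim; the direct cross-moment computation \(\mathbb E[XY^\top]=\sigma_x^2\Pi^\top\), \(\mathbb E[YX^\top]=\sigma_x^2\Pi\) for the cyclic shift; and the Leibniz rule for \(\Gamma_L\). Your added remarks on integrability (to justify exchanging trace and integral) and on the centering being compatible with \(\Pi\) are small refinements that the paper leaves implicit.
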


\begin{proof}
Under deterministic dynamics, Corollary~\ref{cor:app_deterministic_transport}
gives
\[
G_\Delta(x)
=
\frac{1}{2\tau}
\bigl(T^\Delta(x)-x\bigr)
\bigl(T^\Delta(x)-x\bigr)^\top.
\]
Averaging against \(\rho\) gives the displayed formula for
\(\bar G_\Delta^\rho\). Taking traces gives
\[
\operatorname{tr}(\bar G_\Delta^\rho)
=
\frac{1}{2\tau}
\int
\|T^\Delta(x)-x\|^2
\,\rho(dx).
\]
The integrand is nonnegative and strictly positive on a \(\rho\)-positive
set, so the integral is strictly positive.

For the cyclic shift, let \(Y=\Pi X\). In centered coordinates with
\(\mathbb E[XX^\top]=\sigma_x^2I\),
\[
\mathbb E[XY^\top]
=
\mathbb E[XX^\top\Pi^\top]
=
\sigma_x^2\Pi^\top,
\]
and
\[
\mathbb E[YX^\top]
=
\Pi\mathbb E[XX^\top]
=
\sigma_x^2\Pi.
\]
Therefore
\[
\mathcal W_\Delta^\rho
=
\tau^{-1}
\left(
\sigma_x^2\Pi^\top-\sigma_x^2\Pi
\right)
=
\frac{\sigma_x^2}{\tau}
(\Pi^\top-\Pi),
\]
so
\[
\|\mathcal W_\Delta^\rho\|_F
=
\frac{\sigma_x^2}{\tau}
\|\Pi^\top-\Pi\|_F.
\]
This is positive whenever \(\Pi^\top\ne\Pi\).

For the infinitesimal contrast, let \(L=b\cdot\nabla\). Since \(L\) is
first order, it satisfies the Leibniz rule:
\[
L(fg)=fLg+gLf.
\]
Thus
\[
\Gamma_L(f,g)
=
\frac12
\left(
L(fg)-fLg-gLf
\right)
=
0
\]
for all smooth \(f,g\). Hence \(\Gamma_L\equiv0\).
\end{proof}

\subsection{Linear-Gaussian closed form}
\label{app:linear_gaussian_proof}

\begin{theorem}[Linear-Gaussian closed form, restated]
\label{thm:app_lg_closed_form}
Let
\[
X_{t+\Delta}=A_\Delta X_t+\xi_t,
\]
where \(X_t\) has mean zero and covariance \(\Sigma\), and where
\(\xi_t\) is independent of \(X_t\), mean zero, with covariance
\(\Sigma_\xi\). Then
\[
m_\Delta(x)
=
\tau^{-1}(A_\Delta-I)x,
\qquad
C_\Delta(x)
=
\tau^{-1}\Sigma_\xi,
\]
\[
\bar G_\Delta
=
\frac{1}{2\tau}
\left[
\Sigma_\xi
+
(A_\Delta-I)\Sigma(A_\Delta-I)^\top
\right],
\]
and
\[
\mathcal W_\Delta
=
\tau^{-1}
\left(
\Sigma A_\Delta^\top
-
A_\Delta\Sigma
\right).
\]
\end{theorem}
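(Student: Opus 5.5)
Conditioning on \(X_t=x\) is the whole argument: the displacement is then an affine function of the independent innovation, and everything follows from Proposition~\ref{prop:app_decomposition} and direct moment computations. Write \(D=Y-X\) with \(Y=X_{t+\Delta}\). Conditional on \(X=x\) we have \(D=(A_\Delta-I)x+\xi_t\). Since \(\xi_t\) is independent of \(X_t\), its conditional law given \(X=x\) equals its marginal law, which has mean zero and covariance \(\Sigma_\xi\). It follows that \(\mathbb E[D\mid X=x]=(A_\Delta-I)x\) and \(\operatorname{Cov}(D\mid X=x)=\Sigma_\xi\). Dividing by \(\tau\) gives the stated formulas for \(m_\Delta\) and \(C_\Delta\). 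Note that \(C_\Delta\) does not depend on \(x\).

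Next we use Proposition~\ref{prop:app_decomposition}. Pointwise,
\[
2G_\Delta(x)=\tau^{-1}\Sigma_\xi+\tau^{-1}(A_\Delta-I)xx^\top(A_\Delta-I)^\top .
\]
We integrate against \(\rho\), the law of \(X_t\), which has mean zero and covariance \(\Sigma\), so \(\int xx^\top\rho(dx)=\Sigma\). This yields the closed form for \(\bar G_\Delta\). Taking traces gives the trace identity and matches the two terms of Equation~\eqref{eq:trace_decomposition}. The only hypothesis needed is existence of second moments, which is implicit in having finite \(\Sigma,\Sigma_\xi\).

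For the circulation, we first fix the centering. \(X\) is already mean zero, and \(\mathbb E[Y]=A_\Delta\mathbb E[X]+\mathbb E[\xi]=0\), so the centered coordinates in Equation~\eqref{eq:circulation} coincide with the raw ones; no centering correction is needed. We then compute the cross-moment
\[
\mathbb E[XY^\top]=\mathbb E[XX^\top]A_\Delta^\top+\mathbb E[X\xi^\top]=\Sigma A_\Delta^\top .
\]
The term \(\mathbb E[X\xi^\top]\) vanishes because independence and \(\mathbb E\xi=0\) give \(\mathbb E[X\xi^\top]=\mathbb E[X]\,\mathbb E[\xi]^\top=0\). By transposition, \(\mathbb E[YX^\top]=A_\Delta\Sigma\). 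Substituting into the definition gives \(\mathcal W_\Delta=\tau^{-1}(\Sigma A_\Delta^\top-A_\Delta\Sigma)\), which is skew since \(\Sigma\) is symmetric.

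None of these steps poses a real obstacle. The only points needing care are that independence (not mere uncorrelatedness) is what makes the conditional covariance equal \(\Sigma_\xi\) for every \(x\), and that the circulation is computed under \(\rho=\mathcal L(X_t)\) with the population, unnormalized coordinates. No Gaussianity is actually used.
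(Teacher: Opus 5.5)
Your proposal is correct and follows the paper's proof essentially step for step. You condition on \(X_t=x\) to get conditional mean \((A_\Delta-I)x\) and covariance \(\Sigma_\xi\), apply the source-centered decomposition and average using \(\mathbb E[X_tX_t^\top]=\Sigma\), then compute the lagged cross-moments \(\Sigma A_\Delta^\top\) and \(A_\Delta\Sigma\) from \(\mathbb E[X_t\xi_t^\top]=0\); your remarks that both \(X_t\) and \(X_{t+\Delta}\) are already mean zero, so no centering correction enters \(\mathcal W_\Delta\), and that Gaussianity is never used, are accurate points the paper leaves implicit.
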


\begin{proof}
Given \(X_t=x\), the finite-lag displacement is
\[
D
=
X_{t+\Delta}-X_t
=
(A_\Delta-I)x+\xi_t.
\]
Since \(\xi_t\) is independent of \(X_t\) and has mean zero,
\[
\mathbb E[D\mid X_t=x]
=
(A_\Delta-I)x,
\]
and
\[
\operatorname{Cov}(D\mid X_t=x)
=
\Sigma_\xi.
\]
Dividing by \(\tau\) gives
\[
m_\Delta(x)
=
\tau^{-1}(A_\Delta-I)x,
\qquad
C_\Delta(x)
=
\tau^{-1}\Sigma_\xi.
\]
Using the source-centered decomposition,
\[
2G_\Delta(x)
=
\tau^{-1}\Sigma_\xi
+
\tau^{-1}
(A_\Delta-I)xx^\top(A_\Delta-I)^\top.
\]
Averaging over \(X_t\), with
\[
\mathbb E[X_tX_t^\top]=\Sigma,
\]
gives
\[
\bar G_\Delta
=
\frac{1}{2\tau}
\left[
\Sigma_\xi
+
(A_\Delta-I)\Sigma(A_\Delta-I)^\top
\right].
\]

For circulation,
\[
\mathbb E[X_tX_{t+\Delta}^\top]
=
\mathbb E\!\left[
X_t(A_\Delta X_t+\xi_t)^\top
\right]
=
\Sigma A_\Delta^\top,
\]
because \(\mathbb E[X_t\xi_t^\top]=0\). Similarly,
\[
\mathbb E[X_{t+\Delta}X_t^\top]
=
\mathbb E\!\left[
(A_\Delta X_t+\xi_t)X_t^\top
\right]
=
A_\Delta\Sigma.
\]
Therefore
\[
\mathcal W_\Delta
=
\tau^{-1}
\left(
\Sigma A_\Delta^\top
-
A_\Delta\Sigma
\right).
\]
\end{proof}

The trace decomposition becomes
\[
2\,\operatorname{tr}(\bar G_\Delta)
=
\tau^{-1}\operatorname{tr}(\Sigma_\xi)
+
\tau^{-1}
\operatorname{tr}
\left(
(A_\Delta-I)\Sigma(A_\Delta-I)^\top
\right).
\]
Thus the conditional spread trace is
\[
\tau^{-1}\operatorname{tr}(\Sigma_\xi),
\]
and the coherent displacement trace is
\[
\tau^{-1}
\operatorname{tr}
\left(
(A_\Delta-I)\Sigma(A_\Delta-I)^\top
\right).
\]
These formulas are stated in centered coordinates before empirical
normalization. After applying center-RMS normalization, the same identities
hold for the normalized variables and their transformed covariance matrices.
They give the parametric calibration used in
Section~\ref{subsec:linear_gaussian} and the mechanism signatures in
Appendix~\ref{app:linear_gaussian_mechanisms}.

\section{Linear-Gaussian mechanism signatures}
\label{app:linear_gaussian_mechanisms}

This appendix specializes the linear-Gaussian closed form
(Theorem~\ref{thm:app_lg_closed_form}) to several canonical recurrent
mechanisms. These examples are analytical calibrations, showing how the
framework's observables respond to simple update structures, and providing
reference patterns for interpreting trained-network measurements. We then
make explicit the structural parallel between this recurrent closed form and
the Gaussian bridge in static feedforward operator geometry.

\subsection{Zero-circulation relaxation and isotropic contraction}
\label{app:lg_relaxation}

Suppose
\[
A_\Delta \Sigma = \Sigma A_\Delta^\top .
\]
Then by Theorem~\ref{thm:app_lg_closed_form},
\[
\mathcal W_\Delta
=
\tau^{-1}
\left(
\Sigma A_\Delta^\top - A_\Delta \Sigma
\right)
=
0.
\]
Thus a linear-Gaussian system can have large finite-lag transport scale while
having no antisymmetric coordinate circulation. The condition above is the
zero-circulation condition for the lagged cross-moment.

For isotropic contraction,
\[
A_\Delta=\alpha I,
\qquad
\alpha\in\mathbb R,
\]
we obtain
\[
\bar G_\Delta
=
\frac{1}{2\tau}
\left[
\Sigma_\xi
+
(\alpha-1)^2\Sigma
\right],
\qquad
\mathcal W_\Delta=0.
\]
The conditional spread trace is
\[
\tau^{-1}\operatorname{tr}(\Sigma_\xi),
\]
and the coherent displacement trace is
\[
\tau^{-1}
(\alpha-1)^2\operatorname{tr}(\Sigma).
\]
Hence the coherent displacement fraction is
\[
F_\Delta
=
\frac{
(\alpha-1)^2\operatorname{tr}(\Sigma)
}{
\operatorname{tr}(\Sigma_\xi)
+
(\alpha-1)^2\operatorname{tr}(\Sigma)
}.
\]
At fixed innovation covariance, \(F_\Delta=0\) when \(\alpha=1\), and
\(F_\Delta\to1\) as \(|\alpha-1|\to\infty\). Isotropic contraction therefore
increases coherent transport scale without creating coordinate circulation.

\subsection{Two-dimensional rotation}
\label{app:lg_rotation}

Let \(d=2\), let
\[
\Sigma=\sigma_x^2 I,
\]
and let
\[
A_\Delta=rR_\theta,
\qquad
R_\theta
=
\begin{pmatrix}
\cos\theta & -\sin\theta \\
\sin\theta & \cos\theta
\end{pmatrix}.
\]
Then
\[
\Sigma A_\Delta^\top - A_\Delta\Sigma
=
r\sigma_x^2
\left(
R_\theta^\top-R_\theta
\right).
\]
Since
\[
R_\theta^\top-R_\theta
=
2\sin\theta
\begin{pmatrix}
0 & 1 \\
-1 & 0
\end{pmatrix},
\]
the circulation matrix is
\[
\mathcal W_\Delta
=
\frac{2r\sigma_x^2\sin\theta}{\tau}
\begin{pmatrix}
0 & 1 \\
-1 & 0
\end{pmatrix}.
\]
Its Frobenius norm is
\[
\|\mathcal W_\Delta\|_F
=
\frac{2\sqrt 2\,r\sigma_x^2|\sin\theta|}{\tau}.
\]
Thus the antisymmetric statistic grows linearly with radial gain \(r\),
source variance \(\sigma_x^2\), and oriented angle magnitude
\(|\sin\theta|\), and vanishes exactly when \(\sin\theta=0\).

For pure rotation, \(r=1\) and \(\Sigma_\xi=0\). In that case,
\[
\bar G_\Delta
=
\frac{1}{2\tau}
(R_\theta-I)\Sigma(R_\theta-I)^\top.
\]
Using
\[
(R_\theta-I)(R_\theta-I)^\top
=
2(1-\cos\theta)I,
\]
we get
\[
\bar G_\Delta
=
\frac{\sigma_x^2(1-\cos\theta)}{\tau}I.
\]
Pure rotation therefore produces symmetric transport scale through
\(1-\cos\theta\) and antisymmetric circulation through \(\sin\theta\). The
two components distinguish displacement magnitude from oriented finite-lag
flow.

\subsection{Permutation and shift transport}
\label{app:lg_permutation}

Let \(A_\Delta=\Pi\) be a permutation matrix and let
\[
\Sigma=\sigma_x^2I.
\]
Then
\[
\bar G_\Delta
=
\frac{1}{2\tau}
\left[
\Sigma_\xi
+
\sigma_x^2(\Pi-I)(\Pi-I)^\top
\right],
\]
and
\[
\mathcal W_\Delta
=
\frac{\sigma_x^2}{\tau}
(\Pi^\top-\Pi).
\]

\paragraph{Symmetric permutations.}
If
\[
\Pi=\Pi^\top,
\]
then
\[
\mathcal W_\Delta=0.
\]
This includes the identity and nontrivial involutions, such as the two-state
swap
\[
\Pi=
\begin{pmatrix}
0 & 1 \\
1 & 0
\end{pmatrix}.
\]
Such updates can have nonzero transport scale through
\[
(\Pi-I)(\Pi-I)^\top
=
2I-\Pi-\Pi^\top,
\]
while having zero coordinate circulation.

\paragraph{Oriented cyclic shifts.}
For a cyclic shift of length \(L\ge3\), the supports of \(\Pi\) and
\(\Pi^\top\) are disjoint. Therefore
\[
\|\Pi^\top-\Pi\|_F
=
\sqrt{2L},
\]
and
\[
\|\mathcal W_\Delta\|_F
=
\frac{\sigma_x^2}{\tau}\sqrt{2L}.
\]
An oriented cyclic shift therefore has both nonzero transport scale and
nonzero circulation.

\paragraph{Open shift matrices.}
An open delay line requires a boundary rule and is therefore not represented
by a unique permutation matrix. For example, the nilpotent shift \(S\) with
\[
S_{i+1,i}=1,
\qquad
i=1,\dots,L-1,
\]
gives
\[
\mathcal W_\Delta
=
\frac{\sigma_x^2}{\tau}
(S^\top-S),
\]
whereas reset, leakage, or input-injection rules at the boundary change both
\(\bar G_\Delta\) and \(\mathcal W_\Delta\). Thus open delay-line behavior is
boundary-sensitive rather than a single canonical linear-Gaussian signature.
The framework still records the resulting finite-lag transport, but its
signature depends on the chosen boundary dynamics.

\subsection{Identity plus innovation noise}
\label{app:lg_noise}

For identity dynamics with additive innovation,
\[
A_\Delta=I,
\qquad
X_{t+\Delta}=X_t+\xi_t,
\]
the closed form gives
\[
\bar G_\Delta
=
\frac{\Sigma_\xi}{2\tau},
\qquad
m_\Delta(x)\equiv0,
\qquad
\mathcal W_\Delta=0.
\]
The conditional spread trace is
\[
\tau^{-1}\operatorname{tr}(\Sigma_\xi),
\]
and the coherent displacement trace is zero. Hence
\[
F_\Delta=0.
\]
This is the recurrent counterpart of pure diffusion at the chosen lag:
transport is entirely conditional spread, with no coherent displacement and
no directed circulation.

\subsection{Structural parallel with the static feedforward Gaussian bridge}
\label{app:lg_static_parallel}

Static feedforward operator geometry admits a closed-form Gaussian bridge
under a balanced shared-covariance class-conditional model
\cite{reddy2026diffusionoperatorgeometry}. If
\[
z\mid y=a
\sim
\mathcal N(\mu_a,\Sigma),
\qquad
a\in[K],
\]
then the coarse class geometry of the Gaussian-kernel diffusion operator is
controlled by the regularized Mahalanobis separations
\[
c_\varepsilon^{(a,b)}
=
\frac14
(\mu_a-\mu_b)^\top
(\varepsilon I+\Sigma)^{-1}
(\mu_a-\mu_b),
\qquad
c_\varepsilon^{(a,a)}=0.
\]
The class-affinity matrix has entries
\[
\alpha_{ab}
=
\alpha_0 e^{-c_\varepsilon^{(a,b)}},
\qquad
\alpha_0
=
\det(I+\Sigma/\varepsilon)^{-1/2}.
\]
Thus coarse class transport, leakage, and spectral quantities reduce to
functions of the pairwise scalars \(c_\varepsilon^{(a,b)}\). The static
closed form converts class-mean offsets into operator observables through a
bandwidth-regularized inverse-covariance metric.

The recurrent linear-Gaussian bridge has the same role for finite-lag
transport. If
\[
X_{t+\Delta}
=
A_\Delta X_t+\xi_t,
\]
then
\[
\bar G_\Delta
=
\frac{1}{2\tau}
\Sigma_\xi
+
\frac{1}{2\tau}
(A_\Delta-I)\Sigma(A_\Delta-I)^\top,
\]
and
\[
\mathcal W_\Delta
=
\tau^{-1}
(\Sigma A_\Delta^\top-A_\Delta\Sigma).
\]
The coherent displacement trace is
\[
\tau^{-1}
\operatorname{tr}
\left(
(A_\Delta-I)\Sigma(A_\Delta-I)^\top
\right),
\]
a covariance-weighted quadratic expression in the deterministic update
offset \(A_\Delta-I\). The circulation is the antisymmetric part of the
lagged cross-covariance mismatch,
\[
\Sigma A_\Delta^\top-A_\Delta\Sigma.
\]

The structural parallel is that both closed forms reduce operator-level
observables to quadratic expressions in a displacement parameter, modulated
by the data covariance. The displacement parameter is the class-mean offset
\[
\mu_a-\mu_b
\]
in the static case and the update offset
\[
A_\Delta-I
\]
in the recurrent case. The covariance enters differently. In the static
Gaussian bridge,
\[
(\varepsilon I+\Sigma)^{-1}
\]
defines a bandwidth-regularized inverse-covariance metric on class-mean
offsets. In the recurrent linear-Gaussian bridge, \(\Sigma\) weights the
average squared finite-lag displacement induced by \(A_\Delta-I\). The
population recurrent closed form contains no additive bandwidth regularizer,
bandwidth enters through the finite-sample source-smoothing estimator.

The recurrent setting also has an antisymmetric directed component,
\(\mathcal W_\Delta\), which has no counterpart in the symmetric reversible
diffusion operator of the static feedforward construction. This is the
structural distinction isolated by Theorem~\ref{thm:separation}, that deterministic
finite-lag motion contributes directly to the source-centered transport
tensor and, when the lagged cross-covariance is antisymmetric, to coordinate
circulation.

In both settings, the parametric closed form is calibration rather than
identification. The Gaussian models are tractable cases in which the
operator-level observables can be written explicitly. Neither the static
feedforward paper nor the present recurrent paper assumes that learned
representations are universally described by these parametric models.

The recurrent paper is therefore the trajectory-directed counterpart of the
static feedforward construction. The same operator-first philosophy and
closed-form calibration strategy are adapted from static diffusion geometry
to directed finite-lag transport geometry.

\section{Finite lag versus infinitesimal carr\'e du champ}
\label{app:finite_lag_vs_infinitesimal}

This appendix makes the relationship between finite-lag transport geometry
and the infinitesimal carr\'e-du-champ limit precise. Theorem~\ref{thm:separation}
formalizes the structural distinction by exhibiting deterministic dynamics for
which the finite-lag tensor is positive at the chosen lag while the
infinitesimal \(\Gamma_L\) vanishes. Here we give the broader picture,
including the small-lag consistency that holds when an exact diffusion
description is available.

\subsection{Three regimes}
\label{app:three_regimes}

It is useful to distinguish three settings that differ in what mathematical
object is taken as primary.

\paragraph{Exact Markov diffusions with infinitesimal generator.}
Let \(X_t\) be an It\^o diffusion satisfying
\[
dX_t=b(X_t)\,dt+\sigma(X_t)\,dW_t
\]
on \(\mathbb R^d\). The backward generator is
\[
Lf
=
b\cdot\nabla f
+
\frac12
\operatorname{tr}\!\left(
\sigma\sigma^\top\nabla^2 f
\right),
\]
and the carr\'e du champ is
\[
\Gamma_L(f,g)
=
\frac12
\left(
L(fg)-fLg-gLf
\right).
\]
In this setting the infinitesimal generator is the natural object. On
coordinate functions, \(\Gamma_L\) recovers the second-order diffusion tensor,
while the first-order drift cancels in the carr\'e-du-champ expression.

\paragraph{Augmented or non-Markov effective dynamics.}
For input-driven or finitely observed systems, the hidden state \(h_t\) alone
is generally not Markov. A Markov representation may exist on an augmented
state space that includes input or memory variables, but an infinitesimal
generator on \(h_t\) alone is not then an intrinsic object. In this regime,
``the carr\'e du champ on \(h_t\)'' is not well-defined without a Markov
closure assumption.

\paragraph{Finite-lag empirical transfer operators.}
The empirical transfer operator \(P_\Delta\) is constructed from observed
pairs
\[
(h_t,h_{t+\Delta})
\]
regardless of whether \(h_t\) is Markov and regardless of whether a small-lag
diffusion description exists. The conditional law \(Q_\Delta\) is the law of
the successor given the source under the joint sequence and input
distribution. Its symmetric and antisymmetric moments are well defined at
every chosen lag \(\Delta\), provided the corresponding conditional second
moments are finite.

This paper works in regime three. The framework is finite-lag by construction.
It does not require regime one and remains well-defined under regime two.

\subsection{First-order drift cancellation}
\label{app:drift_cancellation}

We restate the infinitesimal identity used in
Theorem~\ref{thm:separation}.

\begin{proposition}[First-order drift cancellation]
\label{prop:app_drift_cancel}
Let
\[
L=b\cdot\nabla
\]
be a first-order differential operator on smooth functions. Then \(L\)
satisfies the Leibniz rule
\[
L(fg)=fLg+gLf,
\]
and consequently
\[
\Gamma_L(f,g)=0
\]
for all smooth \(f,g\).
\end{proposition}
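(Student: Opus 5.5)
The plan is to argue directly from the two defining properties of a first-order operator, namely linearity and the product rule, without any appeal to the finite-lag machinery. Write \(L=b\cdot\nabla=\sum_{k=1}^d b_k\,\partial_k\), with \(b\) a (say measurable, locally bounded) vector field and \(f,g\) smooth. The first step is the Leibniz rule. Apply the ordinary product rule for partial derivatives, \(\partial_k(fg)=f\,\partial_k g+g\,\partial_k f\), in each coordinate. Then multiply by \(b_k\) and sum over \(k\); linearity of the sum gives
\[
L(fg)=\sum_k b_k\bigl(f\,\partial_k g+g\,\partial_k f\bigr)=f\,Lg+g\,Lf .
\]
This is pointwise and needs no integrability, since \(b\) multiplies only first derivatives and no second-order term is present.

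The second step substitutes into the definition of the carr\'e du champ recalled in the three-regimes discussion, \(\Gamma_L(f,g)=\tfrac12\bigl(L(fg)-fLg-gLf\bigr)\). The bracket vanishes identically by the first step, so \(\Gamma_L(f,g)\equiv0\) for all smooth \(f,g\).

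For context, and to connect with Theorem~\ref{thm:separation}, I would contrast this with the full It\^o generator. There the second-order part \(\tfrac12\operatorname{tr}(\sigma\sigma^\top\nabla^2)\) fails Leibniz by the cross term \(\sum_{k,l}(\sigma\sigma^\top)_{kl}\,\partial_k f\,\partial_l g\). Hence \(\Gamma_L(f,g)=\tfrac12\nabla f^\top\sigma\sigma^\top\nabla g\), and the drift contributes nothing, which is the cancellation the statement isolates.

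There is no real obstacle. The only point needing care is to fix the function class on which \(L\) acts, for instance \(C^\infty\) or \(C^1\) functions. Then \(fg\) lies in the domain and the product rule holds pointwise; stating this explicitly is what makes the argument complete.
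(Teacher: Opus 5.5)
Your proof is correct and follows the paper's own argument: apply the product rule to $\nabla(fg)$ and contract with $b$ to get the Leibniz identity, then substitute into $\Gamma_L(f,g)=\tfrac12\bigl(L(fg)-fLg-gLf\bigr)$ so the bracket vanishes. Your side remark on the It\^o generator is also correct and matches the paper's follow-up formula $\Gamma_L(f,g)=\tfrac12\nabla f^\top\sigma\sigma^\top\nabla g$.
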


\begin{proof}
For smooth \(f,g\),
\[
L(fg)
=
b\cdot\nabla(fg)
=
b\cdot(g\nabla f+f\nabla g)
=
g(b\cdot\nabla f)+f(b\cdot\nabla g)
=
gLf+fLg.
\]
Substituting into the carr\'e-du-champ definition gives
\[
\Gamma_L(f,g)
=
\frac12
\left(
L(fg)-fLg-gLf
\right)
=
\frac12
\left(
gLf+fLg-fLg-gLf
\right)
=
0.
\]
\end{proof}

Thus the infinitesimal carr\'e du champ records the second-order part of the
generator and cancels the first-order drift contribution. For an It\^o
diffusion,
\[
Lf
=
b\cdot\nabla f
+
\frac12
\operatorname{tr}
\left(
\sigma\sigma^\top\nabla^2 f
\right),
\]
one obtains
\[
\Gamma_L(f,g)
=
\frac12
\nabla f^\top
(\sigma\sigma^\top)
\nabla g.
\]
For deterministic continuous-time dynamics, \(\sigma\equiv0\), and therefore
\[
\Gamma_L\equiv0.
\]

\subsection{Small-lag diffusion consistency}
\label{app:small_lag}

When an exact small-lag diffusion description exists, the finite-lag
construction is consistent with the infinitesimal carr\'e du champ in the
small-lag limit.

Suppose
\[
P_\tau=e^{\tau L}
\]
is the Markov semigroup generated by \(L\) in regime one. Define the
finite-lag generator and finite-lag carr\'e du champ by
\[
L_\tau
=
\frac{P_\tau-I}{\tau},
\qquad
\Gamma_\tau(f,g)
=
\frac12
\left(
L_\tau(fg)-fL_\tau g-gL_\tau f
\right).
\]
Equivalently,
\[
\Gamma_\tau(f,g)(x)
=
\frac{1}{2\tau}
\mathbb E
\left[
\bigl(f(X_{t+\tau})-f(x)\bigr)
\bigl(g(X_{t+\tau})-g(x)\bigr)
\,\middle|\, X_t=x
\right].
\]
Thus the finite-lag quadratic form used in the paper is exactly the
finite-time carré-du-champ expression associated with \(P_\tau\) whenever an
exact semigroup exists.

\begin{proposition}[Small-lag consistency]
\label{prop:app_small_lag}
Assume \(f\), \(g\), and \(fg\) lie in the domain of \(L^2\), or
equivalently take smooth compactly supported test functions under smooth
bounded diffusion coefficients. Then, uniformly on compact sets,
\[
L_\tau f
=
Lf+O(\tau),
\qquad
\Gamma_\tau(f,g)
=
\Gamma_L(f,g)+O(\tau)
\]
as \(\tau\to0\).
\end{proposition}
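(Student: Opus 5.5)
We first prove the generator statement \(L_\tau f=Lf+O(\tau)\) and then obtain the carr\'e-du-champ statement from it by linearity.

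\emph{Step 1: generator expansion.} Since \(P_\tau=e^{\tau L}\) is a strongly continuous semigroup, for \(f\) in the domain of \(L^2\) we apply the first-order Taylor formula with integral remainder to \(s\mapsto P_sf\):
\[
P_\tau f-f-\tau Lf
=
\int_0^\tau (\tau-s)\,P_sL^2f\,ds .
\]
This uses \(\frac{d}{ds}P_sf=P_sLf\) and \(\frac{d}{ds}P_sLf=P_sL^2f\), both valid on \(\mathrm{Dom}(L^2)\). The semigroup is Markov, so \(\|P_s h\|_\infty\le\|h\|_\infty\). Hence
\[
\|L_\tau f-Lf\|_\infty\le\frac{\tau}{2}\|L^2f\|_\infty ,
\]
which gives the bound uniformly on compact sets, and in fact globally. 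In the alternative formulation, \(f\in C_c^\infty\) and the coefficients \(b,\sigma\) are smooth and bounded with bounded derivatives. There we check directly that \(L^2f\) is bounded and continuous, so the same integral formula applies: this is Dynkin's formula applied twice, with Itô's formula giving \(P_sLf-Lf=\int_0^sP_rL^2f\,dr\).

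\emph{Step 2: transfer to \(\Gamma_\tau\).} Both \(\Gamma_\tau\) and \(\Gamma_L\) are bilinear combinations of the generator applied to \(fg\), \(f\), and \(g\). Subtracting gives
\[
\Gamma_\tau(f,g)-\Gamma_L(f,g)
=
\tfrac12\Bigl[(L_\tau-L)(fg)-f\,(L_\tau-L)g-g\,(L_\tau-L)f\Bigr].
\]
We apply Step 1 to each of \(fg\), \(g\), and \(f\), all of which lie in \(\mathrm{Dom}(L^2)\) by hypothesis. On a compact set \(K\), \(f\) and \(g\) are bounded, so
\[
\sup_K|\Gamma_\tau(f,g)-\Gamma_L(f,g)|
\le
\frac{\tau}{4}\Bigl(\|L^2(fg)\|_\infty+\|f\|_{L^\infty(K)}\|L^2g\|_\infty+\|g\|_{L^\infty(K)}\|L^2f\|_\infty\Bigr)=O(\tau).
\]
The compact set matters only here, because \(f\) and \(g\) need not be globally bounded in the domain formulation. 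Finally, the equivalent conditional-expectation form of \(\Gamma_\tau\) stated above identifies this object with the finite-lag quadratic form of Proposition~\ref{prop:app_coordinate_form}.

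\emph{Main obstacle.} The only delicate point is the domain bookkeeping in Step 1. We must confirm that \(fg\in\mathrm{Dom}(L^2)\) under the smooth compactly supported hypothesis, so that \(L^2(fg)\) is bounded. We would verify this by writing \(L^2\) explicitly as a fourth-order differential operator with coefficients built from \(b\), \(\sigma\sigma^\top\), and their derivatives up to order two. This requires those derivatives to be bounded. Compact support of \(f\) and \(g\) (hence of \(fg\)) then makes \(L^2(fg)\) bounded and continuous, and the sup-norm contraction of the Markov semigroup handles the remainder.
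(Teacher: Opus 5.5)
Your proof is correct and follows the paper's route. Both apply a second-order semigroup expansion to \(f\), \(g\), and \(fg\) to get \(L_\tau h=Lh+O(\tau)\) for each, and then substitute into the bilinear expression \(\Gamma_\tau(f,g)=\tfrac12\bigl(L_\tau(fg)-fL_\tau g-gL_\tau f\bigr)\). Your integral-remainder identity \(P_\tau f-f-\tau Lf=\int_0^\tau(\tau-s)P_sL^2f\,ds\), together with the sup-norm contraction of the Markov semigroup, is a tighter version of the paper's formal expansion \(P_\tau f=f+\tau Lf+\tfrac{\tau^2}{2}L^2f+O(\tau^3)\). It needs only \(f\in\mathrm{Dom}(L^2)\), which is exactly the stated hypothesis, and it gives the explicit bound \(\|L_\tau f-Lf\|_\infty\le\tfrac{\tau}{2}\|L^2f\|_\infty\).
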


\begin{proof}
The semigroup expansion gives
\[
P_\tau f
=
f+\tau Lf+\frac{\tau^2}{2}L^2f+O(\tau^3)
\]
under the stated regularity. Hence
\[
L_\tau f
=
\frac{P_\tau f-f}{\tau}
=
Lf+O(\tau).
\]
Applying the same expansion to \(fg\), \(f\), and \(g\), and substituting into
\[
\Gamma_\tau(f,g)
=
\frac12
\left(
L_\tau(fg)-fL_\tau g-gL_\tau f
\right),
\]
gives
\[
\Gamma_\tau(f,g)
=
\frac12
\left(
L(fg)-fLg-gLf
\right)
+
O(\tau)
=
\Gamma_L(f,g)+O(\tau).
\]
\end{proof}

This proposition is the consistency check for regime one. It does not change
the design choice in this paper, the lag is part of the object being measured.
We do not attempt to reconstruct an underlying continuous-time generator from
discrete recurrent trajectories.

\subsection{Why finite lag retains recurrent motion}
\label{app:why_finite_lag}

The previous subsection describes the limiting relation in regime one. The
structural content of finite-lag transport geometry is what happens at a
chosen \(\tau>0\).

Let
\[
D=X_{t+\tau}-X_t
\]
and assume the conditional second moment of \(D\) exists. The finite-lag
transport tensor is
\[
G_\tau(x)
=
\frac{1}{2\tau}
\mathbb E[DD^\top\mid X_t=x].
\]
By the second-moment identity,
\[
G_\tau(x)
=
\frac{1}{2\tau}
\operatorname{Cov}(D\mid X_t=x)
+
\frac{1}{2\tau}
\mathbb E[D\mid X_t=x]\mathbb E[D\mid X_t=x]^\top.
\]
The first term is the conditional spread contribution. The second term is the
coherent finite-lag displacement contribution.

For a smooth It\^o diffusion,
\[
\mathbb E[D\mid X_t=x]
=
\tau b(x)+O(\tau^2),
\]
and
\[
\operatorname{Cov}(D\mid X_t=x)
=
\tau\sigma\sigma^\top(x)+O(\tau^2).
\]
Therefore
\[
G_\tau(x)
=
\frac12\sigma\sigma^\top(x)
+
\frac{\tau}{2}b(x)b(x)^\top
+
O(\tau).
\]
The leading drift-induced correction is the rank-one term
\[
\frac{\tau}{2}b(x)b(x)^\top,
\]
and all finite-lag corrections vanish as \(\tau\to0\). Thus
\[
G_\tau(x)
\to
\frac12\sigma\sigma^\top(x),
\]
which is the coordinate form of \(\Gamma_L\).

For deterministic continuous-time dynamics, \(\sigma\equiv0\), so at finite
lag
\[
G_\tau(x)
=
\frac{1}{2\tau}
(T_\tau(x)-x)(T_\tau(x)-x)^\top.
\]
If
\[
T_\tau(x)-x=\tau b(x)+o(\tau),
\]
then
\[
G_\tau(x)
=
\frac{\tau}{2}b(x)b(x)^\top+o(\tau),
\]
and the tensor vanishes in the infinitesimal limit, consistent with
\(\Gamma_L\equiv0\).

For discrete-time recurrent systems, however, \(\tau\) is not taken to zero.
The measured object is the finite-step displacement. If the update is a map
\(T\) observed at integer lag \(\Delta\), then
\[
G_\Delta(x)
=
\frac{1}{2\tau}
(T^\Delta(x)-x)(T^\Delta(x)-x)^\top,
\]
which is positive whenever the finite-step displacement is nonzero. There
need not be an underlying small-time diffusion limit on hidden state. In that
case, the infinitesimal carré-du-champ comparison is a structural contrast,
not a limiting approximation.

The finite-lag construction therefore makes recurrent motion measurable in
two complementary regimes. When a smooth diffusion description exists,
finite-lag \(G_\tau\) retains drift-induced corrections at the chosen lag even
though they disappear in the infinitesimal limit. When no such description is
available, as for general trained recurrent networks operating on integer
time steps, the finite-lag operator is the well-defined object.

\subsection{Implications for the framework's use}
\label{app:finite_lag_implications}

Finite-lag transport geometry is not the same object as infinitesimal
carr\'e-du-champ geometry. The difference is the coherent displacement
contribution at finite lag. Three implications guide the use of the framework
in the rest of the paper.

First, \(G_\Delta\) is reported at the operating lag \(\Delta\) used in the
experiments, not as an estimate of a limiting object. The lag is part of the
operator specification.

Second, the framework does not estimate a continuous-time generator from
discrete recurrent trajectories. The operator \(\widehat P_\Delta\) is the
empirical finite-lag transfer operator at the chosen lag, not an
approximation of an underlying \(L\). This avoids imposing a generator model
that may not exist for the observed hidden state alone.

Third, the distinction in Theorem~\ref{thm:separation} is structural.
Discrete cyclic shifts, shift-like memory transport, and other coherent
finite-step hidden-state motions are precisely the cases where finite-lag
source-centered transport differs from an infinitesimal second-order
geometry. The paper's operator choice is matched to the dynamics it aims to
describe.

\section{Experimental details and additional results}
\label{app:experimental_details}

This appendix gives experimental details and additional empirical results for
Section~\ref{sec:experiments}. Unless otherwise stated,
empirical operators use the dense Gaussian source kernel of
Equation~\eqref{eq:empirical_operator}, center-RMS normalization, lag
\(\Delta=1\), and the median-heuristic bandwidth
\(\varepsilon_{\rm med}\). The exception is the linear-Gaussian moment
calibration, where spread and coherent displacement are computed from known
conditional moments rather than from the kernel smoother.

\subsection{Controlled decomposition and circulation}
\label{app:controlled_experiments}

The decomposition calibration uses linear-Gaussian dynamics
\[
Y=AX+\xi,\qquad
X\sim\mathcal N(0,I),\qquad
\xi\sim\mathcal N(0,\sigma^2I),
\qquad
A=I+\beta B,
\]
where \(B\) is a fixed Frobenius-normalized perturbation. We sweep
\[
\beta\in\{0,0.25,0.5,0.75,1.0\},
\qquad
\sigma\in\{0,0.1,0.25,0.5\},
\]
at \(d=16\). Since this experiment validates the population
linear-Gaussian closed form, the empirical conditional spread and coherent
displacement traces are computed from the known conditional moments:
\[
D=Y-X,\qquad
\mathbb E[D\mid X]=(A-I)X,\qquad
\operatorname{Cov}(D\mid X)=\sigma^2I.
\]
Thus
\[
2\operatorname{tr}(\bar G_\Delta)
=
\tau^{-1}\mathbb E\|D\|^2,
\]
\[
\text{coherent displacement trace}
=
\tau^{-1}\mathbb E\|(A-I)X\|^2,
\qquad
\text{conditional spread trace}
=
\tau^{-1}\operatorname{tr}(\sigma^2I).
\]
Because \(B\) is Frobenius-normalized and \(\Sigma=I\), the true coherent
trace is \(\beta^2\), while the true spread trace is \(d\sigma^2\). Across
all twenty configurations, the maximum relative error in
\(2\operatorname{tr}(\bar G_\Delta)\) is \(0.65\%\), and the maximum
relative error in the coherent trace, excluding zero-denominator cases, is
\(0.70\%\). The identity case \((\beta,\sigma)=(0,0)\) gives zero transport.

This closed-form evaluation also avoids a finite-bandwidth smoothing artifact, as
when \(A=I\) and \(\sigma=0\), the true finite-lag transport is zero, while a
source-neighborhood smoother can still report nonzero local displacement by
averaging successors attached to nearby but distinct source states.

For the circulation calibration, we use a two-dimensional system
\[
A=\alpha I+\gamma J,
\qquad
J=
\begin{pmatrix}
0 & -1\\
1 & 0
\end{pmatrix},
\]
with fixed \(\alpha=0.85\) and \(\sigma=0.05\), sweeping
\(\gamma\in\{0,0.125,0.25,\ldots,1.0\}\). The coordinate circulation
\(\|\widehat{\mathcal W}_\Delta\|_F\) grows approximately linearly with
\(\gamma\), from \(0.0002\) at \(\gamma=0\) to \(1.191\) at
\(\gamma=1\), matching Theorem~\ref{thm:lg_closed_form}.

\begin{figure}[h]
\centering
\includegraphics[width=0.65\linewidth]{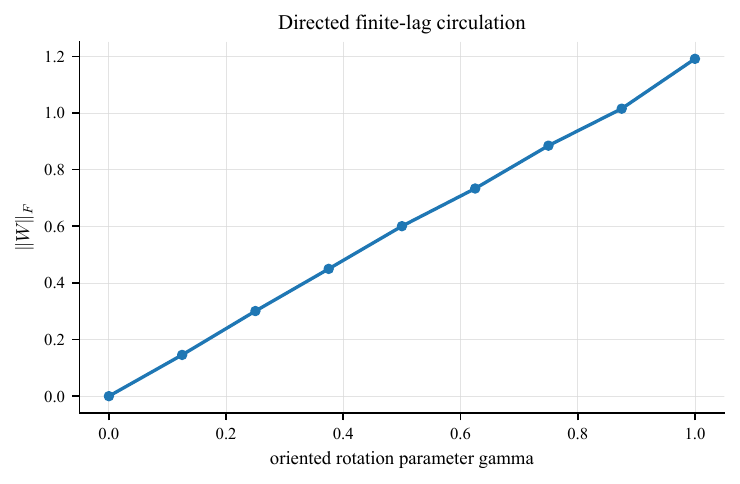}
\caption{Controlled circulation sweep. The Frobenius norm of coordinate
circulation increases with the oriented rotation parameter \(\gamma\).}
\label{fig:app_circulation}
\end{figure}

\subsection{Affine covariance experiment}
\label{app:covariance_experiments}

The affine covariance experiment uses a length-\(6\) cyclic-shift trajectory
cloud in \(d=12\), with \(32\) trials and \(10\) time steps per trial. The
empirical observables are computed at the dense median-heuristic bandwidth.

We run two versions. In the push-forward version, the dense source-smoothing
weights are computed on the base cloud and then held fixed while the paired
coordinates are transformed as
\[
(X,Y)\mapsto(AX+b,AY+b).
\]
This is the empirical analogue of pushing forward the conditional law in
Theorem~\ref{thm:covariance}. In the re-kernelized version, the Euclidean
Gaussian kernel is rebuilt after transforming the source coordinates. Under
anisotropic transformations this changes the empirical conditional law, so
the resulting values test estimator sensitivity rather than tensorial
covariance.

\begin{table}[h]
\caption{Affine covariance and metric dependence. Trace columns are
normalized by the base value. Push-forward rows hold the conditional weights
fixed and test Theorem~\ref{thm:covariance}; metric correction restores the
base scalar summaries. Re-kernelized anisotropic rows rebuild the Euclidean
Gaussian kernel after transformation and therefore test estimator
sensitivity.}
\label{tab:app_covariance}
\centering
\small
\begin{tabular}{lcccc}
\toprule
case & \(\operatorname{tr}(\bar G_\Delta)\)
& spread trace & coherent trace & \(F_\Delta^\rho\) \\
\midrule
base & \(1.000\) & \(1.000\) & \(1.000\) & \(0.523\) \\
orthogonal, \(M=I\) & \(1.000\) & \(1.000\) & \(1.000\) & \(0.523\) \\
scalar \(3I\), \(M=I\) & \(9.000\) & \(9.000\) & \(9.000\) & \(0.523\) \\
scalar \(3I\), \(M'=I/9\) & \(1.000\) & \(1.000\) & \(1.000\) & \(0.523\) \\
anisotropic re-kernelized, \(M=I\)
& \(1.790\) & \(1.777\) & \(1.802\) & \(0.527\) \\
anisotropic re-kernelized, \(M'=A^{-\top}A^{-1}\)
& \(1.002\) & \(0.999\) & \(1.004\) & \(0.525\) \\
anisotropic push-forward, \(M'=A^{-\top}A^{-1}\)
& \(1.000\) & \(1.000\) & \(1.000\) & \(0.523\) \\
\bottomrule
\end{tabular}
\end{table}

Orthogonal transformations preserve Euclidean trace summaries. Scalar
dilation rescales traces by \(3^2\), and the transformed metric restores the
base values. In the anisotropic push-forward case, metric-corrected traces
match the base values to numerical precision. In the re-kernelized
anisotropic case, the corrected values remain close but not identical because
the Gaussian source-smoothing operator has changed.

\subsection{Dense stability experiment}
\label{app:dense_stability_experiment}

The dense stability experiment perturbs a fixed normalized cyclic-shift
trajectory cloud by additive Gaussian noise of scale
\[
\sigma_p\in\{0,10^{-4},3\cdot10^{-4},10^{-3},3\cdot10^{-3},10^{-2},
3\cdot10^{-2}\}.
\]
The bandwidth is fixed to the base-cloud median-heuristic value. For each
scale we average over eight perturbations and measure absolute changes in
transport scale, coherent displacement trace, circulation norm, and dominant
imaginary eigenvalue. Log-log slopes over the nonzero perturbation scales are
\(1.16\), \(1.13\), \(1.07\), and \(1.01\), respectively, consistent with
the Lipschitz behavior of Theorem~\ref{thm:stability}.

\begin{table}[h]
\caption{Dense stability perturbation test. Entries are mean absolute changes
over eight perturbations. The near-linear scaling matches the fixed-bandwidth
stability theorem.}
\label{tab:app_dense_stability}
\centering
\small
\begin{tabular}{lcccc}
\toprule
\(\sigma_p\) &
\(\Delta\operatorname{tr}(\bar G_\Delta)\) &
\(\Delta\) coherent trace &
\(\Delta\|\mathcal W_\Delta\|_F\) &
\(\Delta\omega_{\max}\) \\
\midrule
\(10^{-4}\) & \(1.1{\times}10^{-5}\) & \(1.4{\times}10^{-5}\)
& \(7.9{\times}10^{-7}\) & \(7.2{\times}10^{-7}\) \\
\(10^{-3}\) & \(1.1{\times}10^{-4}\) & \(1.1{\times}10^{-4}\)
& \(1.3{\times}10^{-5}\) & \(1.0{\times}10^{-5}\) \\
\(10^{-2}\) & \(1.4{\times}10^{-3}\) & \(1.6{\times}10^{-3}\)
& \(1.0{\times}10^{-4}\) & \(8.2{\times}10^{-5}\) \\
\(3{\times}10^{-2}\) & \(1.1{\times}10^{-2}\) & \(1.0{\times}10^{-2}\)
& \(4.6{\times}10^{-4}\) & \(2.6{\times}10^{-4}\) \\
\bottomrule
\end{tabular}
\end{table}

\subsection{Repeat-copy task and training protocol}
\label{app:repeat_copy_task}

All recurrent experiments use repeat-copy with feature dimension \(F=4\).
Each sequence has a pattern phase, a delimiter, an optional blank delay, and
a recall phase. For copy length \(L\), a pattern
\[
P\in\{-1,+1\}^{L\times F}
\]
is sampled uniformly. Inputs present the pattern for \(L\) steps, then a
delimiter cue, then blank inputs. Targets are zero until the recall phase and
then reproduce the pattern.

The five seed performance matched run and the dense sensitivity sweeps use
the standard repeat-copy configuration with \(L=10\). The capacity-control
and memory-horizon experiments use \(L=8\), as specified below. In all cases
the geometry is computed on validation hidden trajectories after restoring the
best validation-recall checkpoint.

We use three architectures. Elman is a tanh recurrent network
(\texttt{nn.RNN}) with a linear readout. GRU is a standard
\texttt{nn.GRU} with a linear readout. LSTM is a one-layer LSTM unrolled
explicitly so that both hidden state \(h_t\) and cell state \(c_t\) are
available as trajectories. For LSTM we report \(h_t\), \(c_t\), and the
concatenation \((h_t,c_t)\) in the appendix.

Training uses a recall-window-weighted mean-squared loss:
\[
\mathcal L
=
\frac{1}{NTF}
\sum_{n,t,f}
w_t\bigl(\hat y_{n,t,f}-y_{n,t,f}\bigr)^2,
\qquad
w_t=
\begin{cases}
\lambda_{\rm recall}, & t\in \text{recall window},\\
1, & \text{otherwise},
\end{cases}
\]
with \(\lambda_{\rm recall}=5.0\). Models are optimized with Adam. The
five-seed performance-matched run uses hidden dimension \(64\), dense mode,
center-RMS normalization, and architecture-specific optimization settings.
Elman uses learning rate \(10^{-3}\) and \(120\) requested epochs. GRU and
LSTM use learning rate \(3\cdot10^{-3}\) and \(240\) requested epochs.

The capacity-control experiment uses \(2048\) training sequences, \(256\)
validation sequences, batch size \(64\), patience \(40\), minimum \(80\)
epochs, and copy length \(L=8\). Elman uses \(120\) requested epochs, while
GRU and LSTM use \(240\). The memory-horizon and phase-profile experiments
use \(2048\) training sequences, \(192\) validation sequences, patience
\(60\), minimum \(100\) epochs, copy length \(L=8\), and requested epoch
budgets \(180\) for Elman and \(320\) for GRU/LSTM. The phase-profile
experiment fixes delay \(D=4\) and computes separate phase-local operators on
write, cue, delay, and recall source--successor pairs.

\subsection{Operator estimation and baselines}
\label{app:operator_estimation_summary}

For trained recurrent networks, validation hidden trajectories are pooled
across sequences to form source--successor pairs. In the five-seed
performance-matched and sensitivity runs, the pooled source set contains
\(4864\) valid source nodes per run. The dense \(n\times n\) Gaussian kernel
matrix is therefore manageable in memory. Local moments are computed in
chunks to avoid materializing the full \((\text{chunk},n,d)\) displacement
tensor.

The default bandwidth is
\[
\varepsilon_{\rm med}
=
\frac14\operatorname{median}_{i<j}\|x_i-x_j\|^2,
\]
estimated from at most \(2000\) source states when the full source cloud is
larger. The \(k\)-NN approximation of Appendix~\ref{app:knn} is reported
only in sensitivity studies.

For comparison, we compute several standard summaries on the same normalized
hidden trajectories. Static effective rank is
\[
\exp\!\left(-\sum_i p_i\log p_i\right),
\qquad
p_i=\lambda_i/\sum_j\lambda_j,
\]
where \(\lambda_i\) are covariance eigenvalues. Dynamic mode decomposition
fits \(Y\approx XA^\top\) by ridge regression with regularization \(10^{-4}\)
and reports pooled validation-pair prediction \(R^2\). TICA reports
eigenvalues of the symmetrized lagged covariance operator, and VAMP reports
singular values of the whitened lagged cross-covariance, both with ridge
regularization \(10^{-4}\). We also report relative lagged-covariance skew,
\[
\|C_{0t}-C_{0t}^\top\|_F/\|C_{0t}\|_F.
\]

\subsection{Resolution and \(k\)-NN sensitivity}
\label{app:resolution_full}
\label{app:knn_sensitivity}

The dense bandwidth sweep covers
\[
\varepsilon/\varepsilon_{\rm med}
\in
\{0.1,0.2,0.3,0.5,1.0,2.0\}
\]
on hidden states from the performance-matched repeat-copy case study.
Table~\ref{tab:app_bandwidth} reports coherent displacement fraction at
\(\Delta=1\), center-RMS normalization, averaged over three seeds for the
dense matched sensitivity run.

\begin{table}[h]
\caption{Coherent displacement fraction versus dense bandwidth scale
\(\varepsilon/\varepsilon_{\rm med}\), \(\Delta=1\), center-RMS
normalization. The fraction is generally larger at narrower bandwidths,
showing that it is a resolution-dependent scalar summary.}
\label{tab:app_bandwidth}
\centering
\small
\begin{tabular}{lcccccc}
\toprule
state & \(0.1\) & \(0.2\) & \(0.3\) & \(0.5\) & \(1.0\) & \(2.0\) \\
\midrule
Elman \(h\) & \(0.616\) & \(0.535\) & \(0.519\) & \(0.510\) & \(0.503\) & \(0.500\) \\
GRU \(h\) & \(0.528\) & \(0.472\) & \(0.453\) & \(0.445\) & \(0.454\) & \(0.469\) \\
LSTM \(h\) & \(0.581\) & \(0.508\) & \(0.522\) & \(0.520\) & \(0.502\) & \(0.493\) \\
LSTM \(c\) & \(0.548\) & \(0.485\) & \(0.488\) & \(0.489\) & \(0.486\) & \(0.486\) \\
LSTM \((h,c)\) & \(0.563\) & \(0.495\) & \(0.501\) & \(0.501\) & \(0.492\) & \(0.488\) \\
\bottomrule
\end{tabular}
\end{table}

The effective-neighborhood diagnostics confirm that the smallest dense
bandwidth is not a self-loop artifact. At
\(\varepsilon/\varepsilon_{\rm med}=0.1\), average self-mass over \(h\)-state
runs is about \(0.09\) and the entropy-effective neighborhood size is about
\(8.3\times10^2\). At the median bandwidth, self-mass is near zero and the
effective neighborhood size is about \(4.6\times10^3\).

Whitening changes absolute trace scale and shifts coherent fractions. This is
consistent with Theorem~\ref{thm:covariance}. Scalar trace summaries depend
on the metric used to contract the transport tensor. We report center-RMS
Euclidean summaries in the main text and use whitening as a sensitivity
analysis.

At \(k=16\), the sparse approximation increases
\(F_\Delta^\rho\) from roughly \(0.46\)--\(0.50\) for the dense estimator to
roughly \(0.68\)--\(0.70\) across states, and increases circulation norms by
about an order of magnitude. This confirms that hard sparse conditioning
changes the local resolution, so we therefore use it only as a sensitivity
analysis.

\subsection{Performance matched repeat-copy details}
\label{app:repeatcopy_full}

\begin{table}[h]
\caption{Full state comparison for the performance-matched repeat-copy
experiment. Values are means over five seeds. LSTM cell states have lower
coherent displacement fraction than LSTM hidden outputs under this metric.}
\label{tab:app_repeatcopy_states}
\centering
\small
\begin{tabular}{lcccccc}
\toprule
arch/state & recall MSE & sign acc.
& \(\operatorname{tr}(\bar G_\Delta)\)
& spread & coherent & \(F_\Delta^\rho\) \\
\midrule
Elman \(h\) & \(0.0009\) & \(1.0000\)
& \(1.008\) & \(1.002\) & \(1.015\) & \(0.503\) \\
GRU \(h\) & \(0.0006\) & \(1.0000\)
& \(0.867\) & \(0.929\) & \(0.805\) & \(0.464\) \\
LSTM \(h\) & \(0.0010\) & \(0.9999\)
& \(0.881\) & \(0.891\) & \(0.872\) & \(0.495\) \\
LSTM \(c\) & \(0.0010\) & \(0.9999\)
& \(0.867\) & \(0.916\) & \(0.818\) & \(0.472\) \\
LSTM \((h,c)\) & \(0.0010\) & \(0.9999\)
& \(0.876\) & \(0.911\) & \(0.840\) & \(0.480\) \\
\bottomrule
\end{tabular}
\end{table}

The main text reports \(h_t\) for architectural comparison. The additional
LSTM rows show that \(h_t\), \(c_t\), and \((h_t,c_t)\) have distinct
finite-lag geometry under the same estimator.

\subsection{Capacity controls}
\label{app:capacity_controls}

The capacity-control experiment uses copy length \(L=8\) and sweeps
\[
\text{Elman-64},\ \text{Elman-128},\ \text{GRU-36},\ \text{GRU-64},\
\text{LSTM-32},\ \text{LSTM-64}.
\]
GRU-36 and LSTM-32 are approximate parameter-count controls for Elman-64.
All configurations achieve near-perfect recall sign accuracy, although the
smaller gated models have slightly larger recall MSE.

\begin{table}[h]
\caption{Capacity controls, three seeds. Within this grid, the transport-scale
gap between Elman and gated networks is not explained solely by hidden width
or parameter count.}
\label{tab:app_capacity}
\centering
\small
\begin{tabular}{lcccccc}
\toprule
config & params & recall MSE & sign acc.
& \(\operatorname{tr}(\bar G_\Delta)\)
& coherent trace & static rank \\
\midrule
Elman-64  & \(4{,}804\)  & \(0.0007\) & \(1.000\)
& \(1.013\pm0.001\) & \(1.018\pm0.002\) & \(22.6\pm0.4\) \\
Elman-128 & \(17{,}796\) & \(0.0005\) & \(1.000\)
& \(1.014\pm0.001\) & \(1.024\pm0.003\) & \(27.0\pm0.3\) \\
GRU-36    & \(4{,}792\)  & \(0.0036\) & \(1.000\)
& \(0.858\pm0.067\) & \(0.798\pm0.150\) & \(11.0\pm3.6\) \\
GRU-64    & \(13{,}892\) & \(0.0001\) & \(1.000\)
& \(0.892\pm0.013\) & \(0.857\pm0.029\) & \(13.5\pm0.4\) \\
LSTM-32   & \(5{,}124\)  & \(0.0029\) & \(1.000\)
& \(0.886\pm0.005\) & \(0.892\pm0.019\) & \(11.1\pm0.9\) \\
LSTM-64   & \(18{,}436\) & \(0.0002\) & \(1.000\)
& \(0.879\pm0.005\) & \(0.844\pm0.006\) & \(12.9\pm0.2\) \\
\bottomrule
\end{tabular}
\end{table}

The coherent displacement fraction is not a robust family separator in this
sweep, as LSTM-32 has a fraction close to Elman despite lower transport scale.
The more stable finite-lag distinctions are total transport scale and
coherent displacement trace.

\subsection{Phase-resolved repeat-copy profile}
\label{app:phase_profile}

The phase-profile experiment localizes where finite-lag transport differences
arise within a solved recurrent computation. We use repeat-copy with delay
\(D=4\), copy length \(L=8\), hidden dimension \(64\), three seeds, dense
mode, center-RMS normalization, lag \(\Delta=1\), and median-heuristic
bandwidth. All \(h_t\) runs solve the task with recall sign accuracy
\(1.0\).

We split source--successor pairs into task phases. The write phase contains
source times receiving pattern inputs, the cue phase contains the delimiter
transition, the delay phase contains blank-memory transitions, and the recall
phase contains output transitions. We apply one global center-RMS
normalization to the validation trajectory cloud, then build a separate dense
phase-local operator for each phase. The cue phase has only one source time,
so it is included as a transition diagnostic rather than a primary
architecture-level conclusion.

\begin{table}[h]
\caption{Phase-resolved finite-lag geometry for repeat-copy at delay \(D=4\),
state \(h_t\), three seeds. Values are means over solved runs. Separate
phase-local dense operators are built after global center-RMS normalization.}
\label{tab:app_phase_profile}
\centering
\small
\begin{tabular}{llccc}
\toprule
arch & phase & \(\operatorname{tr}(\bar G_\Delta)\)
& coherent trace & static rank \\
\midrule
Elman & write  & \(0.841\) & \(0.859\) & \(8.7\) \\
Elman & cue    & \(0.445\) & \(0.484\) & \(21.6\) \\
Elman & delay  & \(0.583\) & \(0.495\) & \(27.4\) \\
Elman & recall & \(1.343\) & \(1.275\) & \(22.9\) \\
GRU   & write  & \(0.303\) & \(0.227\) & \(6.2\) \\
GRU   & cue    & \(0.213\) & \(0.298\) & \(19.0\) \\
GRU   & delay  & \(0.393\) & \(0.410\) & \(10.0\) \\
GRU   & recall & \(0.602\) & \(0.623\) & \(11.3\) \\
LSTM  & write  & \(0.298\) & \(0.228\) & \(5.4\) \\
LSTM  & cue    & \(0.055\) & \(0.096\) & \(12.2\) \\
LSTM  & delay  & \(0.300\) & \(0.323\) & \(3.6\) \\
LSTM  & recall & \(0.847\) & \(0.840\) & \(13.8\) \\
\bottomrule
\end{tabular}
\end{table}

The global transport-scale differences are concentrated in write and recall
phases. Elman has substantially larger transport and coherent displacement
during write and recall, while GRU and LSTM maintain lower transport through
write, cue, and delay and increase at recall. Static rank follows a different
pattern: for example, Elman has high static rank during delay, while its
largest finite-lag transport occurs during recall. This illustrates the
distinction between snapshot geometry and finite-lag transport geometry.

\subsection{Memory-horizon experiment}
\label{app:memory_horizon}

The memory-horizon experiment inserts a blank delay
\[
D\in\{0,2,4,6,8\}
\]
between delimiter and recall. We use copy length \(L=8\), hidden dimension
\(64\), three seeds, dense mode, center-RMS normalization, lag \(\Delta=1\),
and median-heuristic bandwidth. Geometry is interpreted only on solved rows,
defined by validation recall sign accuracy at least \(0.9\). GRU and LSTM
solve all delays across all three seeds. Elman solves all seeds through
\(D=6\) and two of three seeds at \(D=8\).

\begin{table}[h]
\caption{Recall sign accuracy in the small-delay memory-horizon experiment,
averaged over three seeds. Elman has one unsolved seed at \(D=8\), geometry
is interpreted on solved rows.}
\label{tab:app_memory_performance}
\centering
\small
\begin{tabular}{lccccc}
\toprule
architecture & \(D=0\) & \(D=2\) & \(D=4\) & \(D=6\) & \(D=8\) \\
\midrule
Elman & \(1.000\) & \(1.000\) & \(1.000\) & \(0.997\) & \(0.950\) \\
GRU   & \(1.000\) & \(1.000\) & \(1.000\) & \(1.000\) & \(1.000\) \\
LSTM  & \(1.000\) & \(1.000\) & \(1.000\) & \(1.000\) & \(1.000\) \\
\bottomrule
\end{tabular}
\end{table}

\begin{table}[h]
\caption{Memory-horizon geometry for \(h_t\) on solved rows. GRU and LSTM
show decreasing transport scale and coherent displacement trace as delay
increases. Elman remains higher-transport at short delays but is less
consistently solved at \(D=8\).}
\label{tab:app_memory_geometry}
\centering
\small
\begin{tabular}{lccccc}
\toprule
arch & delay & \(\operatorname{tr}(\bar G_\Delta)\)
& coherent trace & \(F_\Delta^\rho\) & static rank \\
\midrule
Elman & \(0\) & \(1.012\) & \(1.016\) & \(0.502\) & \(22.6\) \\
Elman & \(2\) & \(1.012\) & \(1.014\) & \(0.501\) & \(26.0\) \\
Elman & \(4\) & \(0.950\) & \(0.941\) & \(0.496\) & \(31.0\) \\
Elman & \(6\) & \(0.875\) & \(0.911\) & \(0.521\) & \(28.2\) \\
Elman & \(8\) & \(0.921\) & \(0.946\) & \(0.514\) & \(23.7\) \\
GRU   & \(0\) & \(0.875\) & \(0.826\) & \(0.471\) & \(12.3\) \\
GRU   & \(2\) & \(0.822\) & \(0.727\) & \(0.442\) & \(11.1\) \\
GRU   & \(4\) & \(0.749\) & \(0.606\) & \(0.405\) & \(8.3\) \\
GRU   & \(6\) & \(0.727\) & \(0.574\) & \(0.395\) & \(8.0\) \\
GRU   & \(8\) & \(0.716\) & \(0.563\) & \(0.393\) & \(7.6\) \\
LSTM  & \(0\) & \(0.880\) & \(0.842\) & \(0.478\) & \(13.0\) \\
LSTM  & \(2\) & \(0.845\) & \(0.792\) & \(0.469\) & \(13.9\) \\
LSTM  & \(4\) & \(0.786\) & \(0.665\) & \(0.423\) & \(10.1\) \\
LSTM  & \(6\) & \(0.764\) & \(0.634\) & \(0.415\) & \(9.2\) \\
LSTM  & \(8\) & \(0.728\) & \(0.640\) & \(0.439\) & \(10.0\) \\
\bottomrule
\end{tabular}
\end{table}

\subsection{Compute and reproducibility}
\label{app:compute}

The dense estimator scales as \(O(n^2d)\) on \(n\) pooled source nodes. For
the repeat-copy experiments, dense kernel matrices contain roughly
\(5\times10^3\) source nodes and are feasible on a single workstation.
Synthetic experiments run in seconds to minutes. The main repeat-copy and
sensitivity experiments use five and three seeds, respectively, while
capacity controls, memory-horizon experiments, and phase-profile experiments
use three seeds.

\newpage

\end{document}